\newcommand\commentOut[1]{}
\renewcommand\vec[1]{\mathbf{#1}}
\newcommand\scr[1]{\ensuremath\mathcal{#1}}
\newcommand\tupleof[1]{\ensuremath\langle #1 \rangle}
\newcommand\reals{\mathbb{R}}
\newcommand\D{\scr{D}}
\newcommand\setof[1]{\ensuremath\left\{ #1 \right\}}
\newcommand\vb{\vec{b}}
\newcommand\vx{\vec{x}}
\newcommand\vy{\vec{y}}
\newcommand\cond{\mathsf{condition}}
\newcommand\player[1]{{\cal P}_{#1}}
\newcommand\Ac[1]{A^{(#1)}}
\newcommand\Beliefs{\mathcal{B}}
\definecolor{Gray}{gray}{0.85}
\definecolor{LightCyan}{rgb}{0.88,1,1}
\newcolumntype{b}{>{\columncolor{LightCyan}}c}
\newtheorem{assumption}{Assumption}
\newcommand{\set}[1]{\left\{ #1 \right\}}
\newcommand\oneMat{\mathbbm{1}}
\newif\ifextendedversion\extendedversiontrue
\title{Anticipating Oblivious Opponents in Stochastic Games}
\author{Shadi Tasdighi Kalat,  
Sriram Sankaranarayanan
\and
Ashutosh Trivedi
}
\institute{
University of Colorado Boulder, USA \\ 
Email: \texttt{first.lastname @colorado.edu}\\}
\authorrunning{Kalat et al.}
\begin{document}
\maketitle              

\begin{abstract}
We present an approach for systematically anticipating the actions and policies employed by  \emph{oblivious} environments in concurrent stochastic games, while maximizing a reward function.
Our main contribution lies in the synthesis of a finite \emph{information state machine} whose alphabet ranges over the actions of the environment. Each state of the automaton is mapped to a belief state about the policy used by the environment. We introduce a notion of  consistency that guarantees that the belief states tracked by our automaton stays within a fixed distance of the precise belief state obtained by knowledge of the full history. We provide methods for checking consistency of an automaton and a synthesis approach which upon successful termination yields such a machine. We show how the information state machine  yields an MDP that serves as the starting point for computing optimal policies for maximizing a reward function defined over plays. 
We  present an experimental evaluation over benchmark examples including human activity data for tasks such as cataract surgery and furniture assembly, wherein our approach successfully anticipates  the policies and actions of the environment in order to maximize the reward.
\end{abstract}

%
%
\section{Introduction}
\emph{Concurrent stochastic games}~\cite{filar2012competitive,de2000control,de2001control,chatterjee2012survey,kochenderfer2015decision,shapley1953stochastic} offer a natural abstraction for modeling conservative decision-making in the presence of multiple agents in a shared and uncertain environment. 
In this scenario, the objective of the \emph{Ego} agent---player $\player1$---is to maximize their desired outcome irrespective of the decisions taken by other agents, represented here as a single agent that we term player $\player2$~\cite{bewley1978stochastic}.  In a
\emph{zero-sum game}, 
the objective of player $\player1$ is deemed to be in direct conflict with player $\player2$. The opposite scenario assumes \emph{cooperation}, wherein $\player2$'s actions are aimed to maximize the reward for $\player1$. In this paper, we study another ``extreme'', wherein $\player2$ is assumed to be 
\emph{oblivious}. Their actions are chosen from a predefined set of policies or objectives that are not affected by the actions of $\player1$.  
We will show that in such a setting, player $\player1$ needs to \emph{anticipate} $\player2$'s moves to maximize their own reward.

Consider a game of Rock-paper-scissors (RPS) against an oblivious adversary. 
Recall that at each turn, players $\player1$ and $\player2$ simultaneously reveal their choice with a show of hands, and both players receive values (see, Figure~\ref{fig:rps-rewards-and-policies}) based on straightforward circular-dominance rules (rock defeats scissors, scissors defeats paper, paper defeats rock). 
The repeated, oblivious RPS can be modeled as a single state concurrent stochastic game, where the goal of player $\player1$ is to maximize the sequence of rewards over a given, potentially infinite, horizon.
Considering the conventional interpretation of an adversarial opponent, the expected value of the game remains at $0$.  

The oblivious RPS ``game'' is illustrated in Figure~\ref{fig:rps-rewards-and-policies}, where the set of policies ($\pi_1, \pi_2, \pi_3$, and $\pi_4$) used by player $\player2$ is presented in the table to the right.
In the proposed scenario, we assume the following:
\begin{inparaenum}[(a)]
\item player $\player1$ observes the \emph{past} actions of player $\player2$ but the \emph{current} action of one player is not observable by the other;
\item player $\player2$ is restricted to playing one of the policies $\{\pi_1, \pi_2, \pi_3, \pi_4\}$ but this choice is \emph{not observable} by $\player1$; and 
\item \emph{policy change:} at each step, player $\player2$ may shift from the current policy to a new one. This shift is modeled by a Markov chain wherein each state of the chain is labeled by a policy. 
\end{inparaenum}
From player $\player1$'s perspective, although the policies of player $\player2$ are known, they are unobservable. Consequently, the problem can be framed as a partially observable MDP (POMDP). 
This POMDP is the result of merging the original arena with 
player $\player2$'s policy set.
Framing this as a POMDP permits the use of standard POMDP solution approaches~\cite{cassandra1998survey}. However, ``exact'' POMDP planning is undecidable~\cite{madani1999undecidability}. Furthermore, translating from oblivious games to POMDPs obscures the specialized structure of the problem.

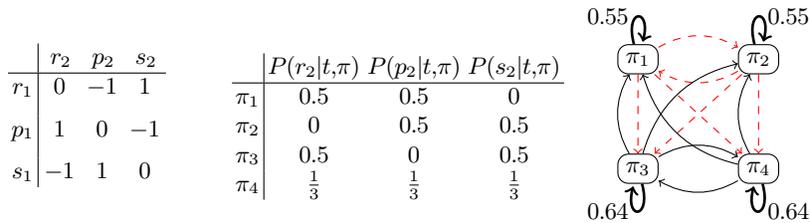
\begin{figure}[t]
\begin{tabular}{ccc}
\begin{minipage}{0.3\textwidth}
\[ \begin{array}{c| ccc}
     & \;\;\; r_2 \;\;\; & \;\;\; p_2 \;\;\; &  \;\;\; s_2 \;\;\; \\ 
     \hline 
r_1 &   0  & -1 &  1 \\[5pt]
p_1 &  1  & 0 & -1 \\[5pt]
s_1 & -1 & 1 & 0  \\
\end{array} \] 
\end{minipage} & 
\begin{minipage}{0.3\textwidth} 
\[ \begin{array}{c | c c c }
& P(r_2| t, \pi) & P(p_2 | t, \pi) & P(s_2 | t,\pi) \\ 
\hline 
\pi_1 & 0.5 & 0.5 & 0 \\ 
\pi_2 & 0 & 0.5 & 0.5 \\ 
\pi_3 & 0.5 & 0 & 0.5 \\ 
\pi_4 & \frac{1}{3} & \frac{1}{3} & \frac{1}{3}\\ 
\end{array} \] \end{minipage} & 
\begin{minipage}{0.4\textwidth} 
\begin{center}
\begin{tikzpicture}
\matrix[every node/.style={rectangle, rounded corners, draw=black}, row sep=30pt, column sep=30pt]{
\node(n1){$\pi_1$}; & \node(n2){$\pi_2$}; \\ 
\node(n3){$\pi_3$}; & \node(n4){$\pi_4$}; \\ 
};
\path[->, line width=1pt] (n1) edge[loop above] node[left] {0.55} (n1)
(n2) edge[loop above] node[right] {0.55} (n2)
(n3) edge[loop below] node[left] {0.64} (n3)
(n4) edge[loop below] node[right] {0.64} (n4);
\path[->, thin, dashed, red] (n1) edge[bend left] (n2)
(n1) edge (n3)
(n1) edge (n4)
(n2) edge[bend left] (n1)
(n2) edge (n3)
(n2) edge (n4);
\path[->, thin] (n3) edge[bend left] (n1)
(n3) edge[bend left] (n2)
(n4) edge[bend left](n1)
(n4) edge[bend left] (n2)
(n3) edge[bend left] (n4)
(n4) edge[bend left] (n3);
\end{tikzpicture}
\end{center}
\end{minipage}\\

\end{tabular}
\caption{Rock-paper-scissors (RPS) game arena. Here actions $r_i$, $p_i$, and $s_i$ correspond to the choices of ``rock'', ``paper'' and ``scissors'' by player $\player{i}$; (\textbf{left}) Reward table; (\textbf{mid}) player $\player2$ policies; and (\textbf{right}) Markov chain modeling policy change for $\player{2}$. The dashed red edges have probability $0.15$ whereas the solid edges have probability $0.12$.}
\vspace{-1.5em}
\label{fig:rps-rewards-and-policies}
\end{figure}

\paragraph{Action/Tool Anticipation in Human-Robot Cooperative Tasks:} In  scenarios involving humans working with autonomous agents, the ability  to ``guess'' the intent of the human can be critical in ensuring the success of the overall task.
Consider a scenario where $\player2$ is engaged in a complex task involving a sequence of steps such as assembling a piece of furniture~\cite{Ben-Shabat+Others/2021/WACV} or performing a cataract surgery~\cite{Hassan+Others/2019/CATARACTS}. 
\begin{figure}[t]
\begin{center}
\begin{tikzpicture}[stn/.style={circle, draw=black, inner sep=1pt}]
\begin{scope}[scale=0.85, xshift=-1cm]
\node[stn, fill=green!20](t0) at (1,4.2) {\scriptsize $t_0$};
\node[stn](t1) at (0.3,3.5) {\scriptsize $t_1$};
\node[stn](t2) at (1.7,3.5) {\scriptsize $t_2$};
\node[stn](t3) at (0.3, 2.7) {\scriptsize $t_3$};
\node[stn](t4) at (1.7, 2.7) {\scriptsize $t_4$};
\node[stn](t5) at (0.3, 1.9) {\scriptsize $t_5$};
\node[stn](t6) at (1.7, 1.9) {\scriptsize $t_6$};
\node[stn](t7) at (1, 1.2){\scriptsize $t_7$};
\node[stn, fill=red!20](t8) at (1, 0.4){\scriptsize $t_8$};
\path[->, line width=1.5pt] (t0) edge node[above]{\scriptsize $a_1~~~$} (t1)
(t0) edge node[above]{\scriptsize $~~~a_4$} (t2)
(t1) edge node[left]{\scriptsize $a_2$} (t3)
(t3) edge node[left]{\scriptsize $a_4$} (t5)
(t1) edge node[below]{\scriptsize $a_4$} (t4)
(t5) edge node[left]{\scriptsize $~~a_3$} (t7)
(t7) edge node[right]{\scriptsize $a_2$} (t8)
(t2) edge node[right]{\scriptsize $a_4$} (t4)
(t4) edge node[right]{\scriptsize $a_2$} (t6)
(t6) edge node[below]{\scriptsize $~~~a_5$} (t7)
(t6) edge node[below](a3){\scriptsize $a_3$} (t3);
\end{scope}
\begin{scope}[xshift=5cm, yshift=1.8cm]
\node at (0, 0) {
\begin{minipage}{0.6\textwidth}
\footnotesize 
\begin{tabular}{ll}
\hline 
$\pi_1: $ & $ t_0 \mapsto a_1, t_1 \mapsto \{a_2: 0.9, a_4: 0.1\}$ \\ 
& $t_6 \mapsto \{ a_3: 0.5, a_5: 0.5 \}, \cdots$ \\[5pt]
$\pi_2:$ & $t_0 \mapsto a_4, t_6 \mapsto \{ a_3: 0.5, a_5: 0.5\}, \cdots $ \\[5pt]
$\pi_3:$ & $t_0 \mapsto \{a_1: 0.5, a_4:0.5\}, t_6 \mapsto \{a_3: 0.5, a_5: 0.5 \}$, \\ 
& $t_1 \mapsto a_2,\ \ldots $ \\[5pt]
$\pi_4:$ & $t_0 \mapsto a_1, t_1 \mapsto \{ a_2: 0.5, a_4: 0.5\}$,\\
& $t_6 \mapsto \{a_3: 0.5, a_5: 0.5\}, \ldots $\\
\hline 
\end{tabular}
\end{minipage}
};
\end{scope}

\end{tikzpicture}

\end{center}
\caption{States of a furniture assembly task and policies for task completion. }\label{fig:furniture-assembly-task-graph}
\vspace*{-0.4cm}
\end{figure}
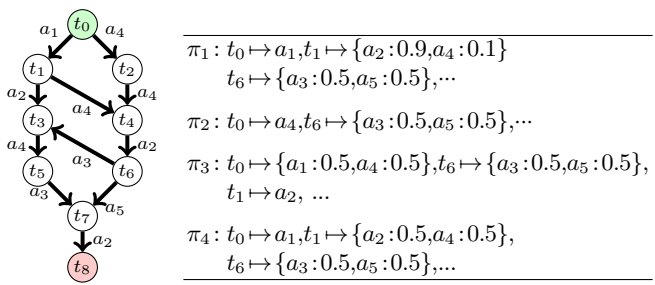
 
 The task execution is captured by a task graph whose nodes model different states encountered during task execution and  edges are labeled with the tool/action that is needed to move from one stage to another. Fig.~\ref{fig:furniture-assembly-task-graph} shows such a graph: the states $S_1 {=} \{ t_0, \ldots, t_8\}$  represent assembly stages for the corresponding component, while $A = \{ a_1, \ldots, a_5 \}$ represents the actions taken. Multiple edges from the same node represent possible choices that can be made by $\player2$.  
 The policies of $\player2$  dictate the choices made by $\player2$ for each non-terminal state. For some of the states with just one outgoing edge, there is just one choice to be made. However, for states with multiple outgoing edges, the policy dictates the probability distribution of the choice.  The policies allow us to model ``correlations'' in $\player2$'s action:  For instance, policy $\pi_1$ models the rule: $\player2$ chooses tool $a_1$ at state $t_0$ and they will choose  $a_2$ at state $t_1$ with $90\%$ probability.  The goal of $\player1$ is to accurately anticipate $\player2$'s choice of the next tool in order to perform a cooperative action (eg., pre-fetch the tool to help $\player2$ or automatically take steps to protect $\player2$ against a known hazard). We model this using the reward structure: if $\player1$ correctly predicts the next action of $\player2$ they obtain a positive reward. However, failure to do so incurs a negative reward. By assuming a set of policies for $\player2$, our approach moves the prediction problem from one of simply predicting action sequences to first predicting the policy (or the internal logic behind $\player2$'s actions) and then  predicting the action given the policy.  Section~\ref{sec:experimental-evaluation} demonstrates how we can use actual observation data from real-life cataract surgeries and furniture assembly tasks to not just learn the task graph  model but also infer policies. In doing so, our approach can produce policies for $\player1$ that predict the next action with upto $40\%$ accuracy even when there are more than $30$ tools/actions to choose from at each step.

\vspace{0.5em}\noindent\textbf{Contributions.}  In this paper, we introduce the framework of anticipation games  and formalize these games as POMDPs (Section~\ref{sec:preliminaries}). 
\begin{inparaenum}
    \item \emph{ Consistent Information State Machines:} We define the notion of a finite information state machine (ISM) over an alphabet consisting of states and $\player2$ actions (Section~\ref{sec:checking-consistency}). We introduce the concept of $\lambda-$consistency that is similar to an approximation bisimuation relation and show how to check if a given state machine is $\lambda$ consistent using  linear arithmetic SMT solvers. Next, we provide a semi-algorithm that upon success can synthesize such a machine (Section~\ref{sec:consistent-ism-synthesis}).  We provide simple conditions that guarantee the successful termination of our algorithm with a finite state consistent ISM (Section~\ref{sec:completeness-and-robustness}). 
    \item \emph{Policy Synthesis for $\player1$:} Next we show that a composition of a finite state ISM with the game yields a MDP that forms the basis of finding a policy for $\player1$ (Section~\ref{sec:policy-synthesis}). We bound the distances between the transition probabilities and reward functions of the infinite state belief MDP and the finite state approximation. By leveraging a recent  result by Subramanian et al.~\cite{subramanian2022approximate}, we bound the gap between the optimal belief MDP value function and that of our finite approximation.
    \item \emph{Robustness:} 
    In Section~\ref{sec:completeness-and-robustness}, we establish bounds on the performance degradation, if $\player2$ deviates from the assumptions.
   \item  \emph{Empirical Evaluation:} Finally, we present an empirical evaluation of our work against some challenging benchmarks (Section~\ref{sec:experimental-evaluation}). We show that our approach can clearly anticipate the policies and actions of the other player to maximize the overall reward. In particular, we use two datasets -- an IKEA furniture assembly dataset consisting of sequence of actions taken by human assemblers for different furniture models~\cite{Ben-Shabat+Others/2021/WACV} and a sequence of tools used in 25 different cataract surgeries~\cite{Hassan+Others/2019/CATARACTS}. We use an automata learning tool flexfringe~\cite{Verwer+Others/2017/flexfringe} to learn the task model and a simple edge set based clustering to learn policies. We demonstrate how our approach computes policies for $\player1$ that maximize the ability to predict the next tool choice of $\player2$.
\end{inparaenum}

 \commentOut{
The key idea of the paper is to use sequences of observations of $\player2$'s states and actions to infer a \emph{belief state}  over the player's policies. 
This information is stored in the form of a \emph{belief state} given as a probability distribution over $\player2$'s policies. 
While the set of belief states can generally be uncountably infinite, our approach approximates this information by representing the experience as a Moore machine, which we refer to as the \emph{information state machine}. 
To maintain a bound on the deviation between the actual belief state and the one governed by the information state machine—defined via the \emph{total-variation distance}—we introduce the concept of (approximate) consistency of transitions within the information state machines. 
The consistency of transitions can be reduced to checking a formula in linear arithmetic.
Moreover, the existence of consistent information state machine allows us to extend this guarantees to any arbitrary sequence of transitions within the machine. }

 \vspace{0.5em}\noindent\textbf{Related Work.} 
Partially observable stochastic games (POSGs) are a subset of stochastic games where agents have partial information about the state of the environment. 
Within this paradigm, agents are allowed to have conflicting, or similar objectives, reward structures, and strategies~\cite{cassandra1998survey,boutilier1999decision,bernstein2002complexity}. 
Solution techniques developed for POSGs are build upon approaches to solve POMDPs such as value iteration and policy iteration~\cite{bellman1957markovian}. 
Solving finite-horizon POMDP is PSPACE-complete~\cite{papadimitriou1987complexity}, and solving infinite-horizon POMDPs have been shown to be undecidable~\cite{madani2003undecidability}. 
A variety of approximate solution techniques have been introduced for general POMDPs including Point-Based Value iteration~\cite{pineau2003point,shani2007forward,spaan2005perseus,kim2011point,roy2002exponential}, grid-based belief MDP approximations~\cite{hauskrecht2000value}, semi-MDP approximations~\cite{theocharous2003approximate,strauch1966negative} and compressing belief states using features~\cite{horak2019compact}. In addition, methods such as POMCP (Partially Observable Monte Carlo Planning~\cite{silver2010monte,lim2019sparse}), leverage sampling-based approaches to estimate belief states and approximate the value function.  These approaches are not easy to compare to the approach in this paper since our approach is tailored explicitly to POMDPs derived from anticipation games for oblivious adversaries. Our approach is closely related to those that group  belief states together using bisimulation quotients~\cite{castro2009equivalence,castro2009notions,hermanns2014probabilistic}. A key distinction is that the approach presented here is an approximate notion of bisimulation wherein we guarantee that our information state machines track the precise belief state within a distance of $\lambda$ in a suitable norm. Thus, we exploit the special structure of the games studied here and prove that finite approximate bisimulations always exist for suitable choice of the parameters. 

While traditional POMDP solvers often work with the belief space, there have been approaches that leverage historical information to make decisions, either by directly maintaining a history or by approximating it. The complexity of solving this problem grows exponentially with the length of history~\cite{kearns1999approximate}. The results in~\cite{beauquier1995complexity} discuss this issue and address the trade-offs between memory usage and solution quality. To overcome this issue,~\cite{kaelbling1998planning}, introduces the concept of finite-memory controllers. In another work,~\cite{mccallum1995instance} investigates an instance-based learning approach for POMDPs, maintaining a set of histories to guide action selection. Similarly,~\cite{holmes2006looping,daswani2013feature} use looping suffix trees to represent the hidden state in deterministic finite POMDPs. This work is later extended to~\cite{meuleau2013solving}, which fixes the size of the policy graph to find the best policy of this size, and~\cite{meuleau2013learning}, that performs stochastic gradient descent on finite-state controller parameters, which guarantees local optimality of the solution. However, note that none of these techniques provide guarantees on the quality of the approximation or the solution so obtained. In this paper, we obtain such guarantees but for the limited case of POMDPs arising from the anticipation games and oblivious adversaries.

Our approach is an instance of the approximate information state introduced by Subramanian et al~\cite{subramanian2022approximate}, as a compression of history which is sufficient to evaluate approximate performance, and predict itself.  Yang et al~\cite{yang2022discrete} specialize this framework to  discrete approximate information states but their work learns the automaton from finite samples by solving an expensive nonlinear optimization problem. In this paper, we assume knowledge of the underlying game and opponent policies to construct a finite state machine that is guaranteed to be an approximation information state generator.

\section{Problem Definition}\label{sec:preliminaries}
A \emph{probability distribution} $d \colon X {\to} [0, 1]$  over a finite set $X$ satisfies $\sum_{s \in X} d(s) = 1$.
Let $\D(X)$ represent the set of all probability distributions over $X$. The distribution $d$ over $X = \setof{x_1, x_2, \cdots, x_m}$ is written $\setof{x_1: p_1, \ldots, x_m: p_m}$ where $p_i = d(x_i)$ for  $i \in [m]$.
For a natural number $n \geq 1$, let $[n]=\{ 1, 2, \ldots, n \}$. 
Bold case letters denote vectors $\vb \in \reals^n$. 
The $i^{th}$ component of $\vb$ is denoted as $b_i$.

A \emph{Markov decision process} (MDP) $\scr{M}$ is a tuple $\tupleof{S, A, P, R}$ where $S$ is a finite set of states, $A$ is a finite set of actions, $P: S \times A \rightarrow \D(S)$ is the probabilistic transition function, and $R: S \times A \rightarrow \reals$ is a scalar valued reward function.
We write  $P(s' | s,a )$  for the probability of state $s'$ if action $a$ is applied to state $s$.
In a two player concurrent game, the set of actions are partitioned between player $\player1$ and $\player2$. 
Transitions of the game are determined by joint actions of both players. 
\begin{definition}[Concurrent Stochastic Game Arena: Syntax]
A concurrent stochastic game arena $\scr{G}$ is a tuple $\tupleof{S, \Ac1, \Ac2, P, R}$ wherein 
$S$ is a finite set of states,
$\Ac1$ and $\Ac2 $ are disjoint sets of actions for players $\player1$ and $\player2$, respectively,
$P: S \times \Ac1 \times \Ac2 \rightarrow \D(S)$ is the joint probabilistic transition function, and
$R: S \times \Ac1 \times \Ac2 \rightarrow \reals$ is a reward function for $\player1$.
\end{definition}
We  assume that player $\player2$ selects their policy from one of $n$ different stochastic policies from the set $\Pi = \{ \pi_1, \ldots, \pi_n \}$, wherein each 
$\pi_i : S	{\rightarrow} \D(\Ac2)$
represents a map from states to probability distributions over actions in $\Ac2$. Let $\pi_i(s,a)$ denote the probability that action $a$ is chosen from state $s$ for policy $\pi_i$. 


\begin{example}\label{ex:rps-example}
Consider the RPS example discussed in the introduction (Figure~\ref{fig:rps-rewards-and-policies}).
The state set is a singleton: $S = \{t\}$. We have three actions each for players $1, 2$: $\Ac1 = \{ r_1, p_1, s_1 \}$ and $\Ac2 = \{ r_2, p_2, s_2 \}$, corresponding to choices of ``rock'', ``paper'' and ``scissors'', respectively. The transition probabilities are simply $P(t | t, a, b)  = 1$ for all $a \in \Ac1, b \in \Ac2$.  The reward  for $\player1$ is the familiar one from the game of rock-paper-scissors, and is shown in Figure~\ref{fig:rps-rewards-and-policies} (left)
$\player2$ plays one of four possible policies shown in the middle table  of Figure~\ref{fig:rps-rewards-and-policies}.
\end{example}
\begin{assumption}[Observation and Obliviousness]
\label{assum:oblivious}
We assume that:
\begin{inparaenum}[(a)]
\item $\player1$  observes the \emph{past} actions of $\player2$ but the \emph{current} action of one player is not observable by the other. 
\item $\player2$ is restricted to playing one of the policies $\set{\pi_1, \ldots, \pi_n}$ but this choice is \emph{not observable} by $\player1$. 
\end{inparaenum}
\end{assumption}

\paragraph{Policy Change Model:} We assume that  $\player2$ can change policies at each step depending on their current policies according to a Markov chain with $n$ states labeled by the corresponding policies $\pi_1, \ldots, \pi_n$.  Let $T$ represent the transition matrix of this Markov chain such that the entry $T_{ij} = P(\pi_j | \pi_i)$ represents the probability of $\player2$ switching their policy to $\pi_j$ given that their current policy is $\pi_i$.  Returning to Example~\ref{ex:rps-example}, the Markov chain for switching between the four policies $\pi_{1}, \ldots, \pi_4$ is shown in Figure~\ref{fig:rps-rewards-and-policies} (right). 

\ifextendedversion 
\noindent A partially observable MDP (POMDP) is a tuple $\tupleof{S, A, P, R, \Omega, O}$ where $\tupleof{S, A, P, R}$ is an MDP, $\Omega$ is a finite set of \emph{observations}, and $O : S \to \Omega$ is (deterministic) observation map.
 The semantics of an OGA under Assumptions~\ref{assum:oblivious} can be given as a POMDP.
\begin{definition}[OGA: Semantics]
    The semantics of an OGA $\scr{G} = \tupleof{S, \Ac1, \Ac2, P, R}$ with player $\player2$ policy set $\set{\pi_1, \ldots, \pi_n}$ and policy change given by a Markov chain with transition matrix $T$ is a partially observable MDP (POMDP) $\scr{M'} = \tupleof{S', A' = \Ac1, P', R', \Omega = S, O}$ where
    \begin{itemize}
        \item $S' = S  \times [n]$ wherein each state $(s_i,j)$ represents a state $s_i \in S$ and an index $j \in [n]$ representing the current policy being employed by $\player2$
        \item  The probability of a transition 
 $P( (s', j') | (s, j), a)$ is given as:
\[
P( (s', j') | (s, j),  a) = 
  T_{jj'} \cdot \sum\limits_{a_2 \in \Ac2} \left( \pi_j(s, a_2) \cdot P(s'| s, a, a_2) \right)
\]
\item The reward function is  given as:
$
R((s, j), a) = 
\sum_{a_2 \in \Ac2} \pi_j(s, a_2) \cdot R(s, a, a_2), \text{ and }
$
\item  The observation map $O: S' \to \Omega$ is defined as $(s_i,  j) \in S' \mapsto s_i$. 
    \end{itemize}
\end{definition}
 
While translating into a POMDP allows us access to a variety of approaches to solving POMDPs \cite{cassandra1998survey}, they are computationally expensive and ignore the specialized structure of the problem at hand. 
In this paper, we will work with the original two player game setup to directly  exploit the special problem structure at hand. 
\fi

Our goal is to compute a \emph{finite memory} policy $\pi^{(1)}: S \times M \mapsto \Ac1$ that maximizes the expected discounted reward for $\player1$ with given discount factor $0<\gamma < 1$. 
The structure and construction of the required memory $M$  over the states and actions of $\player2$ is discussed in subsequent sections. 

\section{Information State Machine and Consistency}\label{sec:checking-consistency}

The main approach  is to use a sequence of observations of states and $\player2$ actions to infer a \emph{belief state}  $\vb$ over the player's policies. 

\begin{definition}[Belief State]
A belief state $\vec{b}: (b_1, \ldots, b_n) \in \reals^n$ is a vector wherein the $i^{th}$ component  $b_i$ represents $\player1$'s belief that $\player2$ is employing policy $\pi_i \in \Pi$. Note that $b_i \geq 0$ for all $i \in [n]$ and $\sum_{i=1}^n b_i  = 1$.  
 \end{definition}

Let $\Beliefs_n = \{ \vb \in \reals^n\ |\  (\forall i \in [n])\ b_i \geq 0  \ \land\ \sum_{i=1}^n b_i = 1 \}$ denote the set of all belief state vectors in $\reals^n$.
The uniform belief state $\vec{b_u}$ is given by $(\frac{1}{n}, \ldots, \frac{1}{n})$. We define two operations over a belief state: (a) conditioning a belief state given some observation and (b) capturing the effect of policy change on a belief state.

Let $\vb$ be a belief state 
and $(s, a_2)$ represent an observation where $s \in S$ and $a_2 \in \Ac2$ represent states of the game and actions for $\player2$. The belief state $\vb' = \cond(\vb, s, a_2)$  is obtained by conditioning $\vb$ on the observation $(s, a_2)$: 
\begin{equation}\label{eq:conditioning}
b'_i = \cond(\vb, s, a_2) = \frac{ \pi_i(s, a_2)  b_i } { \sum_{j=1}^n \pi_j(s, a_2)  b_j  } \,.
\end{equation}
This expression is obtained as a direct application of Bayes' rule. 

\begin{remark}\label{rem:belief-state-inconsistent}
The denominator in Eq.~\eqref{eq:conditioning} needs to be non-zero for $\cond(\vb, s, a_2)$ to be defined. The denominator being zero means that the current belief states rule out the observation $a_2$ as having zero probability.
 \end{remark}

 At each step, $\player2$ switches to a different policy from the one they are currently utilizing according to the Markov chain with transition probabilities given by $T$. This modifies a belief state $\vb$ to a new one $\vb' = T^t \vb$. I.e, 
$b'_i =  \sum_{j=1}^n b_j T_{ji} $~\footnote{We multiply by $T^t$ since traditionally Markov chains model probability distributions as row vectors. Left-multiplying  the transition matrix computes the distribution in the next step. }.

Overall, given a sequence $(t_1, a_1) (t_2, a_2) \cdots (t_k, a_k)$ of observations and starting from some initial belief state $\vb_0$, we define the sequence of belief states: 
$\vb_0 \xrightarrow{(t_1, a_1)} \vb_1 \xrightarrow{(t_2, a_2)} \vb_2 \cdots \xrightarrow{(t_k, a_k)} \vb_k$,
such that $\vb_{i+1} = T^t  \cond(\vb_i, t_{i+1}, a_{i+1})$, for 
$i \in [k-1]$. Recall the  \emph{total variation} (tv) distance between two belief states $\vb$ and $\vb'$, denoted
 $||\vb - \vb' ||_{tv} = \sum_{i=1}^n |b_i - b'_i| $.


We now discuss our model of history in terms of a finite state machine  over the states and alphabets of $\player2$ called the \emph{information state machine}.

\begin{definition}[Information State Machine]
An information state machine (ISM) is a deterministic finite state machine that consists of a  finite set of states $M$, alphabet $\Sigma = S \times \Ac2$, initial state $m_0$,  transition function $\delta: M \times \Sigma \rightarrow M$ and a map  that associates  state $m \in M$ with a belief state $\vb(m )$ with $\vb(m_0) = \vb_u$.
\end{definition}
The transition function can be extended to $\delta: M {\times} \Sigma^* \rightarrow M$ as\footnote{We write $\langle \text{empty} \rangle$ for an empty sequence and use $\circ$ for sequence concatenation.}
\[
\delta(m,\langle \text{empty} \rangle) = m, \text{ and } \delta(m, \sigma \circ (t, a)) = 
\delta(\delta(m, \sigma), (t, a)) \text{ for $\sigma {\in} \Sigma^*$ and $(t, a) {\in} \Sigma$.}
\]

The definition requires the state-machine to be deterministic.
However, we can relax this requirement to make $\delta$ a partial function. We require that for any sequence of observations $ \sigma: (t_0, a_0) \cdots (t_l, a_l)$, if $\sigma$ can occur with non-zero probability (i.e, there exist
actions $a_0', \ldots, a_{l-1}' \in \Ac1$, such that $ P(t_{j+1} | t_j, a_j', a_j)  > 0$ for all $j \in [l{-}1]$), 
then  (a unique state) $\delta(m_0, \sigma)$ must exist.

\begin{figure}[t]
\begin{center}
    \scalebox{0.8}{
  \begin{tikzpicture}[x=0.4mm, y=0.3mm, every_node/.style={fill=white}, my_node/.style={circle, draw=black, thin}]
  \begin{scope}
      \draw
        (27.0, 104.82) node[my_node] (0){0}
        (87.746, 148.3) node[my_node]  (1){1}
        (74.784, 18.0) node[my_node]  (2){2}
        (92.19, 88.257) node[my_node]  (3){3}
        (181.93, 71.262) node[my_node]  (4){4}
        (34.976, 34.738) node[my_node]  (5){5}
        (143, 34) node[my_node]  (6){6}
        (153.4, 130.09) node[my_node]  (7){7};
      \begin{scope}[->, every node/.style={fill=white}]
        \draw[red] (0) to (1); 
        \draw[dashed] (0) to  (2); 
        \draw[blue, thick] (0) to (3); 
        \draw[red] (1) to (6); 
        \draw[dashed] (1) to (7); 
        \draw[blue, thick] (1) to (3); 
        \draw[red] (2) to (6); 
        \draw[dashed] (2) to (5); 
        \draw[blue, thick] (2) to (3); 
        \draw[red] (3) to [bend left] (4); 
        \draw[dashed] (3) to[bend left] (5); 
        \draw[loop below, blue, thick] (3) to (3); 
        \draw[red] (4) to (6); 
        \draw[dashed] (4) to (7); 
        \draw[blue, thick] (4) to[bend left] (3); 
        \draw[red] (5) to (6); 
        \draw[loop left, dashed] (5) to (5); 
        \draw[blue, thick] (5) to[bend left] (3); 
        \draw[loop below,red] (6) to (6); 
        \draw[dashed] (6) to[bend left] (7); 
        \draw[blue, thick] (6) to (3); 
        \draw[red] (7) to[bend left] (6); 
        \draw[loop right, dashed ] (7) to (7); 
        \draw[blue, thick] (7) to (3); 
      \end{scope}
      \end{scope}
      \begin{scope}[xshift=10cm, yshift=2.5cm]
      \draw (0,0) node {
      \small
      \begin{tabular}{|c|c|}
      \hline 
      State & Belief \\ 
      \hline 
      0 & (0.25, 0.25, 0.25, 0.25) \\ 
      1 & (0.29, 0.17, 0.29, 0.25) \\ 
      2 & (0.29, 0.29, 0.17, 0.25) \\ 
      3 & (0.17, 0.29, 0.29, 0.25) \\ 
      4 & (0.25, 0.17, 0.32, 0.26) \\ 
      5 & (0.26, 0.32, 0.17, 0.25) \\ 
      6 & (0.31, 0.17, 0.26, 0.26)\\
      7 & (0.33, 0.25, 0.17, 0.25) \\ 
      \hline 
      \end{tabular}
      };
      \end{scope}
    \end{tikzpicture}
    }
    \end{center}
    \caption{Example ISM for the rock-paper-scissors game.  Thick blue edges correspond to the observation $(0, p_2)$, dashed edges $(0, s_2)$ and solid red edges $(0, r_2)$.}
    \vspace{-1.5em}
    \label{fig:info-state-machine-example}
\end{figure}
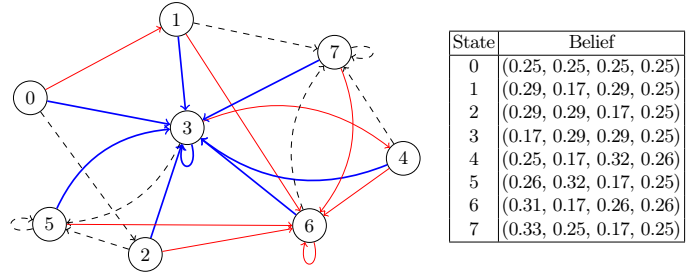

\begin{example}\label{ex:automata-example-1}
Figure~\ref{fig:info-state-machine-example} shows an example of an ISM for the rock-paper-scissors problem. Since $S$ has just one state, we do not include the label of this state in our alphabet, but simply label the edges with the actions of $\player2$. The initial  state is $0$ and the automaton is deterministic. 
\end{example}

We now define the  notion of consistency of an ISM. For any sequence of observations $\sigma:\ (t_0, a_0) \cdots (t_k, a_k)$ that can occur with positive probability, and a belief state $\vb \in \Beliefs_n$, let $\tau(\vb, \sigma)$ denote the result of \emph{transforming} $\vb$ successively based on the observations in $\sigma$. 
\[ \tau(\vb, \langle\text{empty}\rangle) = \vb,\ \text{and}\ \tau(\vb, \sigma\circ(t_i, a_i) ) = T^t  \cond(\tau(\vb, \sigma), (t_i, a_i)) \,.\]

\begin{definition}[Consistent Information State Machine]\label{def:state-machine-consistent}
An ISM $\scr{M}$ is $\lambda$-consistent for  $\lambda > 0$ iff for every finite, positive probability sequence of $\player2$ state/action observations $ \sigma: (t_0, a_0) \cdots (t_k, a_k)$ such that
$m_{k+1} = \delta(m_0, \sigma)$, then the belief state $\vb(m_{k+1})$ remains sufficiently close to $\tau(\vb_u, \sigma)$, the belief state obtained from the full history: $ ||\vb(m_{k+1}) - \tau(\vb_u, \sigma) ||_{tv} \leq \lambda$. 
\end{definition}

 The concept of $\lambda$-consistency implies that for any history of observations of $\player2$'s actions, the belief state associated with the information state $m$ reached, remains within total-variation distance $\lambda$ of the belief state obtained by remembering the entire history.

\subsection{Consistency Checking}

In this subsection, we describe how to check whether a given ISM $\scr{M}$ is consistent for some limit $\lambda$ using the sufficient condition of edge consistency.

\begin{definition}[Edge Consistency]\label{def:edge-consistency}
An $e: m \xrightarrow{o} m'$ of the automaton $\scr{M}$ (i.e, $m, m' \in M$ and $\delta(m, o) = m'$) is \emph{consistent} for limit $\lambda$ iff 
\begin{equation} \label{eq:cond-consistency}
\forall\ \vb \in \Beliefs_n:\  \left(\sum_{j=1}^n b_j \pi_j(o) > 0 \ \land\  ||\vb - \vb(m)||_{tv} \leq \lambda \right)\ \Rightarrow\ ||\tau(\vb, o) - \vb(m')||_{tv} \leq \lambda \,.
\end{equation}
I.e, any belief state $\vb$ that is within a total variation distance $\lambda$ of $\vb(m)$ must, upon updating with observation $o$, yield a belief state $\tau(\vb, o)$ that is within $\lambda$ distance of $\vb(m')$. 
\end{definition}
Notice that we require $\sum_{j=1}^n b_j \pi_j(o) = P(o | \vb)$ to be positive. Failing this condition, the observation $o$ would be zero probability under the belief state $\vb$ and thus ruled out.
\begin{theorem}\label{thm:edge-consistency-equals-consistency}
    If every edge in an ISM $\scr{M}$ is edge consistent for limit $\lambda$ then the state machine is $\lambda$-consistent. 
\end{theorem}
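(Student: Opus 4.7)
The natural plan is to prove this by induction on the length $k{+}1$ of the observation sequence $\sigma = (t_0,a_0)\cdots(t_k,a_k)$. The inductive statement I would carry along is: for every positive-probability prefix $\sigma'$ of length $\ell$, writing $m_{\ell} = \delta(m_0,\sigma')$, we have $\|\vb(m_\ell) - \tau(\vb_u,\sigma')\|_{tv} \le \lambda$. The base case $\ell=0$ (the empty prefix) is immediate since $\delta(m_0,\langle\text{empty}\rangle) = m_0$ and $\tau(\vb_u,\langle\text{empty}\rangle) = \vb_u = \vb(m_0)$, so the distance is $0$.

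For the inductive step, suppose the bound holds after a prefix $\sigma'$ of length $\ell$, and consider the extended prefix $\sigma' \circ o$ where $o = (t_\ell, a_\ell)$. Let $m = \delta(m_0,\sigma')$, $m' = \delta(m,o)$, and let $\vb^* = \tau(\vb_u,\sigma')$ be the ``true'' belief after $\sigma'$. By the induction hypothesis $\|\vb(m) - \vb^*\|_{tv} \le \lambda$. Since $\sigma' \circ o$ has positive probability, the true belief assigns positive probability to observation $o$, i.e.\ $\sum_j b^*_j\,\pi_j(o) > 0$, which is exactly the precondition in Eq.~\eqref{eq:cond-consistency}. Edge consistency applied to the edge $m \xrightarrow{o} m'$ then gives $\|\tau(\vb^*,o) - \vb(m')\|_{tv} \le \lambda$. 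But by definition $\tau(\vb_u,\sigma' \circ o) = \tau(\vb^*,o)$, so this is exactly the bound required for the extended prefix. This closes the induction, and specializing to $\sigma' = \sigma$ of length $k{+}1$ yields the theorem.

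The only genuinely delicate point, which I would highlight in the write-up, is justifying that the positive-probability hypothesis on $\sigma' \circ o$ cleanly transfers to the condition $\sum_j b^*_j\,\pi_j(o) > 0$ needed to invoke edge consistency; this follows because $\vb^*$ is the Bayes-updated belief under $\sigma'$, so the one-step observation likelihood under $\vb^*$ coincides with the conditional probability of $o$ given history $\sigma'$, which must be positive whenever the extended sequence has positive probability. A secondary bookkeeping point is that $\delta(m,o)$ is defined: this holds because the partial transition function is guaranteed to be defined on all positive-probability sequences, and $\sigma' \circ o$ is such a sequence. Everything else is a mechanical unfolding of the definitions of $\tau$, $\cond$, and the policy-change update $T^t$.
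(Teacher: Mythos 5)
Your proof is correct and follows essentially the same route as the paper's: induction on the length of the observation sequence, with the inductive step applying edge consistency of $m \xrightarrow{o} m'$ to the exact belief $\tau(\vb_u,\sigma')$. If anything, you are slightly more careful than the paper's write-up in explicitly verifying the positive-probability precondition $\sum_j b^*_j \pi_j(o) > 0$ before invoking Eq.~\eqref{eq:cond-consistency}, a point the paper's proof passes over silently.
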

\ifextendedversion
\begin{proof}
Following Def.~\ref{def:state-machine-consistent}, we need to show that for any finite sequence of observations $\sigma$, if $\delta(m_0, \sigma) = m$ then $||\vb(m) - \tau(\vb_u, \sigma)||_{tv} \leq \lambda$. 

Proof proceeds by induction on the length of the sequence $\sigma$, denoted $|\sigma|$. When $|\sigma| = 0$,  we have $m = m_0$ and $\tau(\vb_u, \sigma) = \vb_u$. Therefore, $||\vb(m) - \tau(\vb_u, \sigma)||_{tv} = 0 \leq \lambda$ holds. 

Assume that the result holds for any non-zero probability sequence $\sigma$ of length $m$. Let $\sigma' = \sigma \circ (t, a)$ for $t \in S $ and $a \in \Ac2$ also of non-zero probability. Let $m = \delta(m_0, \sigma)$ and $m' = \delta(m, (t,a))$. Since the observations, $\sigma$ and $\sigma'$ are assumed non-zero probability observations, we note the states $m, m'$ exist and are unique. We know by induction hypothesis that $||\vb(m) - \tau(\vb_u, \sigma)||_{tv} \leq \lambda$. Note that by edge consistency of the edge $m \xrightarrow{(t,a)} m'$, we have that  for all belief states $\vb \in \Beliefs_n $, we have 
\[ ||\vb - \vb(m)||_{tv} \leq \lambda \ \Rightarrow\ ||\tau(\vb, (t,a)) - \vb(m') ||_{tv} \leq \lambda \,.\]
Applying this to $\vb = \tau(\vb_u, \sigma)$, we note that the antecedent holds by induction hypothesis and thus, we conclude that 
\[ || \underset{= \tau(\vb_u, \sigma')}{\underbrace{\tau(\tau(\vb_u, \sigma), (t,a)) - \vb(m')||_{tv}}} \leq \lambda \,.\]

\end{proof}
\else 
Proof is by induction on the size of the observation sequences, and is given in the extended version of this paper.
\fi
We now provide an approach to check if a given edge in an automaton $e:\ m \xrightarrow{o} m'$ is consistent for a limit $\lambda$ by checking a formula in linear arithmetic.  We will attempt to find a
belief state $\vb$ that \emph{refutes}~\eqref{eq:cond-consistency}.
I.e, $\vb \in \Beliefs_n $ that satisfies conditions: (a) $||\vb - \vb(m)||_{tv} \leq \lambda$; (b) $\sum_{j=1}^n \pi(o) b_j > 0$ and (c) $||\tau(\vb, o) - \vb(m')||_{tv} > \lambda$. Note that $\vb(m)$ and $\vb(m')$ are known belief-vectors while $\vb$ is the unknown vector we seek. We will construct a formula $\Psi_{e}$ in linear arithmetic  such that edge $e$ is consistent iff $\Psi_e$ is unsatisfiable.  The formula $\Psi_e$ is encoded using variables $\vb:(b_1, \ldots, b_n)$ representing the unknown belief state and extra variables $\vx: (x_1, \ldots, x_n)$ and $\vy: (y_1, \ldots, y_n)$. Let $\alpha_i = \pi_i(o)$ represents the probability of observation $o$ under policy $\pi_i$.

\noindent \textbf{(1)}  Observation $o$ occurs with non-zero probability:
 \[ \Psi_0(e):\ \sum_{j=1}^n \alpha_j b_j > 0\ \land\ \sum_{i=1}^n b_i = 1 \,.\]

\noindent \textbf{(2)} $||\vb - \vb(m)||_{tv} \leq \lambda$ must hold.
\[ \Psi_1(e):\ \bigwedge_{i=1}^n x_i \geq 0\ \land\ \bigwedge_{i=1}^n \underset{\equiv |b_i - \vb(m)_i| \leq x_i}{\underbrace{ -x_i \leq (b_i - \vb(m)_i) \leq x_i}}\ \land\ \sum_{i=1}^n x_i \leq \lambda \,.\]
\noindent\textbf{(3)} $||\tau(\vb, o) - \vb(m')||_{tv} > \lambda$. Recall  $\tau(\vb,o) = T^t \times (\cond(\vb, o)) = T^t \times \left(\frac{b_1 \alpha_1}{\sum_{j=1}^n b_j \alpha_j}, \cdots,\frac{b_1 \alpha_1}{\sum_{j=1}^n b_j \alpha_j} \right)  $.  
\[ ||\tau(\vb,o) - \vb(m')||_{tv} = \sum_{j=1}^n \left|\frac{\sum_{i=1}^n T_{ij} \alpha_i b_i}{\sum_{i=1}^n \alpha_i b_i} - \vb(m')_j\right|\,.\]
Let $e_j$ denote the expression $ \left(\sum_{i=1}^n T_{ij} \alpha_i b_i\right) - \vb(m')_j \left(\sum_{i=1}^n \alpha_i b_i\right)$.
Since $\sum_{j=1}^n \alpha_j b_j > 0$, the condition $||\tau(\vb, o) - \vb(m')||_{tv} > \lambda$ is equivalent to 
\[ \Psi_2(e):\ \bigwedge_{j=1}^n  \underset{ \equiv\  y_j = |e_j|} {\underbrace{ y_j \geq 0\ \land\ (y_j = e_j \lor y_j = -e_j)}}\ \land\ \left( \sum_{j=1}^n y_j > \lambda \sum_{j=1}^n \alpha_j b_j\right) \,.\]

\begin{theorem}\label{thm:edge-consistency-feasibility}
An edge $e$ is consistent iff $\Psi(e):\ \Psi_0(e)\ \land\ \Psi_1(e)\ \land\ \Psi_2(e)$ is infeasible.
\end{theorem}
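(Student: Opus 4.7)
The plan is to prove the theorem by showing that $\Psi(e)$ precisely encodes the negation of the edge consistency condition \eqref{eq:cond-consistency}, so that $\Psi(e)$ is feasible iff some belief state $\vb$ refutes consistency, iff edge $e$ is not consistent. Concretely, I would argue both directions: if $(\vb, \vx, \vy)$ satisfies $\Psi(e)$, then $\vb$ itself is a refuting belief state; conversely, given a refuting $\vb$, I construct $\vx$ and $\vy$ so that $(\vb, \vx, \vy) \models \Psi(e)$.

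The heart of the argument is to verify, piece by piece, the following equivalences. First, $\Psi_0(e)$, together with the implicit nonnegativity $b_i \geq 0$ on the belief coordinates, encodes exactly ``$\vb \in \Beliefs_n$ with $P(o\mid \vb) = \sum_j \alpha_j b_j > 0$,'' which is the premise of \eqref{eq:cond-consistency}. Second, $\Psi_1(e)$ encodes the premise $\|\vb - \vb(m)\|_{tv} \le \lambda$: given any $\vb$, setting $x_i = |b_i - \vb(m)_i|$ satisfies the conjuncts and yields $\sum_i x_i = \|\vb - \vb(m)\|_{tv}$; conversely, the constraints $-x_i \le b_i - \vb(m)_i \le x_i$ force $x_i \ge |b_i - \vb(m)_i|$, so $\sum_i x_i \le \lambda$ implies the tv-bound. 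Third, and most importantly, $\Psi_2(e)$ encodes the refutation $\|\tau(\vb, o) - \vb(m')\|_{tv} > \lambda$. Here I use the key calculation
\[
\left|\frac{\sum_i T_{ij}\alpha_i b_i}{\sum_i \alpha_i b_i} - \vb(m')_j\right| \;=\; \frac{|e_j|}{\sum_i \alpha_i b_i},
\]
which is legitimate because $\Psi_0(e)$ guarantees $\sum_i \alpha_i b_i > 0$. Summing over $j$ and clearing the positive denominator, $\|\tau(\vb, o) - \vb(m')\|_{tv} > \lambda$ is equivalent to $\sum_j |e_j| > \lambda \sum_i \alpha_i b_i$.

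The one subtlety is the encoding of each $|e_j|$ via $y_j \ge 0 \land (y_j = e_j \lor y_j = -e_j)$: this alone gives only $y_j \le |e_j|$, not equality. Nevertheless, the existential equivalence still holds. If $\vb$ refutes consistency, pick $y_j = |e_j|$ by choosing the appropriate disjunct; then $\sum_j y_j = \sum_j |e_j| > \lambda \sum_i \alpha_i b_i$, so $\Psi_2(e)$ is satisfied. Conversely, if some $(\vb, \vy)$ satisfies $\Psi_2(e)$, then $\sum_j |e_j| \ge \sum_j y_j > \lambda \sum_i \alpha_i b_i$, hence $\vb$ still refutes consistency. Combining the three equivalences, $\Psi(e)$ is feasible iff there is a refuting $\vb$ iff $e$ is not consistent, which is the contrapositive of the claim. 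The main obstacle is really just the linearization of the Bayesian update $\tau(\vb, o)$, which is nonlinear as written but becomes linear after cross-multiplying by the strictly positive denominator, together with the standard trick of introducing auxiliary variables to linearize absolute values; both rely crucially on $\Psi_0(e)$ to keep denominators positive and signs well-defined.
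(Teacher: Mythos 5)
Your proposal is correct and follows exactly the route the paper intends (the paper states the theorem without an explicit proof, relying on the annotated equivalences in the construction of $\Psi_0,\Psi_1,\Psi_2$, which you verify in full, including the one genuinely asymmetric encoding: $\Psi_1$ only forces $x_i \geq |b_i - \vb(m)_i|$, which suffices since the $x_i$ are existentially quantified). One small inaccuracy that does not affect the argument: the constraint $y_j \geq 0 \land (y_j = e_j \lor y_j = -e_j)$ actually forces $y_j = |e_j|$ exactly (since $y_j$ must equal one of $\pm e_j$ and be nonnegative), not merely $y_j \leq |e_j|$ as you state; your two-directional argument goes through either way.
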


 SMT solvers such as Z3 can be used to check satisfiability~\cite{nieuwenhuis2006solving}. Alternatively, linear complementarity problem (LCP) solvers~\cite{murty1988linear} can also be used: disjunction $y_i {=} e_i \lor y_i {=} -e_i$ is equivalent to a complementarity constraint $(y_i - e_i) \perp (y_i + e_i)$.

\begin{example}
We check the consistency of the automaton from Example~\ref{ex:automata-example-1} for $\lambda = 0.25$. For the edge $e:\ 4 \xrightarrow{r_2} 6$ in the automaton.  The formula $\Psi(e)$ is satisfiable with $\vb = (0.125, 0.17, 0.445, 0.26)$:  $||\vb - \vb(4)||_{tv} = 0.25$, whereas $||\tau(\vb, o) - \vb(6)||_{tv} \approx 0.337  > 0.25$.
The automaton in Figure~\ref{fig:info-state-machine-example}
fails to be consistent. 
 \end{example}


\section{Information State Machine Synthesis Algorithm}\label{sec:consistent-ism-synthesis}

\begin{algorithm}[t]
\DontPrintSemicolon
\KwData{$\scr{G}, \Pi, T, \lambda$}
\KwResult{A finite state machine $\scr{M}$.}
$m_0 \ \leftarrow\ \mathsf{newState}(\vb_u)$ \tcp*{create initial state}
$\Sigma' = \{ (s, a_2) \in S \times \Ac2\ |\ (\exists \pi \in \Pi)\ \pi(s, a_2) > 0 \} $\tcp*{non-zero prob. observ.}\label{nl:non-zero-prob}
$(\scr{M}, W) \leftarrow\ (\emptyset, [m_0])$  \tcp*{initialize set of states and worklist}
\While{$W \not= \emptyset$}{
    $m\ \leftarrow\ pop(W)$ \tcp*{pop a state from the worklist }
    Add state $m$ to $\scr{M}$\;
    \For{$o \in \Sigma'$ \tcp*{iterate through observations} \label{for-loop-head}}{ 
        $\vb' \leftarrow \tau(\vb(m), o)$ \tcp*{compute next belief state}\label{next-belief-state}
        \lIf(\tcp*{check consistency}){$\mathsf{not\ isConsistent}(\vb(m), o, \vb')$}{ \textbf{FAIL} \label{consistency-check-2}}
        $ \hat{m}\ \leftarrow\ \mathsf{findClosestState}(\vb', \lambda) $ \label{find-closest-state}    \tcp*{search for nearby state}
        \uIf{ $\hat{m} \not= \text{Nil}\ \land\ \mathsf{isConsistent}(\vb(m), o, \vb(\hat{m}) )$ \tcp*{ existing state found}\label{nearby-state-found}}{
            Add edge $m \xrightarrow{o} \hat{m}$  to $\scr{M}$\;
        }\Else{
                $m' = \mathsf{newState}(\vb')$\label{create-new-state} \tcp*{Create new information state}\;
                Add edge $m \xrightarrow{o} m'$ to $\scr{M}$\;
                $\mathsf{push}(m', W)$ \tcp*{push new state to worklist}\;
        } 
    }
    return $\scr{M}$ \;
}
\caption{\textsc{ConstructConsistentInformationStateMachine}()}\label{alg:info-state-machine-search}
\end{algorithm}

Algorithm~\ref{alg:info-state-machine-search} attempts to synthesize a consistent finite state machine for $\player2$, given a concurrent game $\scr{G}:\ \tupleof{S, \Ac1,\Ac2, P, R}$, policies $\Pi:\ \{\pi_1, \ldots, \pi_n\}$, transition matrix $T$ and $\lambda > 0$ by exploring belief states starting from the initial belief state $m_0$. Line~\ref{nl:non-zero-prob} restricts the alphabet to the set $\Sigma'$ that has non-zero probability under at least one policy.  The algorithm maintains a worklist $W$ that is initialized to contain the initial state $m_0$ at start. At each iteration, it pops a state from the worklist and adds it to the automaton. Next, the algorithm iterates through all the observations $o \in \Sigma'$ (line number~\ref{for-loop-head}). After computing the 
next belief state $\vb'$ (line~\ref{next-belief-state}), it finds the closest state to $\vb'$ in the total variation norm and checks that it is closer than the limit $\lambda$ (line~\ref{find-closest-state}). If such a state $\hat{m}$ is found and the edge from $m$ to $\hat{m}$ is consistent (line \ref{nearby-state-found}), then the edge is added. Consistency is checked using a SMT or MILP solver as described in Section~\ref{sec:checking-consistency}. Otherwise, the algorithm has already checked consistency of the new state and edge that it is about to create (line~\ref{consistency-check-2}). This is an important operation since a failure of consistency here can result in an overall failure to find a state machine. 
\begin{theorem}
Any automaton $\scr{M}$ returned by Algorithm~\ref{alg:info-state-machine-search}  is $\lambda$-consistent.
\end{theorem}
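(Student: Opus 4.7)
The plan is to reduce the claim to edge consistency by invoking Theorem~\ref{thm:edge-consistency-equals-consistency}, and then show that every edge added by Algorithm~\ref{alg:info-state-machine-search} passes an explicit edge-consistency test before insertion. Suppose the algorithm returns $\scr{M}$ (i.e., does not \textbf{FAIL}); I would argue that $\scr{M}$ is a well-formed ISM with $\vb(m_0)=\vb_u$ (guaranteed by the initialization step that creates $m_0$) and a well-defined transition function (the \texttt{for}-loop at line~\ref{for-loop-head} iterates over each observation once for each popped state, so at most one outgoing edge per observation is added, giving determinism; states are pushed to the worklist only when they are freshly created, so each state is processed once). Restricting the alphabet to $\Sigma'$ at line~\ref{nl:non-zero-prob} is exactly the relaxation to a partial function from the paper: observations with probability zero under every policy are exactly those that cannot arise in any positive-probability play.

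Next I would perform a case split on how a generic edge $m \xrightarrow{o} m''$ enters $\scr{M}$. There are two places in the loop body where an edge is added:
\begin{enumerate}
    \item In the \textbf{if} branch (after line~\ref{nearby-state-found}), the edge $m \xrightarrow{o} \hat m$ is added only inside the guard $\mathsf{isConsistent}(\vb(m), o, \vb(\hat m))$. By Theorem~\ref{thm:edge-consistency-feasibility}, this guard being true is equivalent to the formula $\Psi(e)$ being infeasible, i.e., the edge being $\lambda$-edge-consistent.
    \item In the \textbf{else} branch (line~\ref{create-new-state}), the edge $m \xrightarrow{o} m'$ is added with $\vb(m')=\vb'=\tau(\vb(m),o)$. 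But note that line~\ref{consistency-check-2} was executed earlier in the same iteration and checked $\mathsf{isConsistent}(\vb(m), o, \vb')$; had this failed, the algorithm would have returned \textbf{FAIL}. Since by hypothesis the algorithm returned $\scr{M}$, this check succeeded, so again by Theorem~\ref{thm:edge-consistency-feasibility} the new edge is $\lambda$-edge-consistent.
\end{enumerate}

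Combining both cases, every edge of $\scr{M}$ is consistent for limit $\lambda$. Applying Theorem~\ref{thm:edge-consistency-equals-consistency} then yields $\lambda$-consistency of the whole ISM, completing the proof. I do not anticipate a hard step: the entire argument is a structural inspection of the algorithm, and the only subtle point is making explicit that the consistency check at line~\ref{consistency-check-2} on the computed belief $\vb'$ covers precisely the edges produced in the \textbf{else} branch, while the second check at line~\ref{nearby-state-found} covers the edges produced in the \textbf{if} branch; together these two guards blanket every edge insertion in the algorithm.
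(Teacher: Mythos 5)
Your proposal is correct and takes essentially the same route as the paper, whose entire proof is the one-line observation that ``every edge added to the automaton is consistent, by construction,'' combined implicitly with Theorem~\ref{thm:edge-consistency-equals-consistency}. Your case split on the two edge-insertion sites (the guard at line~\ref{nearby-state-found} for reused states and the earlier check at line~\ref{consistency-check-2} for freshly created states) simply makes explicit what the paper leaves to the phrase ``by construction.''
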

\begin{proof}
Every edge added to the automaton is consistent, by construction.
\end{proof}

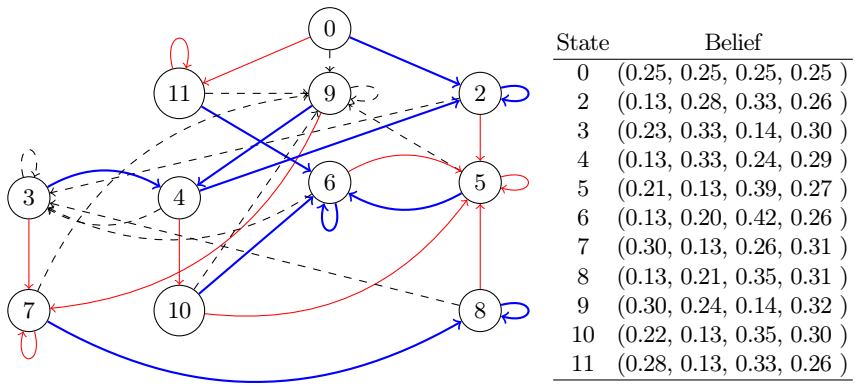
\begin{figure}[t]
\begin{tabular}{ll}
\begin{minipage}{0.5\textwidth}
\begin{tikzpicture}[x=4cm, y=2.5cm]
      \draw[every node/.style={circle, draw=black, thin}]
        (0.5, 1.1) node (0){0}
        (1, 0.756) node (2){2}
        (0.5, 0.756) node (9){9}
        (0, 0.756) node (11){11}
        (-0.5, 0.2) node (3){3}
        (1, 0.282) node (5){5}
        (0, 0.2) node (4){4}
        (-0.5, -0.4) node (7){7}
        (0, -0.4) node (10){10}
        (0.5, 0.282) node (6){6}
        (1, -0.4) node (8){8};
      \begin{scope}[->]
        \draw[thick, blue] (0) to (2); 
        \draw[dashed] (0) to (9); 
        \draw[red] (0) to (11); 
        \draw[thick, blue, loop right,] (2) to (2); 
        \draw[dashed] (2) to (3) ; 
        \draw[red] (2) to (5); 
        \draw[thick, blue] (9) to (4); 
        \draw[loop right, dashed] (9) to (9);  
        \draw[bend left, red] (9) to (7);
        \draw[thick, blue] (11) to  (6); 
        \draw[dashed] (11) to  (9);
        \draw[loop above,red] (11) to (11);  
        \draw[bend left, thick, blue ] (3) to  (4); 
        \draw[loop above, dashed] (3) to (3);  
        \draw[red] (3) to (7);  
        \draw[bend left, thick, blue] (5) to (6);  
        \draw[dashed] (5) to (9); 
        \draw[loop right,red] (5) to (5); 
        \draw[thick, blue] (4) to (2); 
        \draw[bend left, dashed] (4) to (3); 
        \draw[red] (4) to (10);
        \draw[bend right, thick, blue] (7) to (8); 
        \draw[dashed, bend left] (7) to (9); 
        \draw[loop below,red] (7) to (7); 
        \draw[thick, blue] (10) to (6);
        \draw[dashed] (10) to (9); 
        \draw[red, bend right] (10) to (5);
        \draw[loop below, thick, blue] (6) to (6); 
        \draw[bend left, dashed] (6) to (3);
        \draw[bend left, red] (6) to (5);
        \draw[loop right, thick, blue] (8) to (8); 
        \draw[dashed] (8) to (3); 
        \draw[red] (8) to (5); 
      \end{scope}
    \end{tikzpicture}
\end{minipage} &
\begin{minipage}{0.5\textwidth}
\begin{center}
\begin{tabular}{cc}
State & Belief \\ 
\hline 
0 & (0.25, 0.25, 0.25, 0.25 )\\
2 & (0.13, 0.28, 0.33, 0.26 )\\
3 & (0.23, 0.33, 0.14, 0.30 )\\
4 & (0.13, 0.33, 0.24, 0.29 )\\
5 & (0.21, 0.13, 0.39, 0.27 )\\
6 & (0.13, 0.20, 0.42, 0.26 )\\
7 & (0.30, 0.13, 0.26, 0.31 )\\
8 & (0.13, 0.21, 0.35, 0.31 )\\
9 & (0.30, 0.24, 0.14, 0.32 )\\
10 & (0.22, 0.13, 0.35, 0.30 )\\
11 & (0.28, 0.13, 0.33, 0.26 )\\
\hline 
\end{tabular}
\end{center}
\end{minipage}
\end{tabular}
\caption{\textbf{(Left)} Consistent ISM for $\lambda=0.25$ the RPS example from Figure~\ref{fig:rps-rewards-and-policies} obtained by running Algorithm~\ref{alg:info-state-machine-search}. Thick blue edges correspond to the observation $(0, p_2)$, dashed edges $(0, s_2)$ and solid red edges $(0, r_2)$; \textbf{(Right)} Beliefs associated with states.}\label{fig:consistent-ism-rps-example}
\end{figure}

Figure~\ref{fig:consistent-ism-rps-example} shows a consistent ISM with $11$ states for the RPS example from Figure~\ref{fig:rps-rewards-and-policies}.
Note that Algorithm~\ref{alg:info-state-machine-search} is not guaranteed to terminate and return a finite ISM. In section~\ref{sec:completeness-and-robustness}, we establish a simple condition on the transition matrix $T$ for which the algorithm terminates and yields a finite ISM.  

\section{Policy Synthesis}\label{sec:policy-synthesis}
Given an ISM $\scr{M}$, we will now describe the policy synthesis for $\player1$ and prove bounds on the optimality of the policy thus obtained w.r.t discounted rewards. 
We first compose a two player game graph $\scr{G}: \tupleof{S, \Ac1, \Ac2, P, R}$ with the ISM $\scr{M}: \tupleof{M, \Sigma', \delta}$ wherein $\Sigma'\subseteq S \times \Ac2$. This MDP serves as a starting point for optimal policy synthesis. Next, for a $\lambda-$consistent information state machine. We show that this MDP is ``close'' to an infinite state MDP obtained from unbounded histories. We invoke a result on approximate information states (AIS) by Subramanian et al \cite{subramanian2022approximate} to  bound  the difference between the optimal value function obtained from  finite state histories and that from full histories. 

The MDP is given by $\tupleof{S \times M, \Ac1, \widehat{P}, \widehat{R}}$ with states $(s, m)$ for $s \in S$ and  $m \in M$. Let $\vb(m) = (b_1, \ldots, b_n)$.
For  $a_1 \in \Ac1$, the probability of transitioning to $(s', m')$ from $(s,m)$ is given by
\begin{equation}\label{eq:mdp-transition-probabililty}
\widehat{P}((s', m') | (s, m), a_1) = \sum_{ a_2 \in \Ac2} \underset{\text{indicator function}}{\underbrace{{1}_{\{\delta(m, (s,a_2)) = m'\}}}}   \underset{= \mathbb{P}( a_2 | \vb(m) )}{\underbrace{\left( \sum_{i=1}^n b_i \pi_i(s, a_2) \right)}}  P(s' | s, a, a_{2})\,.
\end{equation}
Note that $1_{\{\psi\}} =1 \ \text{if}\ \psi\  \text{holds and}\ 0 \ \text{otherwise}$. 
The reward function is 
\begin{equation}\label{eq:mdp-reward-definition}
    \widehat{R}((s,m), a_1) = \sum_{a_2 \in \Ac2} \underset{\text{see eq.~\eqref{eq:mdp-transition-probabililty}}}{\underbrace{\mathbb{P}(a_2 | \vb(m))}} \  \underset{\text{from}\ \scr{G}}{\underbrace{R(s, a_1, a_2)}}\,.
\end{equation}

The composition of a finite ISM with the game yields a finite-state MDP for $\player1$ that can be solved to yield a policy for $\player1$. However, since the ISM tracks the belief state approximately, we cannot expect the resulting policy to be optimal when compared to a situation wherein we  track the precise belief state.  We will  bound the loss in value resulting from the belief state approximation in an ISM. 

 We construct a belief state MDP   using the ``exact'' history of observations up to some time $t$. The ``exact'' MDP has as its 
states $S \times \Beliefs_n $ wherein each state is a pair $(s, \vb^{(t)})$ for $s \in S$ and $\vb^{(t)} = \tau(\vb_u, \sigma_t)$ for observation sequence $\sigma_t: (s_1, a_1), \ldots, (s_t, a_t)$. The expected reward obtained for action $a \in \Ac1$ in current state $s_{t+1} = s$ is given by  
\begin{equation}\label{eq:ideal-reward-definition}
R^*((s, \vb^{(t)} ), a)  =  \sum_{a_2 \in \Ac2} P(a_2| \vb^{(t)}) \  R(s, a_1, a_2)\,.
\end{equation}

We define the  transition probability $P^*$ as 
\[ P^*( (s', \vb^{(t+1)}) | (s, \vb^{(t)}), a, a_{t+1}) =  1_{\{\vb^{(t+1)} = \tau(\vb^{(t)}, (s, a_{t+1}))\}} P(a_{t+1} | \vb^{(t)})  P(s' | s, a, a_{t+1}) \,.\]
Let $m_t = \delta(m_0, \sigma_t)$ be the unique information state from $\scr{M}$. Since $\scr{M}$ is $\lambda$-consistent, we know that $||\vb(m_t) - \vb^{(t)}||_{tv} \leq \lambda$. We establish bounds on the discrepancies between the rewards obtained and the next state probabilities. Let us define $R_{\max}(s) = \max_{a_1 \in \Ac1, a_2 \in \Ac2} | R(s, a_1, a_2) | $ and $\alpha_{\max}(s) = \sum_{a_2 \in \Ac2}\ \max_{j=1}^n \pi_j(s, a_2)$.
\begin{lemma}\label{lemma:reward-discrepancy}
For any history $\sigma_t$,  
$|R^*((s, \vb^{(t)}), a) - \widehat{R}((s,m_t), a)| \leq R_{max}(s) \alpha_{max}(s)  \lambda$.
\end{lemma}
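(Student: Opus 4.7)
The plan is to reduce the reward discrepancy to a discrepancy in the induced distributions over $\player2$'s actions, and then bound that by the total-variation distance between $\vb^{(t)}$ and $\vb(m_t)$, which we already control by $\lambda$-consistency.

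First, I would write both rewards in the same form by expanding the definitions~\eqref{eq:ideal-reward-definition} and~\eqref{eq:mdp-reward-definition}. Subtracting them yields
\[
R^*((s,\vb^{(t)}),a) - \widehat{R}((s,m_t),a) \;=\; \sum_{a_2 \in \Ac2} \bigl( P(a_2 \mid \vb^{(t)}) - P(a_2 \mid \vb(m_t)) \bigr)\, R(s,a,a_2).
\]
Taking absolute values, applying the triangle inequality, and pulling out the uniform reward bound gives $|R^* - \widehat{R}| \le R_{\max}(s) \sum_{a_2} |P(a_2 \mid \vb^{(t)}) - P(a_2 \mid \vb(m_t))|$.

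Next, I would expand each probability as $P(a_2 \mid \vb) = \sum_{i=1}^n b_i\, \pi_i(s,a_2)$ so that each per-action discrepancy is a linear combination of the belief coordinates. The key manipulation is
\[
|P(a_2 \mid \vb^{(t)}) - P(a_2 \mid \vb(m_t))| = \Bigl| \sum_{i=1}^n (b^{(t)}_i - b(m_t)_i)\, \pi_i(s,a_2) \Bigr| \;\le\; \bigl(\max_j \pi_j(s,a_2)\bigr) \cdot \|\vb^{(t)} - \vb(m_t)\|_{tv},
\]
where the inequality uses $|\sum_i c_i d_i| \le (\max_j |d_j|) \sum_i |c_i|$ with $c_i = b^{(t)}_i - b(m_t)_i$ and $d_i = \pi_i(s,a_2) \ge 0$.

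Summing over $a_2$ then produces exactly the factor $\alpha_{\max}(s) = \sum_{a_2} \max_j \pi_j(s,a_2)$, and invoking $\lambda$-consistency of $\scr{M}$ (Definition~\ref{def:state-machine-consistent}) via $m_t = \delta(m_0, \sigma_t)$ gives $\|\vb^{(t)} - \vb(m_t)\|_{tv} \le \lambda$. Combining these yields the claimed bound. I do not anticipate any real obstacle here; the only subtlety is choosing the right factorization of the linear combination so that $\alpha_{\max}$ appears naturally rather than a looser bound like $n \cdot \max_{i,a_2} \pi_i(s,a_2)$, and remembering that the coefficients $\pi_i(s,a_2)$ are nonnegative so the absolute value only applies to the belief differences.
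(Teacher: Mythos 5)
Your proof is correct and follows essentially the same route as the paper's: expand both rewards, apply the triangle inequality to pull out $R_{\max}(s)$, bound each per-action discrepancy $\bigl|\sum_i (b^{(t)}_i - \vb(m_t)_i)\pi_i(s,a_2)\bigr|$ by $(\max_j \pi_j(s,a_2))\,\|\vb^{(t)} - \vb(m_t)\|_{tv}$, sum over $a_2$ to obtain $\alpha_{\max}(s)$, and invoke $\lambda$-consistency. The factorization you flag as the ``only subtlety'' is exactly the step the paper performs, so there is nothing to add.
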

\ifextendedversion
\begin{proof}
 We expand the LHS using Eq.~\eqref{eq:mdp-reward-definition} and Eq.~\eqref{eq:ideal-reward-definition}. 
\begin{align*}
|R^*((s, \vb^{(t)}), a) - \widehat{R}((s,m_t), a)| & \leq \sum_{a_2 \in \Ac2} | P(a_2 | \vb^{(t)}) R(s, a_1, a_2) - P(a_2 | \vb(m_t)) R(s, a_1, a_2)| \\ 
& \leq \sum_{a_2 \in \Ac2} |R(s, a_1, a_2)| | \sum_{i=1}^n \vb^{(t)}_i \pi_i(s, a_2) - \vb(m_t)_i \pi_i(s, a_2) | \\ 
& \leq | R_{\max}(s)| \sum_{a_2 \in \Ac2} (\max_{j=1}^n \pi_j(s, a_2))\sum_{i=1}^n |\vb^{(t)}_i - \vb(m_t)_i| \\ 
& \leq |R_{\max}(s)| \sum_{a_2 \in \Ac2} (\max_{j=1}^n \pi_j(s, a_2)) ||\vb^{(t)} - \vb(m)||_{tv} \\ 
& \leq |R_{\max}(s)|\  \alpha_{\max}(s)\  \lambda \\
\end{align*}
\end{proof}
\fi 
Next, we prove that the next-state distributions $\hat{P}$ and $P^*$ are ``close'' in the total-variation distance $d_{tv}(\sigma_t, s, a)$ given by the formula:
\[ \sum_{a_{t+1} \in \Ac2} \sum_{s' \in S} \left| P^*( (s', \vb^{(t+1)}) | (s, \vb^{(t)}), a, a_{t+1}) - \widehat{P}((s', m_{t+1}) | (s, m_t), a, a_{t+1}) \right| \,.\]
\begin{lemma}\label{lemma:transition-map-discrepancy}
For any history $\sigma_t$ and action $a \in \Ac1$, 
$d_{tv}(\sigma_t, s, a) \leq  \alpha_{\max}(s) \lambda$.
\end{lemma}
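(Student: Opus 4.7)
The plan is to mirror the structure of the proof of Lemma~\ref{lemma:reward-discrepancy}, but starting from the definitions of $\widehat{P}$ and $P^*$ rather than from the reward functions. The key simplifying observation is that in $d_{tv}(\sigma_t,s,a)$, we are comparing the two transition kernels only at the specific successor pair $(m_{t+1}, \vb^{(t+1)})$ determined by the history $\sigma_t\circ(s,a_{t+1})$, namely $m_{t+1} = \delta(m_t,(s,a_{t+1}))$ and $\vb^{(t+1)} = \tau(\vb^{(t)},(s,a_{t+1}))$. Consequently, both indicator factors ${1}_{\{\delta(m_t,(s,a_{t+1}))=m_{t+1}\}}$ and ${1}_{\{\vb^{(t+1)}=\tau(\vb^{(t)},(s,a_{t+1}))\}}$ appearing in Eq.~\eqref{eq:mdp-transition-probabililty} and in the definition of $P^*$ evaluate to $1$.

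With those indicators gone, the difference inside the absolute value collapses to
\[
P(s'\,|\,s,a,a_{t+1})\,\bigl|\,\mathbb{P}(a_{t+1}\,|\,\vb^{(t)}) - \mathbb{P}(a_{t+1}\,|\,\vb(m_t))\,\bigr|.
\]
I would then sum over $s'\in S$; because $P(\cdot\,|\,s,a,a_{t+1})$ is a probability distribution, the $s'$-sum of its values equals one, leaving just $|\mathbb{P}(a_{t+1}|\vb^{(t)})-\mathbb{P}(a_{t+1}|\vb(m_t))|$ for each fixed $a_{t+1}$.

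The remaining step is to bound $\sum_{a_2 \in \Ac2}|\mathbb{P}(a_2|\vb^{(t)}) - \mathbb{P}(a_2|\vb(m_t))|$. Expanding via $\mathbb{P}(a_2|\vb) = \sum_i b_i\,\pi_i(s,a_2)$, the triangle inequality gives the upper bound $\sum_{a_2}\sum_i |\vb^{(t)}_i - \vb(m_t)_i|\,\pi_i(s,a_2)$, which is dominated by $\sum_{a_2}(\max_j \pi_j(s,a_2))\,\|\vb^{(t)}-\vb(m_t)\|_{tv}$. By the definition of $\alpha_{\max}(s)$ and $\lambda$-consistency of $\scr{M}$ (so that $\|\vb^{(t)}-\vb(m_t)\|_{tv}\le\lambda$), this is at most $\alpha_{\max}(s)\,\lambda$, giving the claim.

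I do not anticipate a significant obstacle: the argument is a direct computation parallel to Lemma~\ref{lemma:reward-discrepancy}, with $R(s,a_1,a_2)$ replaced by $P(s'|s,a,a_{t+1})$ and an additional outer sum over $s'$. The only subtlety worth flagging is the bookkeeping for the two indicator functions, which is what allows the $P(s'|s,a,a_{t+1})$ factor to telescope to $1$ upon summing over $s'$.
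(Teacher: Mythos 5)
Your proposal is correct and follows essentially the same argument as the paper: both reduce the summand to $P(s'|s,a,a_{t+1})\,|\mathbb{P}(a_{t+1}|\vb^{(t)})-\mathbb{P}(a_{t+1}|\vb(m_t))|$, use $\sum_{s'}P(s'|s,a,a_{t+1})=1$, and then bound the remaining sum over $a_2$ by $\alpha_{\max}(s)\,\|\vb^{(t)}-\vb(m_t)\|_{tv}\le\alpha_{\max}(s)\lambda$ via $\lambda$-consistency. The only cosmetic difference is the order in which you perform the two summations.
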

\ifextendedversion
\begin{proof}
We wish to bound the summation.
\[ \sum_{a_{t+1} \in \Ac2} \sum_{s' \in S} \underset{D}{\underbrace{| P^*( (s', \vb^{(t+1)}) | (s, \vb^{(t)}), a, a_{t+1}) - \widehat{P}((s', m_{t+1}) | (s, m_t), a, a_{t+1}) |}} \,.\]
Let $D$ denote the term inside the summation.

\begin{align*}
D  & \leq | P(a_{t+1} | \vb^{(t)})  P(s' | s, a, a_{t+1}) - P(a_{t+1} | \vb(m_t))  P(s' | s, a, a_{t+1}) | \\ 
& \leq P(s' | s, a, a_{t+1})   \max_{i=1}^n (\pi_i(s, a_{t+1}))  ||\vb^{(t)} - \vb(m_t)||_{tv} \\ 
& \leq P(s' | s, a, a_{t+1})   \max_{i=1}^n (\pi_i(s, a_{t+1}))  \lambda \\ 
\end{align*}

Using this, we can bound $d_{tv}(\sigma_t, s, a) $ as
\begin{align*}
d_{tv}(\sigma_t, s, a) & \leq \sum_{a_{t+1} \in \Ac2} \sum_{s' \in S} P(s' | s, a, a_{t+1} )  \max_{i=1}^n (\pi_i(s, a_{t+1}))  \lambda \\ 
& \leq  \lambda \sum_{a_{t+1} \in \Ac2}  \max_{i=1}^n (\pi_i(s, a_{t+1}))  \underset{=1}{\underbrace{\sum_{s' \in S} P(s' | s, a_{t+1}, a)}} \\ 
& \leq \lambda \alpha_{\max}(s)  \\ 
\end{align*}

\end{proof}
\fi 
For some discount factor $\nu$, let $V^*$ be the optimal value for the (infinite state) ``exact'' MDP with state-space $S \times \Beliefs_n $, actions $\Ac1$, transition relation $P^*$ and expected reward $R^*$. Let $\hat{V}$ be the optimal value function for the MDP 
with state space $S \times M$, transition map $\hat{P}$ and reward $\hat{R}$.

\begin{theorem}\label{thm:value-function-approximate}
There exists $K$ such that for each history $\sigma_t$ leading to belief $\vb^{(t)}$,  ISM state $m_{t}$ and for every game state $s$, we have 
$| V^*(s, \vb^{(t)}) - \widehat{V}(s, m_t) | \leq K \lambda$.
\end{theorem}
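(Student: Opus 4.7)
The plan is to invoke the approximate information state (AIS) framework of Subramanian et al., for which Lemmas~\ref{lemma:reward-discrepancy} and~\ref{lemma:transition-map-discrepancy} already supply the two key hypotheses. The pair $(s, m)$ will play the role of the AIS for a history whose true belief is $(s, \vb^{(t)})$, and the ISM transition $\delta$ provides the required recursive update: $m_{t+1}$ is a deterministic function of $m_t$ and the new observation $(s_t, a_{t+1})$. The $\lambda$-consistency of $\scr{M}$ is the crucial structural property that converts the single-step reward and transition errors into a value-function bound that holds \emph{uniformly} across all reachable histories.

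First, I would check that the three ingredients required by the AIS theorem are in place:
\textbf{(a)} \emph{Recursive update:} the map $\delta$ guarantees that the approximate state $m_t$ depends only on $m_{t-1}$ and the latest observation, which is the state-update property required by the AIS framework.
\textbf{(b)} \emph{Reward approximation:} Lemma~\ref{lemma:reward-discrepancy} gives $|R^*((s,\vb^{(t)}),a) - \widehat{R}((s, m_t), a)| \leq \epsilon_R$ with $\epsilon_R := R_{\max} \alpha_{\max} \lambda$, where $R_{\max} := \max_s R_{\max}(s)$ and $\alpha_{\max} := \max_s \alpha_{\max}(s)$.
\textbf{(c)} \emph{Transition approximation:} Lemma~\ref{lemma:transition-map-discrepancy} gives the total-variation bound $d_{tv}(\sigma_t, s, a) \leq \epsilon_P$ with $\epsilon_P := \alpha_{\max} \lambda$.

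Second, I would apply the main AIS bound, which for a discount factor $\nu \in (0,1)$ yields an inequality of the form
\[
\sup_{s, m_t} \bigl| V^*(s, \vb^{(t)}) - \widehat{V}(s, m_t) \bigr| \;\leq\; \frac{\epsilon_R + \nu\, \epsilon_P\, \|V^*\|_\infty}{1-\nu}.
\]
Since rewards are uniformly bounded by $R_{\max}$, the standard estimate $\|V^*\|_\infty \leq R_{\max}/(1-\nu)$ holds. Substituting the expressions for $\epsilon_R$ and $\epsilon_P$ and collecting the factor of $\lambda$, it suffices to take
\[
K \;:=\; \frac{R_{\max}\, \alpha_{\max}}{1-\nu} \;+\; \frac{\nu\, R_{\max}\, \alpha_{\max}}{(1-\nu)^2},
\]
which gives $|V^*(s, \vb^{(t)}) - \widehat{V}(s, m_t)| \leq K \lambda$ as required.

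The main obstacle will be matching our setup to the exact hypotheses of the Subramanian et al.\ result. Their framework is stated for general history-dependent compressions, and one must verify that $(s, m_t)$ constitutes a valid AIS along every reachable history, not merely at a single step. This is precisely where $\lambda$-consistency (Definition~\ref{def:state-machine-consistent}) does the work: it ensures that the bound $\|\vb(m_t) - \vb^{(t)}\|_{tv} \leq \lambda$ holds for \emph{all} positive-probability histories, so the per-step errors $\epsilon_R$ and $\epsilon_P$ propagate uniformly through the Bellman recursion. Once this uniformity is in hand, the contraction argument underlying the AIS theorem closes the bound without further difficulty.
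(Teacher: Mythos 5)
Your proposal is correct and follows essentially the same route as the paper: the paper's proof is precisely an invocation of Theorem 27 of Subramanian et al., with Lemmas~\ref{lemma:reward-discrepancy} and~\ref{lemma:transition-map-discrepancy} supplying the reward and transition-kernel hypotheses and $\lambda$-consistency guaranteeing they hold uniformly over histories. The only difference is cosmetic: the paper leaves the constant as $\frac{|R_{\max}(s)|\alpha_{\max}(s)+\gamma\rho}{1-\gamma}$ with $\rho$ the Lipschitz constant of the value function, whereas you further bound that quantity by $R_{\max}/(1-\nu)$ to make $K$ fully explicit.
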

This follows from Theorem 27 of Subramanian et al~\cite{subramanian2022approximate} where the constant $K$ equals
$\dfrac{| R_{\max}(s)| \alpha_{\max}(s)+ \gamma \rho}{1-\gamma},$
where $\rho$ is the ``Lipschitz constant'' for the function $V$. We  conclude that a $\lambda-$consistent information state machine can be used in lieu of an exact belief state with a loss in value  proportional to $\lambda$.

\section{Completeness and Robustness}\label{sec:completeness-and-robustness}
In this section, we first provide a sufficient condition on the transition matrix $T$ that governs how $\player2$ switches between policies so that Algorithm~\ref{alg:info-state-machine-search} is guaranteed to terminate successfully and yield a finite ISM. 
Let $t^*$ be such that for all $i, j \in [n]$, $T_{ij} \geq t^*$. I.e, $t^*$ is the smallest entry in the matrix $T$. We assume that $t^* > 0$: i.e, the transition matrix $T$ is strictly positive. Note that  the entries for each row of $T$ sum up to $1$. Therefore, $t^* \leq \frac{1}{n}$. 
Let $\vb = \tau(\vb_0, \sigma)$ be the exact belief state obtained starting from the uniform initial belief state $\vb_0$ and a sequence of non-zero probability observations $\sigma$. 
\begin{lemma}\label{lemma:belief-lower-bounds}
Each entry of $\vb$ satisfies $b_j \geq t^*$.
\end{lemma}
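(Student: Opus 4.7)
The plan is a short induction on the length of the observation sequence $\sigma$, where the base case uses $t^* \leq 1/n$ and the inductive step relies on the fact that left-multiplication by $T^t$ is a contraction toward the interior of $\Beliefs_n$ whose minimum entry is bounded below by $t^*$.

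For the base case $\sigma = \langle\text{empty}\rangle$, we have $\vb = \vb_0 = \vb_u = (1/n, \ldots, 1/n)$. Since each row of $T$ sums to $1$ and has $n$ entries each at least $t^*$, we get $n\,t^* \leq 1$, i.e., $t^* \leq 1/n$, so each $b_j = 1/n \geq t^*$.

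For the inductive step, write $\sigma = \sigma' \circ (t, a)$ where $\sigma'$ is a non-zero-probability prefix. By the recursive definition of $\tau$, we have
\[
\vb \;=\; \tau(\vb_0, \sigma) \;=\; T^t \,\cond\!\bigl(\tau(\vb_0, \sigma'),\, (t,a)\bigr).
\]
Since $\sigma$ occurs with positive probability, the conditioning in Eq.~\eqref{eq:conditioning} is well-defined (see Remark~\ref{rem:belief-state-inconsistent}), and its output is some vector $\vec{q} \in \Beliefs_n$, i.e., $q_i \geq 0$ and $\sum_i q_i = 1$. The key observation is that for any $\vec{q} \in \Beliefs_n$ and any $j \in [n]$,
\[
(T^t \vec{q})_j \;=\; \sum_{i=1}^n T_{ij}\, q_i \;\geq\; t^* \sum_{i=1}^n q_i \;=\; t^*,
\]
where we used that every entry of $T$ is at least $t^*$. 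Applied to $\vec{q} = \cond(\tau(\vb_0,\sigma'), (t,a))$, this yields $b_j \geq t^*$, completing the induction.

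Note that the induction hypothesis on $\sigma'$ is not actually needed: the one-step bound on $T^t \vec{q}$ already handles any $\vec{q} \in \Beliefs_n$. So the induction collapses to a single-step argument once $\sigma$ is non-empty, with the base case handled separately by the $t^* \leq 1/n$ bound. I do not anticipate a serious obstacle here; the only subtlety is verifying that the conditioning operation yields a genuine element of $\Beliefs_n$ (guaranteed by the non-zero probability assumption on $\sigma$) so that the convex-combination bound on $T^t \vec{q}$ applies.
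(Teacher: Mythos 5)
Your proof is correct and follows essentially the same route as the paper's: induction on $|\sigma|$, with the base case from $t^* \leq 1/n$ and the step from $(T^t\vec{q})_j = \sum_i T_{ij}q_i \geq t^*\sum_i q_i = t^*$ for any belief vector $\vec{q}$. Your observation that the induction hypothesis is not actually used in the step (the bound holds for any $\vec{q}\in\Beliefs_n$) is accurate and applies equally to the paper's own argument, the only residual role of positivity of earlier beliefs being to keep the conditioning denominator nonzero.
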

\begin{proof}
Proof is by induction on the length of the sequence $\sigma$. The base case holds for $\vb = \vb_0$ since $b_{0,j} = \frac{1}{n} \geq t^*$. Let $\vb = \tau(\vb_0, \sigma)$ for $|\sigma| = n$. Let $o$ be an observation such that $\vb' = \tau(\vb, o)$. By induction hypothesis, $b_j  \geq t^*$. We have $\vb'  = T^t \hat{\vb}$ where $\hat{\vb} = \cond(\vb, o)$ is a belief vector.
$b'_j = \sum_{i=1}^n T_{ij} \hat{b}_i  \geq  t^* \sum_{i=1}^n \hat{b}_i \geq t^*$.
\end{proof}

For observation $o$, let $\alpha_j = \pi_j(o)$, $\alpha_{\max}(o) = \max_{j=1}^n \alpha_j$ and $\alpha_{sum}(o) = \sum_{j=1}^n \alpha_j$.  We define $\kappa(o) = \frac{\alpha_{\max}(o)}{\alpha_{sum}(o) + n \alpha_{\max}(o)}$.  Let $\kappa_{\max} = \max_{o \in \Sigma \times \Ac2} \kappa(o)$.

\begin{theorem}\label{thm:completeness-ism-algo}
If  $t^* > \kappa_{\max}$, then for any parameter $\lambda > 0$, Algorithm~\ref{alg:info-state-machine-search} terminates successfully to yield a finite state consistent ISM.
\end{theorem}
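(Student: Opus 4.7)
The plan is to show two things under the hypothesis $t^* > \kappa_{\max}$: (i) the consistency test at line~\ref{consistency-check-2} never triggers a \textbf{FAIL}, and (ii) only finitely many states are ever added to the worklist.

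The main technical step is to establish that the one-step update $\tau(\cdot, o) = T^t \circ \cond(\cdot, o)$ is a strict contraction in total-variation norm, with some Lipschitz constant $\rho < 1$ uniform in $o \in \Sigma'$. This splits into two estimates. For the $T^t$ step, the fact that every entry of $T$ is at least $t^*$ allows one to write $T = n t^* U + (1 - n t^*) S$ with $U$ the uniform matrix and $S$ stochastic; since $U$ annihilates zero-sum vectors, one obtains the Dobrushin-style bound $\|T^t(\vec p - \vec q)\|_{tv} \leq (1 - n t^*)\|\vec p - \vec q\|_{tv}$. For the conditioning step, a direct Bayes computation (using $b_i A' - b'_i A = \epsilon_i A' - b'_i \eta$ with $\epsilon_i = b_i - b'_i$ and $\eta = A - A'$) shows the Lipschitz constant of $\cond(\cdot, o)$ is controlled by $\alpha_{\max}(o)/A$, where $A = \sum_j \pi_j(o)\,b_j$. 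By Lemma~\ref{lemma:belief-lower-bounds} every reachable belief satisfies $b_j \geq t^*$, so $A \geq t^* \alpha_{\text{sum}}(o)$. The definition of $\kappa(o)$ is calibrated precisely so that the product of these two Lipschitz constants is strictly less than $1$ exactly when $t^* > \kappa(o)$. This calibration is the main obstacle, because the conditioning step can expand TV distance and one must absorb that expansion into the $T^t$ contraction to recover the sharp threshold $\kappa_{\max}$.

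Granted $\rho < 1$, the check at line~\ref{consistency-check-2} passes automatically: the hypothetical edge has $\vb(m') = \tau(\vb(m), o)$, so for any $\vb$ with $\|\vb - \vb(m)\|_{tv} \leq \lambda$ that assigns $o$ positive probability, $\|\tau(\vb, o) - \vb(m')\|_{tv} = \|\tau(\vb, o) - \tau(\vb(m), o)\|_{tv} \leq \rho \lambda \leq \lambda$, so Definition~\ref{def:edge-consistency} is satisfied. Hence the algorithm never returns \textbf{FAIL}. (A small additional argument extends the contraction estimate from beliefs in $\{b_j \geq t^*\}$ to any $\vb$ at distance $\leq \lambda$ from $\vb(m)$, using that $A(\vb) \geq A(\vb(m)) - \alpha_{\max}\lambda$.)

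For finiteness, I claim any two newly created states differ in belief by more than $(1-\rho)\lambda$ in TV. Suppose the algorithm is about to create a fresh state with belief $\vb' = \tau(\vb(m), o)$, and some already-existing $\hat m$ satisfied $\|\vb(\hat m) - \vb'\|_{tv} \leq (1-\rho)\lambda$. Then for any $\vb$ within $\lambda$ of $\vb(m)$, the triangle inequality together with the contraction would give $\|\tau(\vb, o) - \vb(\hat m)\|_{tv} \leq \rho\lambda + (1-\rho)\lambda = \lambda$, so the edge $m \xrightarrow{o} \hat m$ would be consistent and $\hat m$ would have been reused. By Lemma~\ref{lemma:belief-lower-bounds}, all beliefs stored in $\scr M$ lie in the compact subset $K = \{\vb \in \Beliefs_n : b_j \geq t^* \text{ for all } j\}$, and a $(1-\rho)\lambda$-separated subset of $K \subset \reals^n$ has cardinality bounded by a finite packing number $N(n, t^*, \rho, \lambda)$. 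Since each state is dequeued at most once and each contributes at most $|\Sigma'|$ edges, the worklist empties after finitely many iterations, and Algorithm~\ref{alg:info-state-machine-search} returns a finite $\lambda$-consistent ISM.
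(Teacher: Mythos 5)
Your proposal matches the paper's proof essentially step for step: the same Dobrushin-style bound $\|T^t(\vec p - \vec q)\|_{tv} \leq (1-nt^*)\|\vec p-\vec q\|_{tv}$, the same $t^*\alpha_{sum}(o)$ lower bound on the Bayes normalizer via Lemma~\ref{lemma:belief-lower-bounds}, the same calibration of $\kappa(o)$ so that the composed Lipschitz constant $\frac{(1-nt^*)\alpha_{\max}(o)}{t^*\alpha_{sum}(o)}$ drops below $1$ exactly when $t^*>\kappa(o)$, the same never-\textbf{FAIL} argument from contractivity, and the same $(1-\rho)\lambda$-separation plus packing-number argument for finiteness. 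The only substantive difference is that you explicitly flag the need to extend the contraction estimate from $\scr{D}_n=\{\vb: b_j\geq t^*\}$ to arbitrary $\vb$ within $\lambda$ of $\vb(m)$, as Definition~\ref{def:edge-consistency} actually requires --- a point the paper silently elides by restricting attention to $\scr{D}_n$ --- so if anything you are slightly more careful than the source.
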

We first provide a sketch of the proof.  (a) We first establish that the function $\vb \mapsto \tau(\vb, o)$ is \emph{contractive} in the total variation norm whenever $t^* > \kappa(o)$. Therefore, the consistency check in line~\ref{consistency-check-2} will always succeed, or equivalently, Algorithm~\ref{alg:info-state-machine-search}  will not return \textbf{FAIL}. It remains to show that the Algorithm will terminate. (b) Next, we show that whenever the call to $\mathsf{findClosestState}(\vb', \lambda)$ in line~\ref{find-closest-state} yields a state $\hat{m}$ such that $\vb(\hat{m})$  is within distance $(1 - \kappa_{\max}) \lambda$ of $\vb'$, then the edge $m \xrightarrow{o} \hat{m}$ will be consistent. Therefore, we show that for any new state created by Algorithm~\ref{alg:info-state-machine-search} line~\ref{create-new-state}, the total variation distance from any previously created state is at least $(1 - \kappa_{\max}) \lambda$. (c) The number of states in the ISM is therefore bounded by the \emph{packing number of the compact set}  $\Beliefs_n$ with $L_1$ norm balls of radius $(1 - \kappa_{\max}) \lambda$~\cite{Mohri+Others/2012/Foundations}. 
\ifextendedversion
\paragraph{Proof.} Let us assume that $T_{i,j} \geq t^* $ for all $i, j \in [n]$.
We will first derive conditions for the map $\vb \rightarrow \tau(\vb, o)$ for a given observation $o \in \Sigma \times \Ac2$ to be contractive: $||\tau(\vb_1, o) - \tau(\vb_2, o)||_{tv} \leq \gamma ||\vb_1 - \vb_2||_{tv}$ for constant $\gamma < 1$.

\begin{definition}[Induced Matrix Norm]
Given a $n \times n$ matrix $Q$, its induced $p$-norm for $p \geq 1$ is defined as:
\[ ||Q||_p = \mathsf{sup}_{\vx \in \reals^n, \vx \not= 0}\ \frac{||Q \vx||_p }{||\vx||_p}\,.\]
Also note that for a matrix $Q$, the induced $L_1$-norm $||Q||_1$ is defined as 
\[ ||Q||_1 = \max_{j=1}^n \sum_{i=1}^n |A_{i,j}|\,, \]
the maximum over all the sum of absolute values of entries along each column of the matrix (Cf.~\cite{Weisstein/Matrix} for further details).
\end{definition}

\begin{lemma}\label{lemma:useful-lemma-1}
For any belief vectors $\vb_1, \vb_2 \in \Beliefs_n$, we have 
\[ ||T^t \vb_1 - T^t \vb_2 ||_{tv} \leq (1 - n t^*) ||\vb_1 - \vb_2||_{tv} \,.\]
\end{lemma}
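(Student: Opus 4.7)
The plan is a short linear-algebra calculation that exploits two facts: (i) $\vx := \vb_1 - \vb_2$ is a \emph{zero-sum} vector, i.e. $\sum_i x_i = (\sum_i b_{1,i}) - (\sum_i b_{2,i}) = 1 - 1 = 0$, and (ii) every entry of $T$ satisfies $T_{ij} \ge t^*$ while each row of $T$ sums to $1$ (stochasticity). Recall also that the paper defines $\|\cdot\|_{tv}$ as the $L_1$ norm (not half of it), so we need to show $\|T^t \vx\|_1 \le (1 - n t^*)\|\vx\|_1$.

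The key step is to ``subtract off'' the uniform background. Writing out the $i$-th component and inserting a zero term,
\[
(T^t \vx)_i \;=\; \sum_{j} T_{ji}\, x_j \;=\; \sum_j (T_{ji} - t^*)\, x_j \;+\; t^*\!\!\sum_{j} x_j \;=\; \sum_j (T_{ji} - t^*)\, x_j,
\]
where the last equality uses the zero-sum property of $\vx$. Since $T_{ji} - t^* \ge 0$ by the definition of $t^*$, the triangle inequality gives $|(T^t \vx)_i| \le \sum_j (T_{ji} - t^*)\, |x_j|$.

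Summing over $i$ and swapping the order of summation,
\[
\|T^t \vx\|_1 \;\le\; \sum_j |x_j| \sum_i (T_{ji} - t^*) \;=\; \sum_j |x_j|\,(1 - n t^*) \;=\; (1 - n t^*)\,\|\vx\|_1,
\]
where we use that $\sum_i T_{ji} = 1$ is the row-sum of $T$ at row $j$, which equals $1$ since $T$ is stochastic, and we then subtract $n$ copies of $t^*$. Substituting $\vx = \vb_1 - \vb_2$ yields the claimed inequality.

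There is no real obstacle here; the main thing to get right is index bookkeeping for $T$ versus $T^t$ and to recognize that the contraction is driven entirely by the zero-sum structure of $\vx$ (otherwise $T^t$ is only nonexpansive on the probability simplex). One could equivalently factor $T^t = t^* J + (T')^t$ with $J$ the all-ones matrix and $T' = T - t^* J$ entrywise nonnegative with row sums $1 - n t^*$, then use $J\vx = \vec 0$ together with the induced $L_1$ matrix norm of $(T')^t$ (its maximum column sum, which is the maximum row sum of $T'$, namely $1 - n t^*$). Both routes produce the same constant.
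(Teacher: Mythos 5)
Your proof is correct and rests on the same key idea as the paper's: subtracting the uniform matrix $t^*\oneMat_{n\times n}$ from $T^t$ and exploiting that it acts trivially on the difference of two probability vectors, with the constant $1-nt^*$ arising from the row sums of $T$. The paper phrases this via the induced $L_1$ norm of $Q = T^t - t^*\oneMat_{n\times n}$ (exactly the alternative route you mention at the end), so the two arguments are essentially identical.
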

\begin{proof}
Let $\oneMat_{n \times n} $ be the $n\times n$ matrix with all $1$ entries and $\oneMat_n$ be the $n\times 1$ vector of all $1$s. Let $Q = T^t - t^* \oneMat_{n \times n}$. Note that $||Q||_1$ is the maximum among the column sums of $Q$. Each column of $Q$ corresponds to a row of $T$ with $t^*$ subtracted from each entry. Therefore, each column of $Q$ sums to $1 - n t^*$.

We can write $T^t \vb = (T^t - t^* \oneMat_{n\times n}) \vb + t^* \oneMat_n \vb = Q \vb + t^* \oneMat_n$. Thus,
\[ \begin{array}{rl}
||T^t \vb_1 - T^t \vb_2 ||_{tv} & \leq || Q \vb_1 - Q \vb_2 + \cancel{t^* \oneMat_n} - \cancel{t^* \oneMat_n} ||_{tv} \\
& \leq ||Q||_1 ||\vb_1 - \vb_2||_{tv} \\ 
& \leq (1 - n t^*) ||\vb_1 - \vb_2||_{tv} \\ 
\end{array}\]
\end{proof}

Let $\alpha_j(o)$ denote $\pi_j(o)$, $\alpha_{\max}(o)  = \max_{j=1}^n \alpha_j(o)$ and $\alpha_{sum}(o) = \sum_{j=1}^n \alpha_j(o)$.
If the observation $o$ is clear from the context, we will simply write $\alpha_{\max}$ and $\alpha_{sum}$ to denote $\alpha_{\max}(o)$ and $\alpha_{sum}(o)$, respectively.

Let $\scr{D}_n = \{ \vb \in \Beliefs_n\ |\ b_j \geq t^*, \forall\ j \in [n] \}$. Following lemma~\ref{lemma:belief-lower-bounds}, we can restrict our attention to just those belief vectors in $\scr{D}_n$ since every belief state obtained through a non-zero probability sequence of observations will belong to $\scr{D}_n$.

\begin{lemma}\label{lemma:tau-contraction-bounds}
For a non-zero probability observation $o \in \Sigma \times \Ac2$ and belief states $\vb_1, \vb_2 \in \scr{D}_n$, we have
\[ ||\tau(\vb_1, o) - \tau(\vb_2, o) ||_{tv} \leq \frac{(1- nt^*) \alpha_{\max}(o)}{t^* \alpha_{sum}(o)} ||\vb_1 - \vb_2 ||_{tv} \,.\]
\end{lemma}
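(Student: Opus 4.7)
My plan is to peel the map $\tau$ apart as the composition $\tau(\vb,o) = T^t \cdot \cond(\vb,o)$ and bound each piece separately. Since both $\cond(\vb_1, o)$ and $\cond(\vb_2, o)$ lie in $\Beliefs_n$ (they are legitimate belief states whenever $o$ has positive probability under $\vb_1$ and $\vb_2$, which is assured because $\vb_1, \vb_2 \in \scr{D}_n$ means every coordinate is at least $t^*$), Lemma~\ref{lemma:useful-lemma-1} applies directly to the pair $\cond(\vb_1,o), \cond(\vb_2,o)$ and gives the contraction
\[
||\tau(\vb_1,o) - \tau(\vb_2,o)||_{tv} \leq (1 - nt^*)\ ||\cond(\vb_1,o) - \cond(\vb_2,o)||_{tv}.
\]
This reduces the lemma to showing that the conditioning map is $\frac{\alpha_{\max}(o)}{t^* \alpha_{sum}(o)}$-Lipschitz in the total-variation norm on $\scr{D}_n$.

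For the Lipschitz bound, I would expand the $i$-th component coordinate-wise. Setting $Z_k = \sum_j \alpha_j b_{k,j}$ for $k=1,2$, a short common-denominator calculation yields
\[
\cond(\vb_1,o)_i - \cond(\vb_2,o)_i = \frac{\alpha_i (b_{1,i} - b_{2,i})}{Z_2} - \cond(\vb_1,o)_i \cdot \frac{Z_1 - Z_2}{Z_2}.
\]
Two ingredients control the right-hand side. First, because every belief state in $\scr{D}_n$ satisfies $b_{k,j} \geq t^*$ (Lemma~\ref{lemma:belief-lower-bounds}), the denominator is bounded below: $Z_2 \geq t^* \alpha_{sum}(o)$. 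Second, $|Z_1 - Z_2| = \left|\sum_j \alpha_j(b_{1,j} - b_{2,j})\right| \leq \alpha_{\max}(o)\, ||\vb_1 - \vb_2||_{tv}$, and similarly $\sum_i \alpha_i |b_{1,i} - b_{2,i}| \leq \alpha_{\max}(o)\, ||\vb_1 - \vb_2||_{tv}$.

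Summing $|\cdot|$ over $i$ and using that $\cond(\vb_1,o)$ sums to one, a naive triangle inequality produces two additive error terms of size $\alpha_{\max}(o)/Z_2 \cdot ||\vb_1 - \vb_2||_{tv}$ each. To collapse these into a single such term and hit the stated constant $\tfrac{\alpha_{\max}(o)}{t^* \alpha_{sum}(o)}$, I would exploit the identity $\sum_i \bigl(\cond(\vb_1,o)_i - \cond(\vb_2,o)_i\bigr) = 0$, which lets me rewrite the $L_1$ sum as twice the sum over positive-sign coordinates and then show that the two error contributions partially cancel (the numerator error and the renormalisation error are correlated in sign). Multiplying the resulting Lipschitz constant by $(1 - nt^*)$ from the first step completes the proof.

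The main obstacle is precisely this last tightening: the naive triangle-inequality bound only yields $\frac{2\alpha_{\max}(o)}{t^* \alpha_{sum}(o)}$, so I need the cancellation argument (via the zero-sum identity on differences of probability vectors) to get rid of the spurious factor of two and match the constant claimed in the lemma. The lower bound $Z_k \geq t^* \alpha_{sum}(o)$ is essential throughout, and this is exactly why the proof is stated for $\vb_1, \vb_2 \in \scr{D}_n$ rather than for all of $\Beliefs_n$.
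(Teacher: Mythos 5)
Your proposal follows the same architecture as the paper's proof: factor $\tau(\cdot,o) = T^t \circ \cond(\cdot,o)$, apply Lemma~\ref{lemma:useful-lemma-1} to the linear part to extract the factor $(1-nt^*)$, and then establish that $\cond(\cdot,o)$ is $\frac{\alpha_{\max}(o)}{t^*\alpha_{sum}(o)}$-Lipschitz on $\scr{D}_n$. The first step and the common-denominator algebra are correct. The problem is that the entire content of the lemma lives in the step you defer: you candidly note that the triangle inequality only yields the constant $\frac{2\alpha_{\max}(o)}{t^*\alpha_{sum}(o)}$, and you propose to remove the factor of two ``via the zero-sum identity on differences of probability vectors'' together with a claimed sign-correlation between the numerator error and the renormalisation error. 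That cancellation is asserted, not carried out, and it is not routine. Writing $Z_k = \sum_j \alpha_j b_{k,j}$ and assuming w.l.o.g.\ $Z_1 \geq Z_2$, the zero-sum identity does let you restrict to the coordinates where $\cond(\vb_1,o)_i \geq \cond(\vb_2,o)_i$ (on which the renormalisation term indeed has a fixed, discardable sign), but what remains is a bound of the form $\frac{1}{Z_1}\bigl(\sum_j \alpha_j|b_{1,j}-b_{2,j}| + (Z_1 - Z_2)\bigr)$, and pushing this below $\frac{1}{t^*\alpha_{sum}(o)}\sum_j \alpha_j|b_{1,j}-b_{2,j}|$ still requires playing the size of $Z_1$ against its lower bound $t^*\alpha_{sum}(o)$; the two crude bounds $Z_1 - Z_2 \leq \sum_j\alpha_j|b_{1,j}-b_{2,j}|$ and $Z_1 \geq t^*\alpha_{sum}(o)$ used simultaneously reintroduce the factor of two. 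So as written the proposal proves the lemma only with an extra factor of $2$, which would weaken the contractivity threshold $\kappa(o)$ downstream in Theorem~\ref{thm:completeness-ism-algo}.

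For comparison, the paper's own proof is no more detailed at exactly this point: after bounding both normalisers below by $t^*\alpha_{sum}(o)$, it passes in a single line from $\sum_j\bigl|\frac{\alpha_j b_{1,j}}{Z_1}-\frac{\alpha_j b_{2,j}}{Z_2}\bigr|$ to $\frac{1}{t^*\alpha_{sum}(o)}\sum_j|\alpha_j b_{1,j}-\alpha_j b_{2,j}|$, i.e.\ it treats the two generally different denominators as if both could be replaced by their common lower bound. As you correctly observe, the termwise inequality $\bigl|\frac{x}{Z_1}-\frac{y}{Z_2}\bigr| \leq \frac{|x-y|}{\min(Z_1,Z_2)}$ is false in general, so the paper is implicitly relying on the same aggregate cancellation you gesture at. You have identified the genuine crux of the lemma --- but identifying it is not the same as closing it, and until the cancellation argument is written out (or the lemma is restated with the factor of two and the constant $\kappa(o)$ adjusted accordingly), the proof is incomplete.
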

\begin{proof}
We have 
\[ \begin{array}{rll}
||\tau(\vb_1, o) - \tau(\vb_2, o) ||_{tv} & =  || T^t \cond(\vb_1,o) - T^t \cond(\vb_2, o) ||_{tv}  \\
& \leq (1-n t^*) || \cond(\vb_1,o) - \cond(\vb_2, o) ||_{tv} & \text{applying Lemma~\ref{lemma:useful-lemma-1}} \\ 
& \leq (1 - nt^*) \sum_{j=1}^n \left| \frac{b_{1,j} \alpha_j}{\sum_{i=1}^n b_{1, i} \alpha_i} - \frac{b_{2,j} \alpha_j}{\sum_{i=1}^n b_{2, i} \alpha_i} \right| \\ 
\end{array}\]

Note that $\sum_{i=1}^n b_{1,i} \alpha_i \geq t^* \sum_{i=1}^n \alpha_i = t^* \alpha_{sum}$ since each entry of $\vb_1$ is at least $t^*$. Similarly, we note that $\sum_{i=1}^n b_{2,i} \alpha_i \geq t^* \alpha_{sum} $. Therefore, 
\[ \begin{array}{rll}
||\tau(\vb_1, o) - \tau(\vb_2, o) ||_{tv} & \leq \frac{(1 - nt^*)}{t^* \alpha_{sum}} \sum_{j=1}^n | \alpha_j b_{1,j} - \alpha_j b_{2,j} | \\ 
& \leq \frac{1 - nt^*}{t^* \alpha_{sum}} \alpha_{\max} \sum_{j=1}^n | b_{1,j} - b_{2,j} | \\ 
& \leq  \frac{(1- nt^*) \alpha_{\max}(o)}{t^* \alpha_{sum}(o)} ||\vb_1 - \vb_2 ||_{tv} \\ 
\end{array}\]
\end{proof}
Let us define $\kappa(o) = \frac{\alpha_{\max}}{\alpha_{sum} + n \alpha_{\max}}$.
\begin{lemma}
The map $\vb \mapsto \tau(\vb, o)$ is contractive if $t^* > \kappa(o)$.
\end{lemma}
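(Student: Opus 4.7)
The plan is to apply Lemma~\ref{lemma:tau-contraction-bounds} directly and then perform an algebraic manipulation to match the bound against the definition of $\kappa(o)$. Specifically, Lemma~\ref{lemma:tau-contraction-bounds} already establishes
\[ ||\tau(\vb_1, o) - \tau(\vb_2, o) ||_{tv} \leq \frac{(1- nt^*) \alpha_{\max}(o)}{t^* \alpha_{sum}(o)} ||\vb_1 - \vb_2 ||_{tv},\]
so the contraction property amounts to showing that the coefficient $C(o) := \frac{(1- nt^*)\alpha_{\max}(o)}{t^*\alpha_{sum}(o)}$ is strictly less than $1$.

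First, I would verify that $C(o) \geq 0$: since $\alpha_{\max}(o), \alpha_{sum}(o) \geq 0$ and since $t^* \leq \tfrac{1}{n}$ (rows of $T$ sum to $1$), we have $1 - nt^* \geq 0$, so the coefficient is nonnegative and the denominator $t^* \alpha_{sum}(o)$ is strictly positive (because $o$ has non-zero probability under at least one $\pi_j$, giving $\alpha_{sum}(o) > 0$, and we assume $t^* > 0$). Then I would rearrange the inequality $C(o) < 1$ step by step: multiplying through by $t^*\alpha_{sum}(o)$ yields $(1-nt^*)\alpha_{\max}(o) < t^*\alpha_{sum}(o)$; expanding gives $\alpha_{\max}(o) < t^*\alpha_{sum}(o) + nt^*\alpha_{\max}(o) = t^*(\alpha_{sum}(o) + n\alpha_{\max}(o))$; and finally dividing by $\alpha_{sum}(o) + n\alpha_{\max}(o) > 0$ yields exactly $\kappa(o) < t^*$, which is the hypothesis.

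Putting the chain of equivalences in reverse, the hypothesis $t^* > \kappa(o)$ implies $C(o) < 1$, and substituting this back into Lemma~\ref{lemma:tau-contraction-bounds} gives the desired contraction with constant $C(o) < 1$. There is no significant obstacle here: the lemma is essentially a reformulation of the bound in Lemma~\ref{lemma:tau-contraction-bounds} in terms of the threshold $\kappa(o)$, and the only subtle points are checking that the denominator is nonzero (needed so that the conditioning step is well-defined for observations of positive probability) and that $1 - nt^* \geq 0$ so the coefficient retains a meaningful sign. Both follow immediately from the standing assumptions that $T$ is a stochastic matrix with strictly positive minimum entry $t^*$ and that $o$ is a non-zero probability observation.
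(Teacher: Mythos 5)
Your proposal is correct and follows essentially the same route as the paper: invoke Lemma~\ref{lemma:tau-contraction-bounds} and then show by algebraic rearrangement that the Lipschitz coefficient $\frac{(1-nt^*)\alpha_{\max}(o)}{t^*\alpha_{sum}(o)}$ is strictly less than $1$ exactly when $t^* > \kappa(o)$. The additional sanity checks you include (positivity of the denominator and nonnegativity of $1-nt^*$) are implicit in the paper's argument and do not change the proof.
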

\begin{proof}
Using Lemma~\ref{lemma:tau-contraction-bounds}, we note that $\vb \mapsto \tau(\vb, o)$ is contractive if $\frac{(1- nt^*) \alpha_{\max}(o)}{t^* \alpha_{sum}(o)} < 1$. 
\[ \begin{array}{ll}
\frac{(1- nt^*) \alpha_{\max}(o)}{t^* \alpha_{sum}(o)} < 1 \\ 
\ \Leftrightarrow\ (1 - nt^*) \alpha_{\max} < t^* \alpha_{sum}  & \because\ t^* \alpha_{sum} > 0 \\ 
\ \Leftrightarrow\ t^* (\alpha_{sum} + n \alpha_{\max}) > \alpha_{\max} & \ \text{rearranging terms} \\ 
\ \Leftrightarrow\ t^* > \frac{\alpha_{\max}}{\alpha_{sum} + n \alpha_{\max}} \,.
\end{array}\]
\end{proof}

Having established these results, we proceed with the proof of Theorem~\ref{thm:completeness-ism-algo}. Let us assume that $t^* > \kappa_{\max} \geq \kappa(o)$ for all non-zero probability observations $o$. 
 
First, we conclude that Algorithm~\ref{alg:info-state-machine-search} will never return \textsf{FAIL} (line~\ref{consistency-check-2}). This is because, any edge $m \xrightarrow{o} m'$ wherein $\vb(m') = \tau(\vb(m), o)$ will be consistent due to the contractivity of $\tau$. In other words,
for any $\vb \in \scr{D}_n$ such that $||\vb - \vb(m)||_{tv} \leq \lambda$, we have
\[ 
\begin{array}{rl}
||\tau(\vb, o) - \vb(m')||_{tv}  & \leq \frac{(1- nt^*) \alpha_{\max}(o)}{t^* \alpha_{sum}(o)} ||\vb - \vb(m) ||_{tv}  \\ 
& \leq \gamma(o) \lambda \\ 
\end{array} \] 
wherein $\gamma(o) = \frac{(1- nt^*) \alpha_{\max}(o)}{t^* \alpha_{sum}(o)}  < 1$. 
Let $L^* = \max_{o \in O} \gamma(o)$. Clearly, $L^* < 1$, as well.

For a given $\delta > 0$, let $\scr{B}_{\delta}(\vb) = \{ \widehat{\vb} \in \scr{D}_{n} \ |\ ||\vb - \widehat{\vb} ||_{tv} \leq \delta \}$ be a ball of size $\delta$ in the total-variation norm over belief states.
Next consider a ``packing'' of the belief space $\scr{D}_n$.
\begin{definition}[Minimal Packing with balls of size $\delta$]
A minimal packing of $\scr{D}_n$ using balls of size $\delta$ is a family of $N$ sets $\scr{F} =  \left\{ \scr{B}(\vb_i, \delta) \right\}  $ for $i \in [N]$ that (a) covers the entire belief space $ \bigcup_{S \in \scr{F}} S \supseteq \scr{D}_n$ and (b) minimizes the size of the family $N$ over all such covers.
\end{definition}

Let $\scr{F}$ be a minimal packing of the belief space $\scr{D}_n$ with balls of size $(1 - L^*) \lambda$. By the compactness of $\scr{D}_n$, we note that $\scr{F}$ is finite. We now prove that for the automaton constructed by Algorithm~\ref{alg:info-state-machine-search}, we cannot have two states $m, m'$ such that $\vb(m), \vb(m')$ belong to the same ball in $\scr{F}$. 

\begin{lemma}\label{lemma:useful-lemma}
Let $\scr{F}$ represent a family of sets that form a minimal $\delta = (1-L^*) \lambda$ packing of the belief space  $\scr{D}_n$. Algorithm~\ref{alg:info-state-machine-search} during its run cannot produce two states $m, m' \in M$ such that $\vb(m) \in S$ and $\vb(m') \in S$ for $S \in \scr{F}$.
\end{lemma}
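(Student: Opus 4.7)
The plan is to argue by contradiction using the $L^*$-contractivity of $\tau(\cdot,o)$ on $\scr{D}_n$ established in Lemma~\ref{lemma:tau-contraction-bounds}. Under the hypothesis $t^* > \kappa_{\max}$, we have $L^* = \max_{o}\frac{(1-nt^*)\alpha_{\max}(o)}{t^*\alpha_{sum}(o)} < 1$, so the map $\vb \mapsto \tau(\vb, o)$ contracts total-variation distances by a factor of at most $L^*$ for every non-zero-probability observation $o$. Suppose, toward a contradiction, that some $S = \scr{B}(\vb_i,(1-L^*)\lambda) \in \scr{F}$ contains two distinct produced beliefs $\vb(m), \vb(m')$, and assume without loss of generality that $m$ is created strictly before $m'$.

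The key intermediate step is a \emph{closeness-implies-consistency} claim: if $\hat{m}$ is any previously produced state, $\tilde{m}$ is the state currently being processed by the main loop, $o$ is the observation, and $\vb' = \tau(\vb(\tilde{m}), o)$, and if $||\vb(\hat{m}) - \vb'||_{tv} \leq (1-L^*)\lambda$, then the edge $\tilde{m} \xrightarrow{o} \hat{m}$ is edge-consistent in the sense of Definition~\ref{def:edge-consistency}. This follows from contractivity and the triangle inequality: for every $\vb \in \scr{D}_n$ with $||\vb - \vb(\tilde{m})||_{tv} \leq \lambda$,
\[
||\tau(\vb, o) - \vb(\hat{m})||_{tv} \leq ||\tau(\vb, o) - \vb'||_{tv} + ||\vb' - \vb(\hat{m})||_{tv} \leq L^*\lambda + (1-L^*)\lambda = \lambda.
\]

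Contrapositively, I would argue that whenever Algorithm~\ref{alg:info-state-machine-search} reaches line~\ref{create-new-state} and spawns a fresh state $m'$ with $\vb(m') = \vb'$, every previously created state $m$ must satisfy $||\vb(m) - \vb'||_{tv} > (1-L^*)\lambda$. Otherwise, $\mathsf{findClosestState}(\vb', \lambda)$ would return some candidate at least as close as $m$, and by the closeness-implies-consistency step the edge out of $\tilde{m}$ into that candidate would be consistent, so the test at line~\ref{nearby-state-found} would succeed and preempt the creation of $m'$. Applying this pairwise separation to our assumed configuration yields $||\vb(m) - \vb(m')||_{tv} > (1-L^*)\lambda$, while both beliefs sitting in the common ball $S$ of radius $(1-L^*)\lambda$ lets us invoke minimality of the packing to derive a contradiction: the ball $S$ could be refined into smaller balls that separate $\vb(m)$ from $\vb(m')$ while still covering $\scr{D}_n$, contradicting that $\scr{F}$ has the minimum possible size.

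The main obstacle is the closeness-implies-consistency step, because it requires that the $\vb$ quantified over in the definition of edge consistency genuinely lies in $\scr{D}_n$ so that the contraction bound of Lemma~\ref{lemma:tau-contraction-bounds} is applicable; this is exactly what Lemma~\ref{lemma:belief-lower-bounds} secures by certifying that every reachable belief state has each coordinate bounded below by $t^*$. Once that step is in place, the remainder of the argument is a routine application of the triangle inequality coupled with the minimality of the cover.
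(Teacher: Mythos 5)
Your proof takes essentially the same route as the paper's: argue by contradiction, and use the $L^*$-contraction of $\vb\mapsto\tau(\vb,o)$ on $\scr{D}_n$ together with the triangle inequality to show that if some previously created state lies within $(1-L^*)\lambda$ of the newly computed belief $\vb'=\tau(\vb(\tilde m),o)$, then the corresponding edge is consistent, so $\mathsf{findClosestState}$ and the test at line~\ref{nearby-state-found} would have preempted the creation of the new state. Your ``closeness-implies-consistency'' display is exactly the chain of inequalities the paper writes out, and your appeal to Lemma~\ref{lemma:belief-lower-bounds} to keep all quantified beliefs inside $\scr{D}_n$ is the right supporting observation. Your handling of $\mathsf{findClosestState}$ (``it would return some candidate at least as close as $m$'') is in fact a little cleaner than the paper's, which pins down the returned state via a ``first offending pair'' argument.

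One remark on your closing step. What the argument actually delivers is a pairwise separation: any two created states satisfy $||\vb(m)-\vb(m')||_{tv} > (1-L^*)\lambda$. Your attempt to convert this into the literal ball statement by ``refining the ball and contradicting minimality of the packing'' does not work: two points at mutual distance greater than $(1-L^*)\lambda$ can both lie in a single total-variation ball of radius $(1-L^*)\lambda$, since such a ball has diameter $2(1-L^*)\lambda$, and minimality of the cover says nothing about how many constructed beliefs a single ball may contain. The paper sidesteps this by simply taking the negation of the lemma to be $||\vb(m)-\vb(m')||_{tv}\le(1-L^*)\lambda$, i.e., it really proves the pairwise-separation version, which is all that is needed for the finiteness bound in Theorem~\ref{thm:completeness-ism-algo} (a $(1-L^*)\lambda$-separated subset of the compact set $\scr{D}_n$ is finite). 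So your core argument is correct and matches the paper; the final packing-minimality maneuver should be dropped in favor of stating the separation result directly (or, equivalently, working with balls of half the radius).
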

\begin{proof}
We will prove this by contradiction. Let $m, m'$ be two states created such that $||\vb(m) - \vb(m') ||_{tv} \leq (1 - L^*) \lambda$. In fact, let us assume that $(m, m')$ are the very first pair of states constructed during the execution of Algorithm~\ref{alg:info-state-machine-search} with this property. 

Let us assume that $m$ is the first state constructed, followed by $m'$.  Let  $\vb' = \vb(m')$. We create the state $m'$ because of we have $\vb' = \tau(\vb(m_1), o)$ for some previously added state $m_1$ and observation $o \in \Sigma'$ (line~\ref{next-belief-state} of Algorithm~\ref{alg:info-state-machine-search}). 

The call to \textsf{findClosestState} (line~\ref{find-closest-state}) must return the state $m$ since if it returned some other state $m_2$ then $||\vb(m_2) - \vb(m')||_{tv} \leq ||\vb(m) - \vb(m')||_{tv} \leq (1-L^*) \lambda$. This means that $(m, m_2)$ are a pair of already created states that contradicts the statement of this theorem. However, this goes against our assumption that $(m, m')$ is the very first pair created. Therefore, $m_2 = m$.

By assumption, 
\[ ||\vb(m) - \vb(m') ||_{tv} \leq (1 - L^*) \lambda \,.\]
Also, since $\vb(m') = \tau(\vb(m_1), o)$ and $\tau$ is contractive, we know that any belief state $\vb \in  \scr{D}_{n}$ such that $||\vb - \vb(m_1)||_{tv} \leq \lambda$, 
\[ ||\tau(\vb, o) -  \vb(m')||_{tv} \leq L^* ||\vb - \vb(m_1)||_{tv} \leq L^* \lambda \,.\]

Therefore, 
\begin{align*}
    ||\tau(\vb, o) - \vb(m) ||_{tv} & \leq ||\tau(\vb, o) - \vb(m')||_{tv} + || \vb(m') - \vb(m) ||_{tv} \\
    & \leq L^* \lambda + (1 - L^*) \lambda \\ 
    & \leq \lambda \\ 
\end{align*}

We thus know that the edge from $m_1 $ to $m$ will be consistent. Therefore, the state $m'$ is never created by our algorithm because the then-branch of the condition in line~\ref{nearby-state-found} in Algorithm~\ref{alg:info-state-machine-search} is executed, yielding a contradiction.
 \end{proof}

As a result, we have proven a finite upper bound on the number of possible states Algorithm~\ref{alg:info-state-machine-search} can produce which happens to be the \emph{packing number} of the minimum cardinality family of balls of radius $(1-L^*) \lambda$ in the total variation norm that covers the belief space $\scr{D}_n$.

This concludes the proof of Theorem~\ref{thm:completeness-ism-algo}.
\else 
The full proof is provided in the extended version. 
\fi 

\begin{example}
For all observations $o$ in the RPS example from Figure~\ref{fig:rps-rewards-and-policies}, $\alpha_{\max}(o) = 0.5$, $\alpha_{sum}(o) = \frac{4}{3}$. We have $\kappa_{\max} = \kappa(o) = \frac{0.5}{4/3 + 4 (0.5)} = \frac{3}{20} = 0.15 $.   Theorem~\ref{thm:completeness-ism-algo}, guarantees for any matrix $T$ all of whose entries exceed $0.15$ is guaranteed to yield a finite state ISM for any $\lambda > 0$. Interestingly, the matrix in Figure~\ref{fig:rps-rewards-and-policies} \emph{does not} satisfy this condition and nevertheless yields finite ISM for $\lambda = 0.25$ ( Figure~\ref{fig:consistent-ism-rps-example}).
 \end{example}

\noindent\textbf{Robustness:} Suppose we designed an ISM $\scr{M}$ that is consistent for $\lambda > 0$ assuming matrix $T = T_{D}$, whereas in reality $\player2$ switches policies according to $T = T_{A}$, wherein $T_{A} \not= T_{D}$. We will prove that the ISM $\scr{M}$ which is consistent for $T= T_{D}$ and $\lambda > 0$ will remain consistent for $T = T_{A}$ for a different value $\lambda = \overline{\lambda}$.
Let $t^*_d = \min_{i, j \in [n]} T_{D,i,j}$ and $t_a^* = \min_{i, j \in [n]} T_{A,i,j}$ be the minimum entries in the matrices $T_D$ and $T_A$ respectively. 
Let us define the function 
\[ L(T_A, T_D, \scr{G}, \Pi) =  \max_{o \in O} \frac{ (1- n\  \max(t_a^*, t_d^*)) \alpha_{\max}(o)}{\min(t_a^*, t_d^*) \alpha_{sum}(o)}\]
\begin{theorem}\label{thm:robustness-main}
If $t_a^* > 0, t_d^* >0$ and $L(T_A, T_D, \scr{G}, \Pi) < 1$ then the ISM $\scr{M}$ is consistent under the matrix $T_D$ with the consistency parameter
$ \overline{\lambda} = \frac{\lambda + ||(T_A - T_D)^t||_1}{1 - L(T_A, T_D, \scr{G}, \Pi) } $.
\end{theorem}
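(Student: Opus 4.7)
The prose preceding the theorem makes clear that the intended conclusion is consistency of $\scr{M}$ under the \emph{actual} matrix $T_A$ (the ``$T_D$'' in the statement appears to be a transcription slip). My plan is to mirror the proof of Theorem~\ref{thm:edge-consistency-equals-consistency}: show that every edge of $\scr{M}$ is edge-consistent under $T_A$ with parameter $\overline{\lambda}$, and then induct on the length of the observation sequence exactly as before. For notational convenience, write $\tau_A(\vb, o) = T_A^t \cond(\vb, o)$ and $\tau_D(\vb, o) = T_D^t \cond(\vb, o)$, so the task reduces to bounding $\|\tau_A(\vb, o) - \vb(m')\|_{tv}$ whenever $\|\vb - \vb(m)\|_{tv} \leq \overline{\lambda}$ and $m \xrightarrow{o} m'$ is an edge of $\scr{M}$.

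For such a $\vb$, I would split
\begin{equation*}
\|\tau_A(\vb, o) - \vb(m')\|_{tv} \;\leq\; \underbrace{\|\tau_A(\vb, o) - \tau_D(\vb(m), o)\|_{tv}}_{(\star)} \;+\; \|\tau_D(\vb(m), o) - \vb(m')\|_{tv}.
\end{equation*}
The second summand is at most $\lambda$ by the $(T_D, \lambda)$-edge-consistency of $m \xrightarrow{o} m'$ applied at the test point $\vb = \vb(m)$. To control $(\star)$, I would add and subtract \emph{either} $T_D^t \cond(\vb, o)$ \emph{or} $T_A^t \cond(\vb(m), o)$, splitting it into a ``matrix-deviation'' piece of the form $(T_A - T_D)^t \vec{p}$ for a probability vector $\vec{p}$, plus a ``belief-deviation'' piece $T^t (\cond(\vb, o) - \cond(\vb(m), o))$ with $T \in \{T_A, T_D\}$ determined by the choice. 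The matrix-deviation piece satisfies $\|(T_A - T_D)^t \vec{p}\|_{tv} \leq \|(T_A - T_D)^t\|_1$ since $\vec{p}$ has unit $L_1$-norm. Following the estimate inside Lemma~\ref{lemma:tau-contraction-bounds}, the belief-deviation piece is controlled by $\frac{(1 - n t^*) \alpha_{\max}(o)}{\min(t_a^*, t_d^*)\, \alpha_{sum}(o)} \cdot \overline{\lambda}$, where $t^*$ equals $t_a^*$ or $t_d^*$ according to which split was chosen; picking whichever route uses the larger $t^*$ produces the factor $(1 - n\max(t_a^*, t_d^*))$ appearing in $L(T_A, T_D, \scr{G}, \Pi)$.

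The denominator $\min(t_a^*, t_d^*)$ is justified by the observation that the lower bound $\sum_i b_i \alpha_i \geq t^* \alpha_{sum}(o)$ used in the Lipschitz estimate of $\cond$ must hold simultaneously for $\vb$ and $\vb(m)$: the ISM beliefs $\vb(m)$ are produced by iterating $\tau_D$ from $\vb_u$, so Lemma~\ref{lemma:belief-lower-bounds} applied with $T = T_D$ gives $\vb(m)_i \geq t_d^*$, while the ``actual'' beliefs $\vb$ arise from positive-probability histories under $T_A$, giving $\vb_i \geq t_a^*$. Assembling the three pieces,
\begin{equation*}
\|\tau_A(\vb, o) - \vb(m')\|_{tv} \;\leq\; L(T_A, T_D, \scr{G}, \Pi) \cdot \overline{\lambda} \;+\; \|(T_A - T_D)^t\|_1 \;+\; \lambda,
\end{equation*}
and demanding that the right-hand side be at most $\overline{\lambda}$ rearranges to $\overline{\lambda} \geq (\lambda + \|(T_A - T_D)^t\|_1)/(1 - L(T_A, T_D, \scr{G}, \Pi))$, which is exactly the claimed value; the hypothesis $L(T_A, T_D, \scr{G}, \Pi) < 1$ is precisely what is needed for this to be solvable. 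The delicate step, and what I expect to require the most care, is choosing the right decomposition of $(\star)$ so that the $\max$ in the numerator of $L$ emerges; the remainder is routine triangle-inequality bookkeeping combined with the induced-$L_1$ matrix norm bound.
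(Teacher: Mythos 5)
Your proposal is correct and follows essentially the same route as the paper's proof: the same triangle-inequality split through $\tau_D(\vb(m),o)$, the same two alternative decompositions of the $(\star)$ term (yielding $1-n\max(t_a^*,t_d^*)$ in the numerator and $\min(t_a^*,t_d^*)$ in the denominator via the component lower bounds on the two belief vectors), and the same final rearrangement defining $\overline{\lambda}$. You are also right that the theorem's conclusion is intended to be consistency under $T_A$, as the surrounding prose and the paper's own proof confirm.
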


$||T||_1$ refers to the induced $1-$norm of matrix $T$~\cite{Weisstein/Matrix}. 
\ifextendedversion
\begin{proof}

Let us assume that $T_{A}$ is the actual matrix used by $\player2$ whereas $T_{D}$ is the matrix assumed during the design of the consistent ISM $\scr{M}$. Let each entry of $T_{A}$ be at least $t_{a}^* $ whereas $t_d^* $ is the minimal entry in the matrix $T_D$. We assume that $t_a^* > 0$ and $t_d^* > 0$. 

For a belief state $\vb \in \Beliefs_n$, let $\tau_{D}(\vb, o)$ denote the updated belief state using the design assumption $T_D$:
\[ \tau_D(\vb, o) = T_D^t \times \cond(\vb, o) \,.\]
Likewise, let $\tau_{A}$ be the updated belief state using the actual play matrix $T_A$:
\[ \tau_A(\vb,o) = T_A^t \times \cond(\vb, o) \,.\]
Let $t_{\max} = \max(t_a^* , t_d^*)$ and $t_{\min} = \min(t_a^*, t_b^*)$. Recall the definitions: $\alpha_j = \pi(o)$, $\alpha_{\max}(o) = \max_{j=1}^n \alpha_j$ and $\alpha_{sum}(o) = \sum_{j=1}^n \alpha_j$.

\begin{lemma}
Let $\vb_1, \vb_2$ be two belief states such that  for all $j \in [n]$, 
$b_{1,j} \geq t_a^*$ and $b_{2,j} \geq t_d^*$;
and $o$ be a non-zero probability observation. 
\[ ||\tau_A(\vb_1, o) - \tau_D(\vb_2, o)||_{tv} \leq ||(T_A - T_D)^t||_1 + \frac{(1 - n t_{\max}) \alpha_{\max}(o)}{t_{\min} \alpha_{sum}(o)} || b_1 - b_2 ||_{tv} \,. \]
\end{lemma}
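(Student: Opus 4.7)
The plan is a triangle-inequality decomposition that introduces an intermediate term mixing one transition matrix with one of the conditioned belief states, so that the two resulting pieces can be bounded separately: one piece captures only the discrepancy between the matrices, and the other captures only the discrepancy between the conditioned beliefs, under a single common matrix. The key bookkeeping point is to choose the intermediate so that the matrix that ends up in the ``belief-only'' piece is the one whose minimum entry equals $t_{\max}$; this is what forces the contraction factor to come out as $1 - n\, t_{\max}$ rather than the weaker $1 - n\, t_{\min}$.

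WLOG assume $t_d^* = t_{\max}$ (the other case is symmetric by swapping roles). I would introduce the intermediate $T_D^t \cond(\vb_1, o)$ and write
\[
\tau_A(\vb_1, o) - \tau_D(\vb_2, o) \;=\; (T_A^t - T_D^t)\,\cond(\vb_1, o) \;+\; T_D^t\bigl(\cond(\vb_1, o) - \cond(\vb_2, o)\bigr).
\]
Since the paper's $tv$-norm coincides with the vector $L_1$-norm, the triangle inequality followed by the induced matrix norm inequality yields
\[
\|(T_A^t - T_D^t)\,\cond(\vb_1, o)\|_{tv} \;\le\; \|(T_A - T_D)^t\|_1 \cdot \|\cond(\vb_1, o)\|_1 \;=\; \|(T_A - T_D)^t\|_1,
\]
because $\cond(\vb_1, o)$ is a probability vector of $L_1$-norm $1$. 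This handles the first piece.

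For the second piece, I would apply Lemma~\ref{lemma:useful-lemma-1} to $T_D$, which by assumption has minimum entry $t_d^* = t_{\max}$, to obtain
\[
\|T_D^t(\cond(\vb_1, o) - \cond(\vb_2, o))\|_{tv} \;\le\; (1 - n\, t_{\max})\,\|\cond(\vb_1, o) - \cond(\vb_2, o)\|_{tv}.
\]
It then remains to bound the conditioning discrepancy. I would replay the algebra from the proof of Lemma~\ref{lemma:tau-contraction-bounds}, with the one modification that because $\vb_1$ and $\vb_2$ have different lower bounds on their entries, the two denominators $\sum_i b_{1,i}\alpha_i$ and $\sum_i b_{2,i}\alpha_i$ must each be lower bounded uniformly by $t_{\min}\,\alpha_{sum}(o)$. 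The same arithmetic then gives
\[
\|\cond(\vb_1, o) - \cond(\vb_2, o)\|_{tv} \;\le\; \frac{\alpha_{\max}(o)}{t_{\min}\,\alpha_{sum}(o)}\,\|\vb_1 - \vb_2\|_{tv}.
\]

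Combining the two estimates via the triangle inequality yields the claimed bound. The only genuine subtlety is the choice of intermediate: picking the wrong one delivers $1 - n\,t_d^*$ or $1 - n\,t_a^*$ in place of $1 - n\,t_{\max}$, which is strictly weaker when $t_a^* \neq t_d^*$. Everything else is either a direct invocation of Lemma~\ref{lemma:useful-lemma-1}, a basic induced-norm estimate, or a mild adaptation of the conditioning calculation already done for Lemma~\ref{lemma:tau-contraction-bounds}; no new ideas beyond these are needed.
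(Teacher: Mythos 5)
Your proof is correct and takes essentially the same route as the paper: the paper writes out both decompositions (intermediate $T_A^t\cond(\vb_2,o)$ and intermediate $T_D^t\cond(\vb_1,o)$, in effect), applies Lemma~\ref{lemma:useful-lemma-1} to each, and takes the minimum $\min(1-nt_a^*,\,1-nt_d^*)=1-nt_{\max}$, whereas you select the favorable decomposition up front via a WLOG --- a purely presentational difference. Your bound on the conditioning discrepancy, lower-bounding both denominators by $t_{\min}\,\alpha_{sum}(o)$, matches the paper's calculation exactly.
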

\begin{proof}
\[\begin{array}{ll}
||\tau_A(\vb_1, o) - \tau_D(\vb_2, o)||_{tv} \\
\;\;\; = || T_A^t \cond(\vb_1, o) - T_D^t \cond(\vb_2, o) ||_{tv} & \text{(* let }\ \vb' := \cond(\vb, o) \text{*)}\\ 
\;\;\;  = ||T_A^t (\vb_1' - \vb_2') ||_{tv} + || (T_A - T_D)^t \vb_2'||_{tv} \\ 
\;\;\; \leq (1 - nt_a^*) || \vb_1' - \vb_2'||_{tv} + ||(T_A - T_D)^t||_1 \times \underset{=1}{\underbrace{||\vb_2'||_1}} & \text{(*Cf. Lemma~\ref{lemma:useful-lemma-1}*)}\\ 
\end{array}\]

Consider another derivation that proceeds as follows:
\[\begin{array}{ll}
||\tau_A(\vb_1, o) - \tau_D(\vb_2, o)||_{tv} \\
\;\;\; = || T_A^t \cond(\vb_1, o) - T_D^t \cond(\vb_2, o) ||_{tv} & \text{(* let }\ \vb' := \cond(\vb, o) \text{*)}\\ 
\;\;\;  = ||T_D^t (\vb_1' - \vb_2') ||_{tv} + || (T_A - T_D)^t \vb_1'||_{tv} \\ 
\;\;\; \leq (1 - nt_d^*) || \vb_1' - \vb_2'||_{tv} + ||(T_A - T_D)^t||_1 \times \underset{=1}{\underbrace{||\vb_1'||_1}} & \text{(*Cf. Lemma~\ref{lemma:useful-lemma-1}*)}\\ 
\end{array}\]

Combining, we obtain:
\[  ||\tau_A(\vb_1, o) - \tau_D(\vb_2, o)||_{tv}  \leq \underset{ = 1 - n t_{\max} }{\underbrace{\min( 1- n t_a^*, 1 - n t_d^*)}} || \vb_1' - \vb_2'||_{tv} + ||(T_A - T_D)^t||_1 \,.\]

We will now calculate bounds on $||\vb_1' - \vb_2'||_{tv}$.
\[ \begin{array}{ll}
||\vb_1' - \vb_2'||_{tv} \;\;  = \;\; \sum_{j=1}^n \left| \frac{\alpha_j b_{1,j}}{\sum_{i=1}^n \alpha_i b_{1, i}} - \frac{\alpha_j b_{2,j}}{\sum_{i=1}^n \alpha_i b_{2, i}} \right| \\
\end{array}\]

Note that $\sum_{i} \alpha_i b_{1,i} \geq t_a^* \alpha_{sum} \geq t_{\min} \alpha_{sum}$ since $b_{1,i} \geq t_{a}^*$. Likewise, $\sum_i \alpha_i b_{2,i} \geq t_{\min} \alpha_{sum} $. Therefore,
\[ \begin{array}{ll}
||\vb_1' - \vb_2'||_{tv} \;\;  = \;\; \sum_{j=1}^n \left| \frac{\alpha_j b_{1,j}}{\sum_{i=1}^n \alpha_i b_{1, i}} - \frac{\alpha_j b_{2,j}}{\sum_{i=1}^n \alpha_i b_{2, i}} \right| \\
\;\;\;\;\; \leq \frac{1}{t_{\min}\alpha_{sum}} \sum_{j=1}^n | \alpha_j b_{1,j} - \alpha_j b_{2,j} | \leq \frac{\alpha_{\max}}{t_{\min} \alpha_{sum}} ||\vb_1 - \vb_2 ||_{tv} \\ 
\end{array}\]

Combining, we obtain,
\[\begin{array}{ll}
||\tau_A(\vb_1, o) - \tau_D(\vb_2, o)||_{tv} \\
\;\;\; \leq (1 - nt_{\max}) || \vb_1' - \vb_2'||_{tv} + ||(T_A - T_D)^t||_1 \\ 
\;\;\;\;\;\; \leq \frac{(1 - nt_{\max})\alpha_{\max}}{t_{\min} \alpha_{sum}} ||\vb_1 - \vb_2||_{tv} + ||(T_A - T_D)^t||_1\,.
\end{array}\]
This completes the proof of this lemma.
\end{proof}

Let $\sigma_t$ be a sequence of observations with non-zero probability and $\vb = \tau_D(\vb_0, \sigma_t)$ and $\hat{\vb} = \tau_A(\vb_0, \sigma_t)$ for the uniform initial belief state $\vb_0$. Let $m$ be the state in the ISM $m = \delta(m_0, \sigma_t)$ with associated belief state $\vb(m)$.

We will now proceed to the proof of the original theorem. Let us consider an edge $m \xrightarrow{o} m'$ in the automaton. We will show that the edge is $\overline{\lambda}$ consistent under $\tau_A$. Let $\hat{\vb}$ be any belief state such that $\hat{b}_j \geq t_a^*$ and 
\[ ||\hat{\vb} - \vb(m) ||_{tv} \leq \overline{\lambda} \,,\]
wherein 
\[ \overline{\lambda} = \frac{\lambda + ||(T_A - T_D)^t||_1}{1 - L(T_A, T_D, \scr{G}, \Pi) } \,, \]
and  $L(T_A, T_D, \scr{G}, \Pi) = \max_{o \in O} \frac{ (1- nt_{\max}) \alpha_{\max}}{t_{\min} \alpha_{sum}}$ wherein
$L(T_A, T_D, \scr{G}, \Pi) < 1$ by assumption. 

We wish to prove that 
\[ ||\tau_A(\hat{\vb},o) - \vb(m') ||_{tv} \leq \overline{\lambda}\,.\]

First, we note that for any belief state $\vb$ such that $b_j \geq t_d^*$
\[ ||\tau_D(\vb, o) - \tau_A(\hat{\vb}, o)||_{tv} \leq \frac{(1 - n t_{\max}) \alpha_{\max}}{t_{\min} \alpha_{sum}} || \vb - \hat{\vb}||_{tv} + ||(T_A - T_D)^t||_1 \,. \]

Therefore, 

\[ \begin{array}{l}
|| \tau_A(\hat{\vb}, o) - \vb(m') || \leq ||\tau_A(\hat{\vb}, o) - \tau_D(\vb(m), o)|| + ||\tau_D(\vb(m), o) - \vb(m')|| \\ 
\;\;\;\;\;\;\;\;\;\; \leq \frac{(1 - n t_{\max}) \alpha_{\max}(o)}{t_{\min} \alpha_{sum}(o)} || \hat{\vb} - \vb(m)||_{tv} + ||(T_A - T_D)^t||_1  + \lambda  \\ 
\;\;\;\;\;\;\;\;\;\; \leq L(T_D, T_{A}, \Pi) \overline{\lambda} +  \underset{ = (1 - L(T_D, T_A, \scr{G}, \Pi))\overline{\lambda}}{\underbrace{||(T_A - T_D)^t||_1  + \lambda }}\\ 
\;\;\;\;\;\;\;\;\;\; \leq \overline{\lambda}  \\ 
\end{array}\]

This completes the proof. 
\end{proof}
\else
Proof is provided in the extended version.
\fi

\section{Experimental Evaluation}\label{sec:experimental-evaluation}
We present an experimental evaluation based on an implementation of the ideas mentioned thus far. Our implementation uses the Python programming language and inputs a user-defined game structure, $n$ policies for player $\player2$, values for parameters $\lambda > 0$. For each case, the policy design Markov chain whose transition system is given by $T(\epsilon)$,
such that  $T(\epsilon)_{i,i} = \epsilon$ and $T(\epsilon)_{i,j} = \frac{\epsilon}{n-1}$ when $i \not= j$. In other words, player plays the same policy as previous step with probability $\epsilon$ and switches to a different policy uniformly with probability $(1-\epsilon)/(n-1)$.   Our implementation uses the Gurobi optimization solver \cite{gurobi} to implement the consistency checks described in Section~\ref{sec:checking-consistency} and uses it to implement the consistent information state machine synthesis as described in Section~\ref{sec:consistent-ism-synthesis}. 

\paragraph{Performance Evaluation on Benchmark Problems.}
We consider benchmarks for evaluating our approaches in terms of the ability to construct finite information state machines, the sizes of these machines and the performance of the resulting policies synthesized by our approach.
\begin{compactenum}
    \item \textsc{rps}: The rock-paper-scissors game and $\player2$ policies as described in Example~\ref{ex:rps-example}. 
    \item \textsc{rps-mem}: The rock-paper-scissors game but with ``memory'' of the previous move by each player. This game has $9$ states that remember the previous move of each player, and the policies for $\player2$ model behaviors such as ``play action now that would  have beaten $\player1$ in the previous turn'' or ``repeat the previous action of $\player1$''. 
   \item \textsc{Anticipate-n-Avoid}(N) consists of a circular corridor with $N$ rooms numbered $1, \ldots, N$ with four designated rooms marked as meeting zones. $\player2$ chooses one of four policies that navigate them to one of the meeting rooms whereas the rewards for $\player1$ are negative if they happen to be in the same cell as $\player2$ or in an adjacent cell while the rewards are positive if they happen to be farther away. The game has $N^2$ states for $N$ rooms. 
\end{compactenum}
The game structures and the policies for $\player2$ are given in the appendix. 

\begin{table}[t]
\begin{center}
{\footnotesize 
\begin{tabular}{|c|c||ccccc||ccccc||}
\hline 
Benchmark & Size &  \multicolumn{5}{c||}{$\lambda=0.1$} & \multicolumn{5}{c||}{$\lambda=0.05$} \\
\cline{3-12}
& & $\epsilon$  & $|M|$ & $T_{alg1}$ & $|\text{MDP}|$ & $T_{PI}$&  $\epsilon$ & $|M|$ & $T_{alg1}$ & $|\text{MDP}|$ & $T_{PI}$\\
\hline 
\textsc{rps} & (1, 3, 3, 4) &  $0.5$ & 6 & 0.34 &  6 & $<0.01$ & 0.5 & 10 & 0.4 & 10 & $< 0.01$ \\ 
 & &   $0.4$ & 20 & 0.92 & 20 & $< 0.01$ &   0.4 & 29 & 1.1 & 29 & $< 0.01$\\ 
 & &   $0.3$ & 80 & 4.6 & 80 & $0.01$ &   0.3 & 115 & 4.9 & 115 & $0.02$\\ 
 & &  $0.2$ & $\times$ & 0.02 & \multicolumn{2}{c ||}{- Alg. 1 Fail -} &   0.2 & $\times$ & 0.4 & \multicolumn{2}{c||}{- Alg. 1 Fail -}\\[5pt] 
\textsc{rps-mem} & (9, 3, 3, 9) &  0.6 & 77 & 28.7 & 244 & 0.1 & 0.6 & 176 & 55 & 526 & 0.1  \\ 
 &  & $0.55$ & 228 & 117 & 688 & 1.3 & 0.55 & 448 & 215 & 1342 & 0.34\\ 
 & &  $0.5$ & 834 & 743 & 2500 & 4.6 &  0.5 & 1516 & 1101 & 4546 & 1.8\\
 & &  $0.45$ & $\times$ & 14.2 & \multicolumn{2}{c |}{- Alg. 1 Fail - } &   0.45 & \multicolumn{4}{c||}{ - Timeout $> 1hr$- }\\[5pt]
 \textsc{ant.-avd.} & (625, 3, 3, 4) &  $0.55$ & 7 & 1.5 & 1701 &  1.8 & 0.55 & 17 & 2.6 & 3526 & 4.2 \\
 & & $0.5$ & 4.3 & 12 & 2726 & 3.2 &   0.5 & 28 & 6.9 & 5226 & 12.9 \\ 
 & &  $0.45 $ & 8.8 & 26 & 4926 & 11.5  &  0.45 & 61 & 14.5 & 8326 & 19.5\\ 
  & &  $0.4$ & 19.7 & 66 & 10042 & 26.5 &  0.4 & 137 & 34.6 & 16882 & 50.5\\ 
  & &  $0.35$ & 68.5 &  194 & 24592 &  84  &  0.35 & 366 & 77.6 & 37770 & 112.1\\ 
  & &  $0.3$ & $\times$ & 4 &   \multicolumn{2}{c |}{- Alg. 1 Fail - } &   0.3 & 1289 & 305.4 & 126395 & 431.1 \\[5pt]
 \hline 
\end{tabular}
}
\end{center}
\caption{Performance results of our approach on various benchmarks and different values of the parameters $\lambda, \epsilon$. ``Size'' is a four-tuple consisting of $(|S|, |\Ac1|, |\Ac2|, |\Pi|)$, $T_{alg1}$ is time taken (seconds) to run Algorithm~\ref{alg:info-state-machine-search} and $T_{PI}$ is time taken (seconds)  for policy iteration to converge (discount factor $\gamma = 0.95$).  Experiments were run on  Linux server with four 2.4 GHz Intel Xeon CPUs and $64GB$ RAM. }
\vspace{-2em}
\label{tab:benchmark-performance}
\end{table}

Table~\ref{tab:benchmark-performance} shows the performance over these benchmarks. We have four benchmarks as described briefly above and in detail in the Appendices~\ref{Appendix:rps-mem}, and \ref{Appendix:ant-avoid}. For these benchmarks the number of states ranges from $1$ for the rock-paper-scissors game to $2080$ states for the \textsc{Anticipate-Action} game. Similarly, the number of actions of each player and the number of policies employed by $\player2$ are reported. For each game, we choose various values of $(\lambda, T(\epsilon))$ and report the overall performance in terms of number of states of the information state machine, the time taken to construct it, the size of the MDP and the time taken to compute an optimal policy using policy iteration. Since the transition matrix $T = T(\epsilon)$, we note that $\min (T_{i,j}) = \frac{\epsilon}{n-1}$ provided $\epsilon \leq \frac{n-1}{n}$. We ran two series of experiments for each benchmark by fixing $\lambda$ and decreasing $\epsilon$ for the matrix $T(\epsilon)$ until Algorithm~\ref{alg:info-state-machine-search} reports a failure or times out after one hour. The first observation is that our approach works for values of $t^* = \frac{\epsilon}{n-1}$ that are smaller than the limit suggested by Theorem~\ref{thm:completeness-ism-algo}. At the same time, we note that as $\epsilon$ decreases, the size of the automaton $\scr{M}$ and the corresponding size of the MDP obtained by composing the automaton with the game all increase, as does the time taken to construct. Also, if Algorithm~\ref{alg:info-state-machine-search} fails, it happens very quickly, allowing us to increase $\epsilon$ until we succeed.  

\newcommand\commentout[1]{}
\commentout{
\begin{figure}[t]
\begin{center}
\begin{tabular}{ccc}
\includegraphics[width=0.3\textwidth]{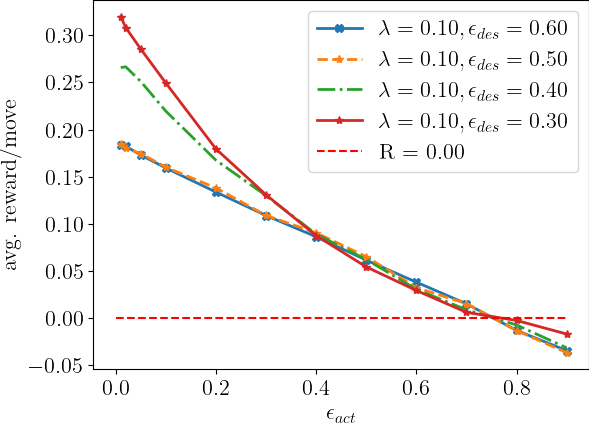} & \includegraphics[width=0.3\textwidth]{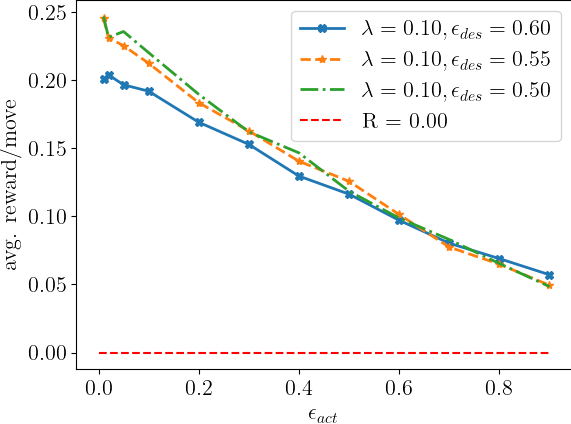} & \includegraphics[width=0.3\textwidth]{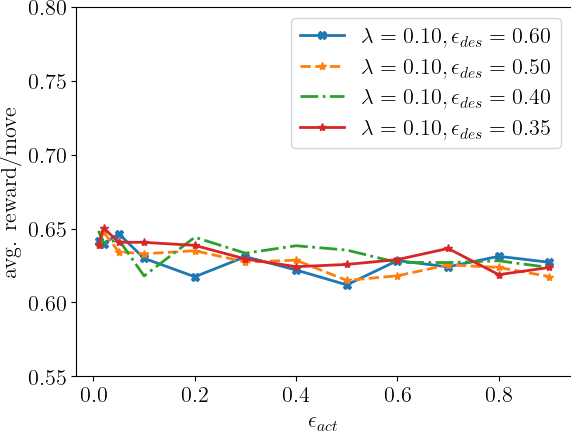}  \\[5pt] 
\includegraphics[width=0.3\textwidth]{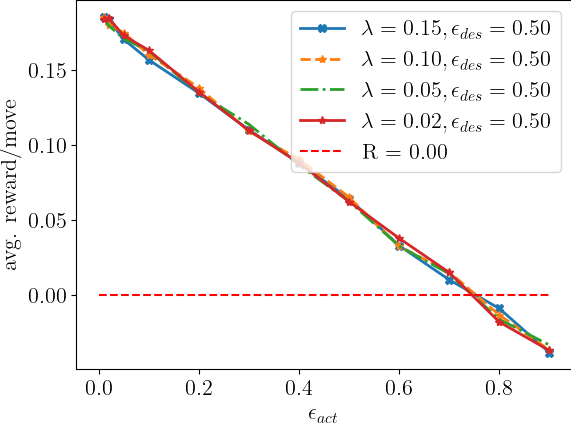} & \includegraphics[width=0.3\textwidth]{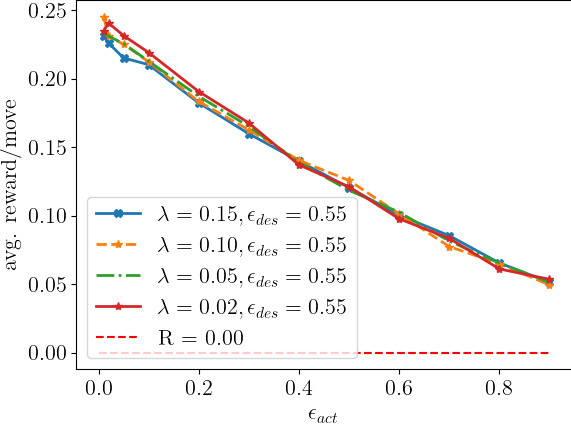} & \includegraphics[width=0.3\textwidth]{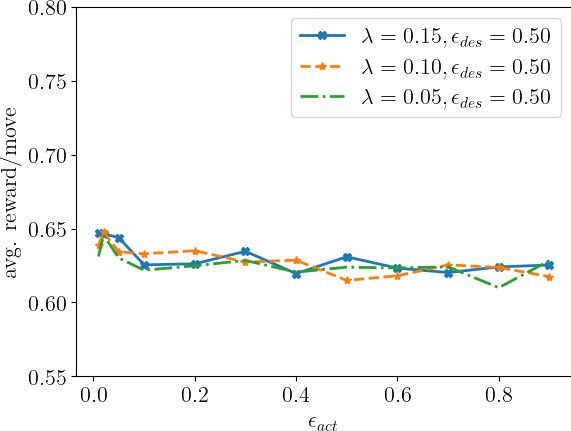} \\
\textsc{rps} & \textsc{rps-mem} & \textsc{Ant.-Avoid}  \\ 
\end{tabular}
\end{center}
\vspace*{-0.5cm}
\caption{Average reward/move (y-axis) plotted against varying values of $\epsilon_{act}$ (x-axis) for the benchmarks. Top row fixes $\lambda = 0.1$ while varying $\epsilon$ whereas bottom row fixes $\epsilon$ while varying $\lambda$.}\label{fig:avg-reward-per-episode}
\end{figure}

\vspace{0.4em}\noindent\textbf{Performance of the Policy.} Next, we simulate the optimal policy designed by our approach for a given value of $\lambda, T_D=T(\epsilon_{des})$, against policies for player $\player2$ who switches with probability $T_A = T(\epsilon_{act})$. Figure~\ref{fig:avg-reward-per-episode} shows the reward per move for the policy computed by our approach simulated against various values of $T(\epsilon_{act})$ along the x-axis. For the rock-paper-scissors games, we note that the best that one can do without knowledge of the opponent moves is to obtain an average reward of $0$. In this sense, our policy does well, especially when $\epsilon_{act}$ is small, i.e, $\player2$ sticks to the same policy for longer. At the same time, decreasing $\epsilon_{des}$ yields better reward performance for small values of $\epsilon_{act}$ but this performance degrades quickly whereas a larger $\epsilon_{des}$ value yields lower performance that does not degrade as quickly, due to the conservative assumption about the opponent.

\begin{figure}[t]
\begin{center}
\begin{tabular}{ccc}
\includegraphics[width=0.3\textwidth]{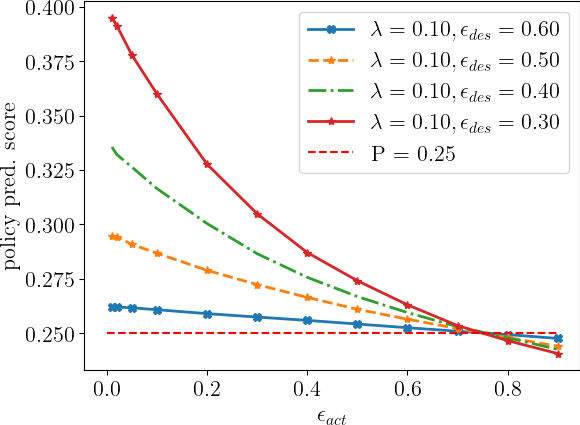} & \includegraphics[width=0.3\textwidth]{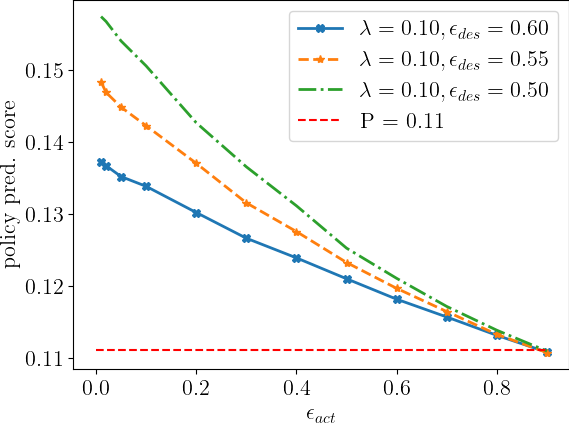} & \includegraphics[width=0.3\textwidth]{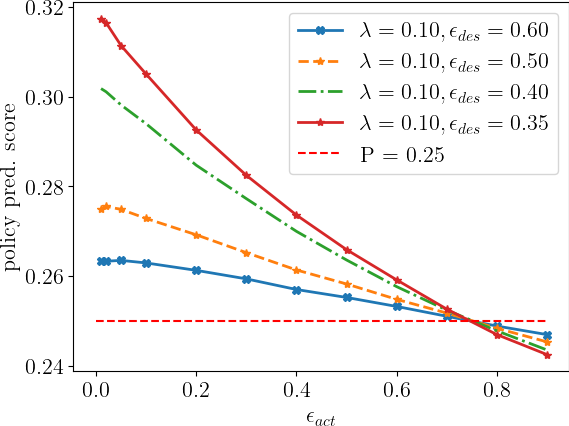}  \\
\textsc{rps} & \textsc{rps-mem} & \textsc{Ant.-Avoid} \\ 
\end{tabular}
\end{center}
\vspace*{-0.5cm}
\caption{ Policy prediction score: avg. probability per move that the ISM's belief state assigns to the actual policy of $\player2$ against varying values of $\epsilon_{actual}$. All plots have $\lambda {=}0.1$ and varying $\epsilon_{des}$.}
\vspace{-2em}
\label{fig:avg-policy-pred-per-episode}
\end{figure}

Figure~\ref{fig:avg-policy-pred-per-episode} plots the average policy prediction score, i.e, the average over the probability  assigned by the belief state tracked by the ISM to  the actual policy employed by $\player2$ at each step. 
This is an important metric since it measures how well our information state machine anticipates the policies employed by $\player2$. 
Note that for small values of $\epsilon_{act}$, the policy prediction scores are higher as expected since $\player2$ sticks to a given policy for longer time periods. 
However, as $\epsilon_{act}$ increases, there is a point where the policy prediction does worse than simply picking a policy at random which has a $\frac{1}{|\Pi|}$ chance of succeeding. 
The results show that our approach anticipates the policy of $\player2$ through the approximate belief state. The good reward performance in Figure~\ref{fig:avg-reward-per-episode} follows the same trend as the policy prediction in Figure~\ref{fig:avg-policy-pred-per-episode}.
}

\begin{table}[t]
\begin{center}
\begin{tabular}{||c | cc|cc||c|cc|cc||}
\hline 
\multicolumn{5}{||c||}{Ikea-Shelf-Drawer ($|\scr{G}| = 18, |\Pi|=7$)} & \multicolumn{5}{c||}{Ikea-TV-Bench ($|\scr{G}| = 18, |\Pi|=13$)}\\
\rowcolor{gray!30}$(\lambda, \epsilon)$ & $|\scr{M}|$ & $T_M$ &  $r_{\mbox{avg}}$ & $\mbox{ap}_{\mbox{avg}}$& 
 $(\lambda, \epsilon)$ & $|\scr{M}|$ & $T_M$ &  $r_{\mbox{avg}}$ & $\mbox{ap}_{\mbox{avg}}$\\
\hline 
 (0.01, 0.5) & 344 & 97.7 & 0.137 & 0.407 & 
 (0.01, 0.5) & 846 & 245.5 & 0.203 & 0.389 \\ 
  (0.01, 0.6) & 917 & 291 & 0.137 & 0.418 &
  (0.01, 0.6) & 2852 & 1006 & 0.21 & 0.404 \\ 
  (0.01, 0.7) & 3547 & 1324.5 & 0.137 & 0.43 & 
  (0.01, 0.7) & \multicolumn{4}{c||}{- timeout $> 3600s$}\\
  (0.02, 0.5) & 189 & 65.4 & 0.137 & 0.407 & 
  (0.02, 0.5) & 425 & 142.8 & 0.198 & 0.389 \\ 
  (0.02, 0.6) & 472 & 168.5 & 0.137 & 0.418 & 
  (0.02, 0.6) & 1263 & 486.32 & 0.21 & 0.404 \\ 
  (0.02, 0.7) & 1566 & 615.2 & 0.137 & 0.43 & 
  (0.02, 0.7) & \multicolumn{4}{c||}{- Algo.~\ref{alg:info-state-machine-search} fail -}\\
\hline 
\multicolumn{5}{||c||}{Ikea-Coffee-Table ($|\scr{G}| = 15, |\Pi|=12$)} & \multicolumn{5}{c||}{Cataract-Surgery ($|\scr{G}| = 36, |\Pi| = 14 $)} \\
\rowcolor{gray!30}$(\lambda, \epsilon)$ & $|\scr{M}|$ & $T_M$ &  $r_{\mbox{avg}}$ & $\mbox{ap}_{\mbox{avg}}$& 
 $(\lambda, \epsilon)$ & $|\scr{M}|$ & $T_M$ &  $r_{\mbox{avg}}$ & $\mbox{ap}_{\mbox{avg}}$\\
\hline 
(0.01, 0.5) & 521 & 150 & 0.181 & 0.408 & 
(0.01, 0.4) & 399 & 236.8 & 0.287 & 0.512 \\ 
(0.01, 0.6) & 1441 & 494 & 0.181 & 0.420 & 
(0.01, 0.5) &1360 & 846.7 & 0.287 & 0.518 \\
(0.01, 0.7) & \multicolumn{4}{c||}{ - timeout $> 3600s$} &
(0.01, 0.6) & \multicolumn{4}{c||}{ - timeout $> 3600s$} \\
(0.02, 0.5) & 279 &  115 & 0.18 & 0.409 & 
(0.02, 0.4) & 207 & 138 & 0.287 & 0.512 \\ 
(0.02, 0.6) & 705 & 292 & 0.178 & 0.420 & 
(0.02, 0.5) & 626 & 371 & 0.287 & 0.518 \\
(0.02, 0.7) &  \multicolumn{4}{c||}{- Algo.~\ref{alg:info-state-machine-search} fail -} & 
(0.02, 0.6) & 2404 & 1642  & 0.287 & 0.525 \\ 
\hline 
\end{tabular}
\end{center}
\caption{Performance data on tool prediction problem for various task sequences. $|\scr{G}|$ denotes size of automaton , $|\Pi|$: number of policies for $\player2$, $|\scr{M}|$: ISM size, 
$T_M$: time taken by Algo.~\ref{alg:info-state-machine-search}, $r_{\mbox{avg}}:$ average reward per move, $\mbox{ap}_{\mbox{avg}}:$ average probability of $\player2$'s action at each step using ISM belief state.}\label{tab:task-action-prediction-performance}
\vspace*{-0.8cm}
\end{table}

\paragraph{Next Tool Usage Prediction.}
We study the performance of our approach on two datasets involving human task performance: (a) the IKEA ASM dataset that consists of $371$ individual furniture assemblies of four distinct furniture models, wherein the actions performed by the human assembler are labeled using a neural network (CNN) to yield  sequences of actions performed by the human~\cite{Ben-Shabat+Others/2021/WACV}; and (b) the CATARACTS dataset consisting of 25 cataract surgery videos, wherein a CNN is used to identify the sequence of tools employed by the surgeon~\cite{Hassan+Others/2019/CATARACTS}. 
 
 We first used automata learning tool flexfringe to construct a DFA model from a training set consisting of 75\% of the sequences in each dataset~\cite{Verwer+Others/2017/flexfringe}. Flexfringe successfully constructed a DFA that includes the sequences of actions/tools used (Cf. Appendix~\ref{app:ikea-cataract-benchmark-details} ). The game graph $\scr{G}$ consists of the automata states and edges. The transitions between states are governed by the actions of $\player2$. The actions of $\player1$ are the same as that of $\player2$: $\Ac1 = \Ac2$. The goal of $\player1$ is to predict the next action/tool usage by $\player2$ based on knowledge of the current state. The reward $R(s, a_1, a_2) = 1$ if $a_1 = a_2$ (i.e, $\player1$'s action matches that of $\player2$) and $R(s, a_1, a_2) = -1$ otherwise.  The policies of $\player2$ are also constructed from the training data as well. 
 For each sequence $\sigma$ in the training data we collect the set of edges (states and actions) in the automaton that are traversed by $\sigma$. Each such edge set describes a policy $\pi$ wherein the player upon reaching a state chooses the action on one of the outgoing edges from the set uniformly at random, or alternatively, if no outgoing edge from the set is present, the player chooses any action uniformly at random. Note that multiple sequences from the training data map can onto the same policy.

 Once the game and the policy are constructed from the training data, we use Algorithm~\ref{alg:info-state-machine-search} to construct an ISM given $T = T(\epsilon)$ and $\lambda$. This is used to construct an MDP, and thus, a policy $\pi_1$ for $\player1$. 
 The policy is tested by using the held out test sequences consisting of the 25\% of the sequences not  used in learning the task model or the policies. Using each sequence as the set of actions chosen by the oblivious $\player2$, we measure the average reward for each episode and the average action prediction score for $\player1$ for various values of $\epsilon, \lambda$.

 Table~\ref{tab:task-action-prediction-performance} shows the size of the ISM, running time of Algo.~\ref{alg:info-state-machine-search} and the performance of the policy for $\player1$ on the held out test sequences. First, we note that the performance in terms of running time and size of the ISM shows trends that are similar to the previous benchmarks reported in Table~\ref{tab:benchmark-performance}. In terms of the held out sequences, we note that our approach is successful in terms of predicting the actions of $\player2$. Given that the cataract data has $41$ actions and Ikea dataset has $32$ actions, our approach performs much better than a random guess. At the same time, the action probability score (the average probability ascribed by the ISM belief's state to $\player2$ action in the current move) is also high given the large space of possible actions. Interestingly, however, we note that changing $\lambda, \epsilon$ has an enormous impact on the running time and size of the ISM but very little impact on the performance on the unseen test sequence. The average probability score shows a  small variations across different values of $\lambda, \epsilon$. We believe that this is a function of the rather small values of $\lambda$ used since it assures us that the ISM tracks the belief state very precisely. 

\section{Conclusion}
We study concurrent stochastic games against oblivious opponents where the opponent (environment) is not necessarily defined as adversarial or cooperative, but rather oblivious that is bounded to choose from a finite set of policies. 
We introduce the notion of \emph{information state machine} (ISM) whose states are mapped to a belief state on the environment policy, and provide the guarantee that the belief states tracked by this automaton stay within a fixed distance of the precise belief state obtained by tracking the entire history for the environment.
%
In the future, we would like to better characterize the relationship between the various parameters involved in Algorithm~\ref{alg:info-state-machine-search} to provide a tighter condition for its termination. We are also interested in understanding the applicability of these ideas to the more general case of partially observable Markov decision processes. 

\begin{credits}
\subsubsection{\ackname} This work was supported in part by the US NSF  under award numbers 1836900 and 1932189.and the NSF IUCRC CAAMS center. All opinions are those of the authors and not necessarily of the NSF.

\subsubsection{\discintname}
The authors have no conflicts of interest to disclose.
\end{credits}

\bibliographystyle{splncs04}
\bibliography{main}
\newpage 
\appendix
\section{Rock Paper Scissors with Memory}
\label{Appendix:rps-mem}
We describe the \textsc{rps-mem} benchmark used in our approach. The state of the game $\scr{G}$ is given as  $S = A_1 \times A_2$ wherein $A_1 = \{r_1, p_1, s_1\}$ and $A_2 = \{r_2, p_2, s_2\}$ while $\Ac1 = A_1$ and $\Ac2 = A_2$. The transition map is given as follows:
\[ P( (s_1, s_2)\ |\ s, a_1, a_2) = \begin{cases} 1 & \text{if}\ s_1 = a_1,\ s_2 = a_2 \\ 
0 & \text{otherwise} \\ \end{cases}
\]
In other words, the state $s$ ``remembers'' the previous action of both players. The reward map for each state is identical to that of the \textsc{rps} game from Example~\ref{ex:rps-example}.

We define $9$ policies for $\player2$.

Policy $\pi_1$ chooses rock/paper with 0.45 probability and scissors with 0.1 probability regardless of the state.
\[ \pi_1(a,b) = \{ r_2: 0.45, p_2: 0.45, s_2: 0.1 \}\,.\]
Likewise, we define policies $\pi_2, \pi_3$.
\[ \pi_2(a,b) = \{ r_2: 0.45, p_2:  0.1, s_2: 0.45\} \]
\[ \pi_3(a,b) = \{ r_2: 0.1, p_2:  0.45, s_2: 0.45\} \]
Policy $\pi_4$: mostly repeat what player $\player1$ played in the previous round.
\[ \pi_4(a,b) = \begin{cases} 
 \{ r_2: 0.8, p_2: 0.1, s_2: 0.1 \} & \text{if}\ a= r_1 \\ 
  \{ r_2: 0.1, p_2: 0.8, s_2: 0.1 \} & \text{if}\ a = p_1 \\ 
   \{ r_2: 0.1, p_2: 0.1, s_2: 0.8 \} & \text{if}\ a = s_1 \\ 
\end{cases}\]

Policy $\pi_5$: mostly play what would have beaten player 1 in the previous round.

\[ \pi_5(a, b) = \begin{cases} 
 \{ r_2: 0.8, p_2: 0.1, s_2: 0.1 \} & \text{if}\ a= s_1 \\ 
  \{ r_2: 0.1, p_2: 0.8, s_2: 0.1 \} & \text{if}\ a = r_1 \\ 
   \{ r_2: 0.1, p_2: 0.1, s_2: 0.8 \} & \text{if}\ a = p_1 \\ 
\end{cases}\]

Policy $\pi_6$: Mostly play what player 1 did not play in the previous round.
\[ \pi_6(a,b) = \begin{cases}
    \{ r_2: 0.1, p_2:0.45, s_2: 0.45\} & \text{if}\ a = r_1 \\ 
     \{ r_2: 0.45, p_2:0.1, s_2: 0.45\} & \text{if}\ a = p_1 \\ 
      \{ r_2: 0.45, p_2:0.45, s_2: 0.1\} & \text{if}\ a = s_1 \\ 
\end{cases}\]

Policy $\pi_7$: mostly repeat what player $\player2$ played in the previous round.
\[ \pi_4(a,b) = \begin{cases} 
 \{ r_2: 0.8, p_2: 0.1, s_2: 0.1 \} & \text{if}\ b= r_2 \\ 
  \{ r_2: 0.1, p_2: 0.8, s_2: 0.1 \} & \text{if}\ b = p_2 \\ 
   \{ r_2: 0.1, p_2: 0.1, s_2: 0.8 \} & \text{if}\ b = s_2 \\ 
\end{cases}\]

Policy $\pi_8$: mostly play what would have beaten $\player2$ in the previous round.

\[ \pi_5(a, b) = \begin{cases} 
 \{ r_2: 0.8, p_2: 0.1, s_2: 0.1 \} & \text{if}\ b= s_2 \\ 
  \{ r_2: 0.1, p_2: 0.8, s_2: 0.1 \} & \text{if}\ b = r_2 \\ 
   \{ r_2: 0.1, p_2: 0.1, s_2: 0.8 \} & \text{if}\ b = p_2 \\ 
\end{cases}\]
Policy $\pi_9$: mostly  play what $\player2$ did not play in the previous round.
\[ \pi_7(a,b) = \begin{cases}
    \{ r_2: 0.1, p_2:0.45, s_2: 0.45\} & \text{if}\ b = r_2 \\ 
     \{ r_2: 0.45, p_2:0.1, s_2: 0.45\} & \text{if}\ b = p_2 \\ 
      \{ r_2: 0.45, p_2:0.45, s_2: 0.1\} & \text{if}\ b = s_2 \\ 
\end{cases}\]

\section{Anticipate and Avoid}
\label{Appendix:ant-avoid}
Anticipate and Avoid game involves a circular arena with $N$ cells labeled $1, \ldots, N$. The state space $S$ encodes joint positions of two players in this arena.

\[ S = \{ (i, j)\ |\ 1 \leq i \leq N, 1 \leq j \leq N \} \,.\]

Let us define $i \oplus 1 $ as the same as $i + 1$ if $1 leq i \leq N-1$ and to be $1$ if $i = 1$. Likewise, we define $i \ominus 1$ as $i-1$ for $2 \leq i \leq N$ and $N$ if $i = 1$.

The actions are $\Ac1= \Ac2 = \{ L, R\}$ standing for left and right, respectively. Let us define $p(j | i, a)$ for a single player as follows:
\[ p(j | i, a) = \begin{cases}
    0.2 & j = i \\ 
    0.8 & j = i \oplus 1, a = R\\
    0.8 & j = i \ominus 1, a = L\\ 
    0 & \text{otherwise}\\
\end{cases}\]
In other words, upon moving left, the player may stay in the same cell with 0.2 probability or move to "previous" cell with 0.8 probability and similarly for moving right. 

The reward map is defined by first defining a state distance function:
\[ \rho(i, j) = \begin{cases}
\min(j-i, i-j +N) & \text{if}\ i \leq j \\ 
\min(i-j, j-i+N) & \text{if}\ i > j\\
\end{cases} \]

We define the reward for state/actions as
\[ R((i,j), a_1, a_2) = \begin{cases}
    -10 & i = j \\ 
    -5 & i \not= j \ \land\ \rho(i,j) \leq N/10 \\ 
    0 & i \not= j \ \land\ \rho(i,j) \in (N/10, 3N/10]\\
    1 & \text{otherwise}\\
\end{cases}\]
In other words, the reward structure incentivizes $i,j$ positions to be farther apart than $3N/10$.

Player $2$ can play one of four policies of the form  $\mathsf{target}_t$ for $t = 1, \lceil N/4 \rceil, \lceil 2N/4 \rceil, \lceil 3N/4\rceil$, where the policy $\mathsf{target}(j)$ is defined as 

\[ \mathsf{target}_t(i,j) = \begin{cases}
    \{ L: 0.8, R: 0.2 \} &  j > t \ \land\ (t - j+ N \geq j -t )  \\ 
     \{ L: 0.8, R: 0.2 \} &  j < t \ \land\ (j -t +N \leq t -j )  \\ 
    \{ L: 0.2, R: 0.8 \} &  j > t \ \land\ (t-j +N \leq t -j ) \\
    \{ L: 0.2, R: 0.8 \} & j < t \ \land\ (j -t +N \geq t -j )  \\
    \{ L:0.5, R: 0.5 \} & \text{otherwise}\\
\end{cases}\]

\section{Appendix: Ikea Furniture Assembly and Cataract Surgery Graphs}\label{app:ikea-cataract-benchmark-details}

The ikea furniture assembly dataset was taken from the previous work of Ben-Shabat et al~\cite{Ben-Shabat+Others/2021/WACV}. It involves sequences of tasks for four different furniture types with roughly $90$ sequences for each furniture type. We employed a  $75\%$-$25\%$ training/testing data split. The tool flexfringe was used to learn an automaton model using sequences in the training data. 

\begin{table}
\begin{tabular}{|cc|cc|cc|}
\hline 
0 & flip table top &1 & pick up leg &2 & align leg screw with table thread \\
3 & spin leg &4 & other &5 & tighten leg \\
6 & rotate table &7 & flip table &8 & pick up shelf \\
9 & attach shelf to table &10 & pick up table top &11 & lay down table top \\
12 & push table &13 & flip shelf &14 & lay down leg \\
15 & lay down shelf &16 & push table top &17 & pick up side panel \\
18 & align side panel holes with front panel dowels &19 & attach drawer side panel &20 & pick up bottom panel \\
21 & slide bottom of drawer &22 & pick up back panel &23 & attach drawer back panel \\
24 & pick up pin &25 & insert drawer pin &26 & position the drawer right side up \\
27 & pick up front panel &28 & lay down bottom panel &29 & lay down front panel \\
30 & lay down back panel &31 & lay down side panel & & \\
\hline 
\end{tabular}
\caption{Action IDs and their description for the IKEA furniture assembly benchmark.}
\end{table}

\begin{table}
\begin{tabular}{|cc|cc|cc|}
\hline 
0 & +Bonn forceps  & 1 & +secondary incision knife  & 2 & -Bonn forceps \\
3 & -secondary incision knife  & 4 & +primary incision knife  & 5 & -primary incision knife \\
6 & +viscoelastic cannula  & 7 & -viscoelastic cannula  & 8 & +capsulorhexis cystotome \\
9 & -capsulorhexis cystotome  & 10 & +capsulorhexis forceps  & 11 & -capsulorhexis forceps \\
12 & +hydrodissection canula  & 13 & -hydrodissection canula  & 14 & +phacoemulsifier handpiece \\
15 & +micromanipulator  & 16 & -phacoemulsifier handpiece  & 17 & -micromanipulator \\
18 & +irrigation/aspiration handpiece  & 19 & -irrigation/aspiration handpiece  & 20 & +implant injector \\
21 & -implant injector  & 22 & +Rycroft canula  & 23 & -Rycroft canula \\
24 & +Troutman forceps  & 25 & -Troutman forceps  & 26 & +cotton \\
27 & -cotton  & 28 & +Charleux canula  & 29 & -Charleux canula \\
30 & +suture needle  & 31 & -suture needle  & 32 & +Vannas scissors \\
33 & -Vannas scissors  & 34 & +needle holder  & 35 & -needle holder \\
36 & +vitrectomy handpiece  & 37 & -vitrectomy handpiece  & 38 & +biomarker \\
39 & -biomarker  & 40 & +Mendez ring  & 41 & -Mendez ring \\
\hline 
\end{tabular}
\caption{Action IDs and their description for the cataract surgery benchmark. A ``+'' sign before a tool indicates its introduction during a particular step, whereas a ``-'' sign indicates its removal.}
\end{table}

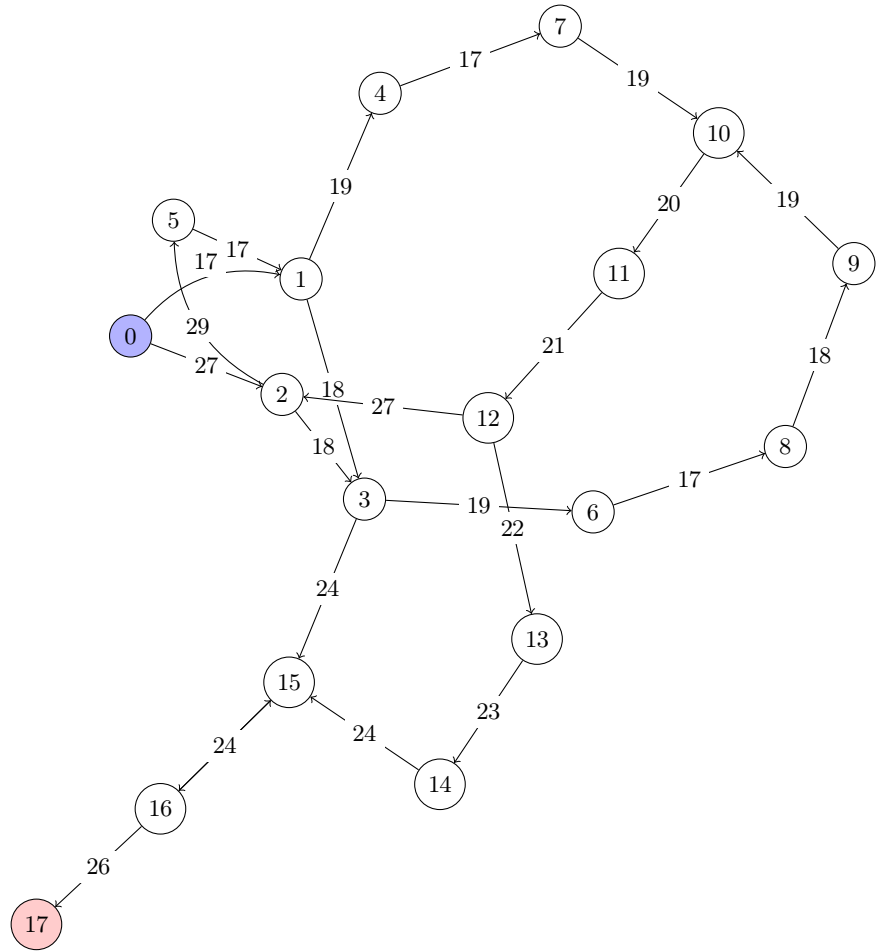
\begin{figure}[t]
\begin{tikzpicture}[x=0.3mm, y=0.3mm, every node/.style={fill=white}, my_node/.style={circle, draw=black}]
      \draw
        (68.733, 278.95) node[my_node, circle, fill=blue!30] (0){0}
        (144.28, 304.24) node[my_node] (1){1}
        (135.86, 253.03) node[my_node] (2){2}
        (172.39, 206.6) node[my_node] (3){3}
        (179.31, 386.57) node[my_node] (4){4}
        (87.729, 330.29) node[my_node] (5){5}
        (273.69, 200.93) node[my_node] (6){6}
        (138.97, 125.35) node[my_node] (15){15}
        (259.14, 416.35) node[my_node] (7){7}
        (358.91, 229.94) node[my_node] (8){8}
        (329.37, 368.98) node[my_node] (10){10}
        (389.25, 311.05) node[my_node] (9){9}
        (285.21, 306.65) node[my_node] (11){11}
        (227.22, 242.63) node[my_node] (12){12}
        (248.85, 144.54) node[my_node] (13){13}
        (205.82, 80.128) node[my_node] (14){14}
        (81.957, 69.261) node[my_node] (16){16}
        (27.0, 18.0) node[my_node, fill=red!20] (17){17};
      \begin{scope}[->]
        \draw[bend left] (0) to node[above] {17} (1);
        \draw (0) to node[] {27} (2);
        \draw (1) to node[] {18} (3);
        \draw (1) to node[] {19} (4);
        \draw (2) to node[] {18} (3);
        \draw[bend left] (2) to node[] {29} (5);
        \draw (3) to node[] {19} (6);
        \draw (3) to node[] {24} (15);
        \draw (4) to node[] {17} (7);
        \draw (5) to node[] {17} (1);
        \draw (6) to node[] {17} (8);
        \draw (15) to node[] {25} (16);
        \draw (7) to node[] {19} (10);
        \draw (8) to node[] {18} (9);
        \draw (10) to node[] {20} (11);
        \draw (9) to node[] {19} (10);
        \draw (11) to node[] {21} (12);
        \draw (12) to node[] {22} (13);
        \draw (12) to node[] {27} (2);
        \draw (13) to node[] {23} (14);
        \draw (14) to node[] {24} (15);
        \draw (16) to node[] {24} (15);
        \draw (16) to node[] {26} (17);
      \end{scope}
    \end{tikzpicture}
\caption{IKEA Shelf Drawer Assembly Task Machine. }\label{fig:ikea-shelf-drawer-assembly}
\end{figure}

\begin{figure}[t]
\begin{tikzpicture}[x=0.3mm, y=0.3mm, every node/.style={fill=white}, my_node/.style={circle, draw=black}]
      \draw
        (70.464, 19.701) node[my_node, fill=blue!30] (0){0}
        (27.0, 60.115) node[my_node] (1){1}
        (100.01, 94.904) node[my_node] (2){2}
        (48.349, 154.16) node[my_node] (3){3}
        (179.55, 92.364) node[my_node] (11){11}
        (114.1, 200.71) node[my_node] (4){4}
        (203.52, 234.1) node[my_node] (5){5}
        (153.15, 155.28) node[my_node] (6){6}
        (278.44, 187.69) node[my_node] (7){7}
        (223.6, 292.08) node[my_node] (14){14}
        (143.86, 256.44) node[my_node] (15){15}
        (235.32, 119.56) node[my_node] (8){8}
        (284.03, 46.318) node[my_node] (9){9}
        (217.51, 18.0) node[my_node] (10){10}
        (270.02, 145.85) node[my_node] (12){12}
        (294.02, 243.52) node[my_node] (13){13}
        (95.724, 324.76) node[my_node] (16){16}
        (60.302, 392.2) node[my_node, fill=red!20] (17){17};
      \begin{scope}[->]
        \draw (0) to node[] {0} (1);
        \draw (0) to node[] {1} (2);
        \draw (1) to node[] {1} (2);
        \draw (2) to node[] {2} (3);
        \draw (2) to node[] {3} (11);
        \draw (3) to node[] {3} (4);
        \draw[bend left] (11) to node[] {1} (12);
        \draw (11) to node[] {5} (6);
        \draw (4) to node[] {1} (5);
        \draw (4) to node[] {5} (6);
        \draw (5) to node[] {2} (7);
        \draw (5) to node[] {3} (14);
        \draw (6) to node[] {1} (2);
        \draw (6) to node[] {7} (15);
        \draw (7) to node[] {3} (8);
        \draw (14) to node[] {7} (15);
        \draw (15) to node[] {8} (16);
        \draw (8) to node[] {1} (9);
        \draw (8) to node[] {5} (6);
        \draw (9) to node[] {2} (10);
        \draw (10) to node[] {3} (11);
        \draw[bend right] (12) to node[] {2} (13);
        \draw (13) to node[] {3} (14);
        \draw (16) to node[] {9} (17);
      \end{scope}
    \end{tikzpicture}
\caption{IKEA TV Bench Assembly Task Machine}\label{fig:ikea-tv-bench-assembly-task}
\end{figure}
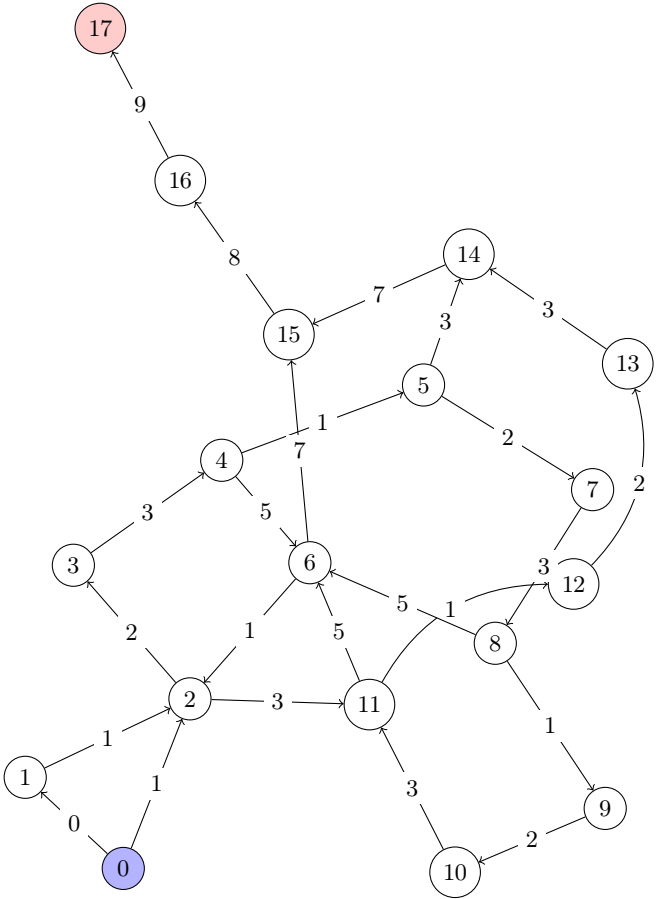

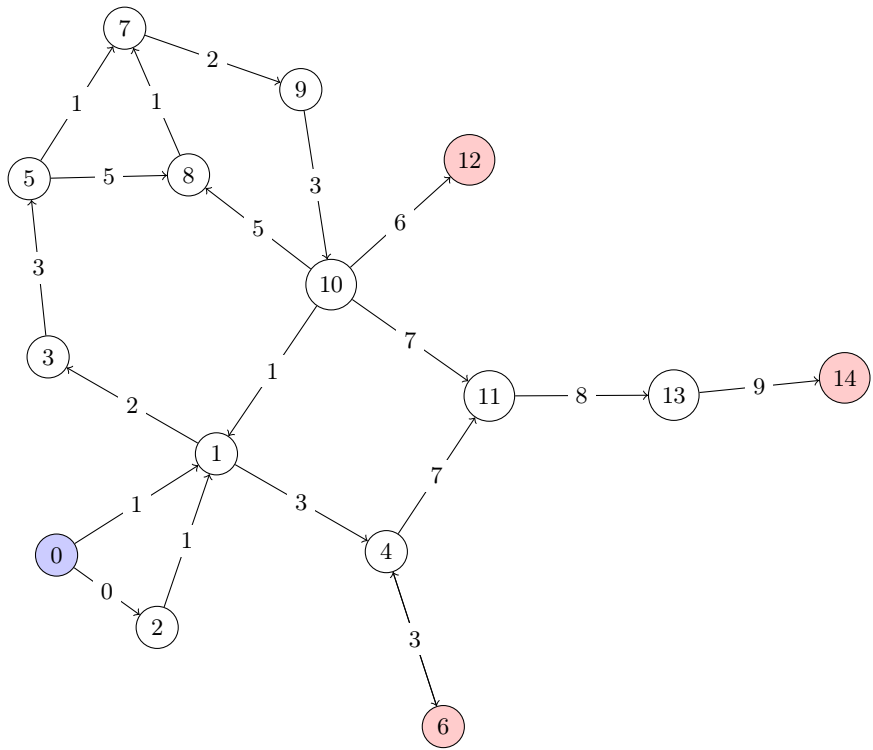
\begin{figure}
\begin{tikzpicture}[x=0.3mm, y=0.3mm, every node/.style={fill=white}, my_node/.style={circle, draw=black}]
      \draw
        (39.124, 94.062) node[my_node, fill=blue!20] (0){0}
        (109.98, 138.96) node[my_node] (1){1}
        (83.714, 61.992) node[my_node] (2){2}
        (35.383, 181.95) node[my_node] (3){3}
        (185.26, 95.587) node[my_node] (4){4}
        (27.0, 260.98) node[my_node] (5){5}
        (210.51, 18.0) node[my_node, fill=red!20] (6){6}
        (230.91, 164.62) node[my_node] (11){11}
        (69.336, 327.73) node[my_node] (7){7}
        (97.597, 262.67) node[my_node] (8){8}
        (147.36, 300.49) node[my_node] (9){9}
        (160.82, 214.0) node[my_node] (10){10}
        (222.12, 269.34) node[my_node, fill=red!20] (12){12}
        (312.64, 165.0) node[my_node] (13){13}
        (388.39, 172.65) node[my_node, fill=red!20] (14){14};
      \begin{scope}[->]
        \draw (0) to node[] {1} (1);
        \draw (0) to node[] {0} (2);
        \draw (1) to node[] {2} (3);
        \draw (1) to node[] {3} (4);
        \draw (2) to node[] {1} (1);
        \draw (3) to node[] {3} (5);
        \draw (4) to node[] {1} (6);
        \draw (4) to node[] {7} (11);
        \draw (5) to node[] {1} (7);
        \draw (5) to node[] {5} (8);
        \draw (6) to node[] {3} (4);
        \draw (11) to node[] {8} (13);
        \draw (7) to node[] {2} (9);
        \draw (8) to node[] {1} (7);
        \draw (9) to node[] {3} (10);
        \draw (10) to node[] {1} (1);
        \draw (10) to node[] {7} (11);
        \draw (10) to node[] {5} (8);
        \draw (10) to node[] {6} (12);
        \draw (13) to node[] {9} (14);
      \end{scope}
    \end{tikzpicture}
\caption{IKEA Coffee Table Assembly Task Machine}\label{fig:ikea-coffee-table-assembly-task-machine}
\end{figure}

\begin{figure}
\begin{tikzpicture}[x=0.2mm, y=0.3mm, every node/.style={fill=white}, my_node/.style={circle, draw=black}]
      \draw
        (205.1, 49.248) node[my_node, fill=blue!20] (0){0}
        (128.87, 18.0) node[my_node] (1){1}
        (228.97, 126.86) node[my_node] (2){2}
        (86.523, 88.448) node[my_node] (3){3}
        (189.61, 201.52) node[my_node] (5){5}
        (114.34, 164.17) node[my_node] (4){4}
        (206.22, 281.9) node[my_node] (6){6}
        (216.99, 373.9) node[my_node] (7){7}
        (273.63, 458.77) node[my_node] (8){8}
        (170, 320.78) node[my_node] (9){9}
        (322.19, 440.67) node[my_node] (11){11}
        (327.76, 500.99) node[my_node] (10){10}
        (218.22, 434.23) node[my_node] (12){12}
        (413.07, 447.28) node[my_node] (27){27}
        (128.72, 462.15) node[my_node] (13){13}
        (54.829, 507.07) node[my_node] (14){14}
        (27.0, 584.82) node[my_node] (15){15}
        (71.003, 652.9) node[my_node] (16){16}
        (42.3, 728.81) node[my_node] (17){17}
        (154.95, 661.35) node[my_node] (18){18}
        (122.71, 738.04) node[my_node] (19){19}
        (222.97, 610.05) node[my_node] (20){20}
        (291.43, 553.71) node[my_node] (21){21}
        (344.66, 622.87) node[my_node] (22){22}
        (370.87, 544.71) node[my_node] (25){25}
        (420.44, 655.35) node[my_node] (23){23}
        (416.82, 593.13) node[my_node] (24){24}
        (444.95, 525.64) node[my_node] (26){26}
        (530.45, 527.64) node[my_node] (28){28}
        (497.77, 445.75) node[my_node] (29){29}
        (571.5, 606.82) node[my_node] (30){30}
        (607.65, 500.28) node[my_node] (31){31}
        (559.79, 689.37) node[my_node] (32){32}
        (683.6, 487.33) node[my_node, fill=red!20] (33){33}
        (483.01, 727.13) node[my_node] (34){34}
        (396.01, 706.48) node[my_node] (35){35};
      \begin{scope}[->]
        \draw (0) to node[] {0} (1);
        \draw (0) to node[] {1} (2);
        \draw (1) to node[] {1} (3);
        \draw (2) to node[] {3} (5);
        \draw (3) to node[] {2} (4);
        \draw (5) to node[] {1} (6);
        \draw (4) to node[] {3} (5);
        \draw (6) to node[] {3} (7);
        \draw (7) to node[] {0} (8);
        \draw (7) to node[] {4} (9);
        \draw (8) to node[] {2} (11);
        \draw (8) to node[] {4} (10);
        \draw (9) to node[] {5} (12);
        \draw (11) to node[] {5} (12);
        \draw (11) to node[] {15} (27);
        \draw (10) to node[] {2} (11);
        \draw (12) to node[] {6} (13);
        \draw (27) to node[] {17} (29);
        \draw (13) to node[] {7} (14);
        \draw (14) to node[] {8} (15);
        \draw (15) to node[] {9} (16);
        \draw (16) to node[] {10} (17);
        \draw (16) to node[] {12} (18);
        \draw (17) to node[] {11} (19);
        \draw (18) to node[] {13} (20);
        \draw (19) to node[] {12} (18);
        \draw (20) to node[] {14} (21);
        \draw (21) to node[] {0} (8);
        \draw (21) to node[] {15} (22);
        \draw (21) to node[] {16} (25);
        \draw (22) to node[] {16} (23);
        \draw (22) to node[] {17} (24);
        \draw (25) to node[] {14} (21);
        \draw (25) to node[] {18} (26);
        \draw (23) to node[] {17} (24);
        \draw (24) to node[] {16} (25);
        \draw (24) to node[] {18} (26);
        \draw (26) to node[] {15} (27);
        \draw[bend left] (26) to node[] {19} (28);
        \draw (28) to node[] {6} (30);
        \draw[bend left] (28) to node[] {18} (26);
        \draw (28) to node[] {22} (31);
        \draw (29) to node[] {19} (28);
        \draw (30) to node[] {7} (32);
        \draw (31) to node[] {23} (33);
        \draw (32) to node[] {20} (34);
        \draw (33) to node[] {22} (31);
        \draw (34) to node[] {21} (35);
        \draw (35) to node[] {15} (22);
      \end{scope}
    \end{tikzpicture}
\caption{Cataract Surgery Task Machine}\label{fig:cataract-surgery-task-machine}
\end{figure}
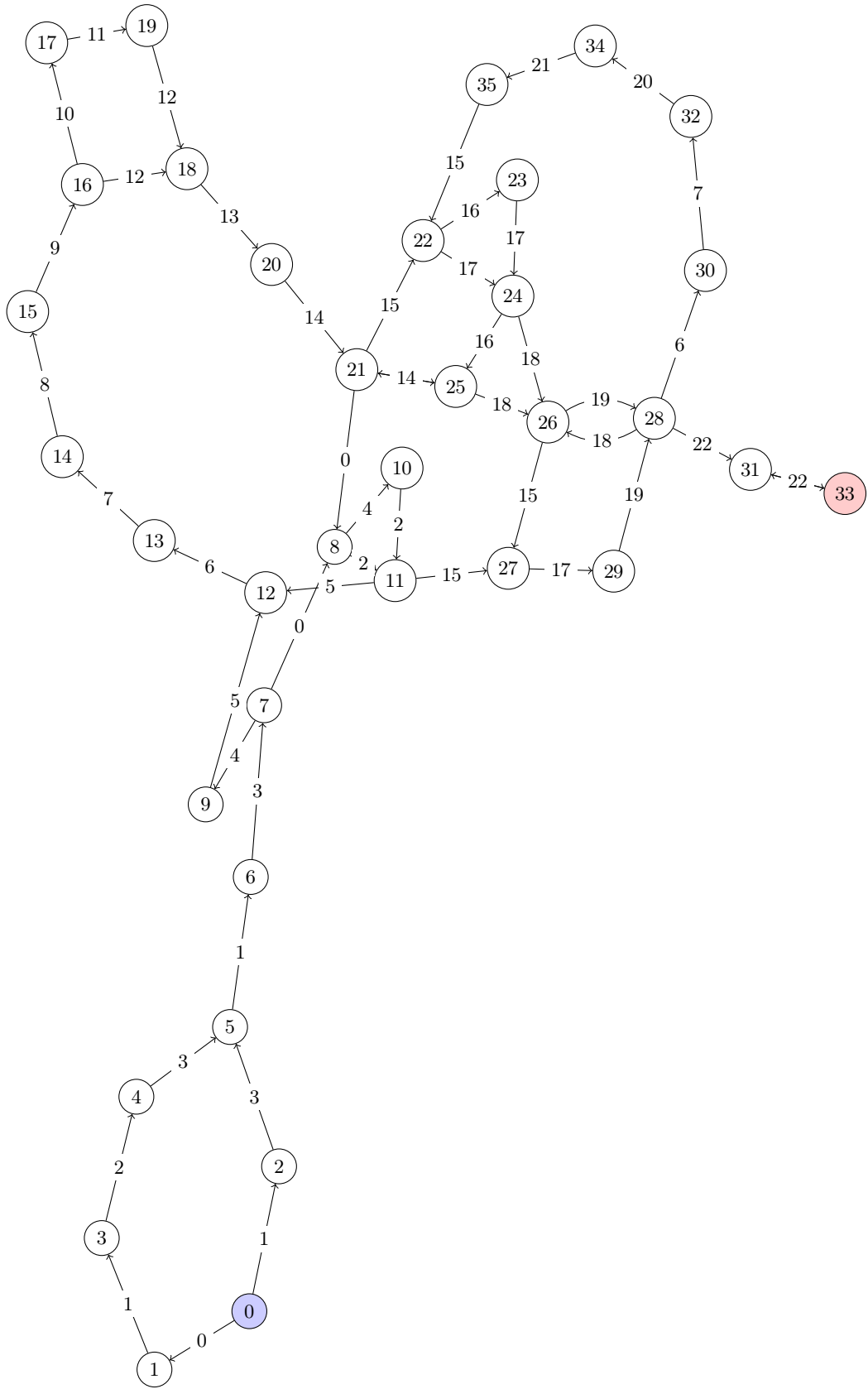
\end{document}